\definecolor{DSgray}{cmyk}{0,1,0,0}
\newtheorem{theorem}{Theorem}[section]
\newtheorem{corollary}[theorem]{Corollary}
\newtheorem{proposition}[theorem]{Proposition}
\newtheorem{lemma}[theorem]{Lemma}
\newtheorem{example}[theorem]{Example}
\newtheorem{remark}[theorem]{Remark}
\newtheorem{definition}[theorem]{Definition}
\newtheorem{assumption}{Assumption}
\newtheorem*{lemmax}{Theorem~\ref{thm:ls-fs}}
\newcommand{\argmax}{\mathop{\rm arg\max}}
\renewcommand{\d}{{\,\mathrm{d}}}
\newcommand{\sgd}{{\tt{(SGD)}}}
\def\0{\boldsymbol{0}}
\def\A{\mathcal{A}}
\def\bp{\mathbf{P}}
\def\e{\boldsymbol{e}}
\def\E{\mathbb{E}}
\def\cE{\mathcal{E}}
\def\L{\mathcal{L}}
\def\R{\mathbb{R}}
\def\H{\mathcal{H}}
\def\ID{\mathbbm{1}}
\def\N{\mathcal{N}}
\def\P{\mathcal{P}}
\def\Pr{\mathbb{P}}
\def\ipw{{\tt IPW}}
\def\sipw{{\tt sqrt-IPW}}
\def\vanilla{{\tt vanilla}}
\def\limsup{\mathop{\overline{\rm lim}}}
\def\liminf{\mathop{\underline{\rm lim}}}
\let\hat\widehat
\definecolor{DSgray}{cmyk}{0,1,0,0}
\def\spacingset#1{\renewcommand{\baselinestretch}
{#1}\small\normalsize} \spacingset{1}
\begin{document}

{
\title{Online Statistical Inference for Contextual Bandits via Stochastic Gradient Descent}
\author{Xiangyu Chang$^1$ ~~~Xi Chen$^2$ ~~~Zehua Lai$^3$ ~~~He Li$^2$ ~~~Zhihong Liu$^4$ ~~~Yichen Zhang$^4$
\\  {\normalsize
  $^1$ Xi'an Jiaotong University \quad $^2$ New York University}\\
  {\normalsize$^3$ University of Texas at Austin\quad $^4$ Purdue University}
  }
\date{}
  \maketitle

\vspace{-3em}

\spacingset{1.75} 
\begin{abstract} 
With the fast development of big data, learning the optimal decision rule by recursively updating it and making online decisions has been easier than before. 
We study the online statistical inference of model parameters in a contextual bandit framework of sequential decision-making. We propose a general framework for an online and adaptive data collection environment that can update decision rules via weighted stochastic gradient descent. We allow different weighting schemes of the stochastic gradient and establish the asymptotic normality of the parameter estimator. Our proposed estimator significantly improves the asymptotic efficiency over the previous averaged SGD approach via inverse probability weights. We also conduct an optimality analysis on the weights in a linear regression setting. We provide a Bahadur representation of the proposed estimator and show that the remainder term in the Bahadur representation entails a slower convergence rate compared to classical SGD due to the adaptive data collection.
\end{abstract}

\noindent
{\it Keywords:}  online inference, stochastic gradient descent, contextual bandit, Bahadur representation, quantile regression

\section{Introduction} \label{sec:intro}
Following the seminal work of \cite{robbins1952some}, the stochastic multi-armed bandit problem has been studied extensively in the literature, where an agent aims to make optimal decisions sequentially among multiple arms, and only the selected arm reveals rewards consequently. Contextual bandit problems, where an agent's choices are influenced by covariates, have regained attention. With modern internet and data technology, they are pivotal in sequential decision-making across applications like online advertisement, precision medicine, e-commerce, and public policy.
Bandit algorithms are often formulated as minimizing the expected cumulative regret that the practitioner would have received if she knew the optimal action. While the importance of this regret minimization is undisputed, \emph{reliable uncertainty quantification} of the learned decision rule is evidently important in many featured applications. 
{ For instance, in personalized medicine with real-time treatment adaptation, making prompt, statistically reliable decisions on treatment efficacy can be critical, highlighting the need for inference to accompany dynamic decision-making. Further, an online shopping platform relying on batch inference for user preferences would risk missing timely engagement; online inference instead allows continuous, confident adaptation, enabling robust real-time personalization. Such examples underscore the crucial need for valid and reliable online inference to better guide sound policy interventions, assess risks (e.g., prompting alerts), and offer scientific insights like medication effectiveness.
} 

Consider a linear contextual bandit environment where the observed data is a triplet $\zeta_t = (X_t, A_t, Y_t)$ at each decision point $t \geq 1$, consisting of covariate $X_t$, action $A_t$, and reward $Y_t=X_t^\top\theta_{A_t}^*+\epsilon_t$ where $\theta_{A_t}^*\in\R^p$ is unknown parameters of interest governed by a set $\A$ of finite actions, and $\epsilon_t\in\R$ is the noise under certain modeling assumptions. For illustrative simplicity, we consider a binary action space $\A=\{0,1\}$ corresponding to a duplet of underlying parameters $(\theta_0^*,\theta_1^*)\in \R^{p}\times \R^p$.
Consider a decision rule $\pi: \R^p\rightarrow \R^{|\A|}$ which returns a distribution of actions $A\in \A$ given an observed covariate $X$. It is natural to believe that the optimal decision rule under certain covariate $X$ is corresponding to the arm with the largest expected reward, that is, $\pi^{opt}(X)=\argmax_{A\in\A}\E(Y\mid X,A)$.  Especially, in the linear contextual bandit, the optimal decision rule becomes
\begin{equation}\label{eq:opt decision}
    \pi^{opt}(X)=\argmax_{A\in\A}X^\top\theta_A^*=\ID\{X^\top\theta_1^*>X^\top\theta_0^*\}.
\end{equation}
In fact, \eqref{eq:opt decision} can be applied in extensive scenarios where the expected reward of each action is a monotonic function of $X^\top\theta^*_a$ (see Examples~\ref{eg:quant-reg} and \ref{eg:log-reg}). Since $\theta^*$ is unknown, we need to modify the optimal decision rule as, for example,
$\hat{\pi}^{opt}(X_t)=\ID\{X_t^\top\theta_{1,t-1}>X_t^\top\theta_{0,t-1}\}$,
where $(\theta_{0,t-1},\theta_{1,t-1})\in\R^{2p}$ is recursively updated according to some algorithm designed to approach $\theta^*$. Note that $\hat{\pi}^{opt}$ depends on $\H_{t-1}$ that denotes the trajectory of observations until $t-1$. A typical policy $\pi$ prefers the action with a higher expected reward practically realized through $\hat{\pi}^{opt}$, while reserving a small probability to explore random actions to avoid potential myopic short-sighted exploitation. In an example of $\varepsilon$-greedy policy,
\begin{align} \label{eq:eps-greedy-intro}
\Pr\big(A_t = a \mid X_t, \theta_{0,t-1},\theta_{1,t-1}\big) = (1-\varepsilon)\ID \big\{a=\argmax_{a\in\A}X_t^\top \theta_{a,t-1}\big\} + \frac{\varepsilon}{2},
\end{align}
where the action is selected according to the policy $A_t\sim\pi(X_t,\H_{t-1})$.
This procedure heavily relies on a series of estimators $\big(\theta_{0,t-1},\theta_{1,t-1}\big)\in\R^{2p}$ on-the-fly, of the underlying model parameters. Despite that a return-oriented policy would undoubtedly favor the action with a higher reward, it is often as crucial to obtain the confidence of decisions, i.e., conducting statistical inference for $(\theta_{0}^*,\theta_1^*)$ in the prescribed applications. This model of statistical inference of model parameters in decision-making problems appears recently in literature (see, e.g., \citealp{chen2021a,zhang2021statistical},
and a brief survey in Section \ref{subsec:related} below). A typical inferential task provides a confidence interval of the underlying parameters $(\theta_{0}^*,\theta_1^*)$ or significance levels when testing hypotheses of parameters. 

{Since the sequential decision-making relies on updating $\big(\theta_{0,t-1},\theta_{1,t-1}\big)$ for every $t$ throughout the horizon, it is crucial to provide a \emph{computationally efficient} algorithm for \emph{fully online} estimation and inferences. 
The existing literature on sequential decision-making mostly focuses on the convergence rate, while computation and storage efficiency of the algorithm are often optimistically neglected. Particularly, they often provide online decision-making procedures governed by an offline algorithm of parameter estimation. 
For example, in the linear regression settings, at each iteration $t$, an offline M-estimator $(\theta_{0,t},\theta_{1,t})$ is often obtained using the entire sample path $\big\{(X_s,Y_s)\big\}_{s\leq t}$ up to time $t$, which typically requires a $\mathcal{O}(t)$ per-iteration computation cost. As such, the total computation accumulates in a non-scalable manner to at least $\mathcal{O}(T^2)$ over the horizon $T$ (see Figure~B.10
 of the supplement).

To facilitate computationally efficient inference in fully online decision-making, we adopt the stochastic gradient descent (SGD) algorithms \citep{robbins1951stochastic}. Thanks to its computational and storage efficiency, SGD has been widely used in large-scale stochastic optimization. }Let $\theta_0$ denote an initial estimation. The SGD iteratively updates as follows,
\begin{align} \label{eq:vanilla-sgd}
\theta_t = \theta_{t-1} - \eta_t \nabla \ell(\theta_{t-1}; (X_t,Y_t)),
\end{align}
where $\eta_t$ is a positive non-increasing sequence referred to as the step-size sequence and $\nabla \ell$ is the gradient for smooth individual loss function $\ell$. For the SGD update above, under the \textit{i.i.d.} setting, the classical result by \cite{polyak1992acceleration} uses the average $\bar{\theta}_t^{\sgd} = t^{-1} \sum_{s=0}^{t-1} \theta_s$ as the final estimator to accelerate the estimation. They characterize the limiting distribution and statistical efficiency of the averaged SGD (ASGD), i.e., 
\begin{align*}
\sqrt{t} \big(\bar{\theta}^{\sgd}_t - \theta^*\big) \overset{d}{\rightarrow} \N\big(0, (H^{\sgd})^{-1} S^{\sgd} (H^{\sgd})^{-1} \big),
\end{align*}
given predetermined step sizes $\eta_t = \eta_0 t^{-\alpha}$ for $\eta_0>0$, $0.5 < \alpha < 1$. Here $(H^{\sgd},S^{\sgd})$ is the Hessian and Gram matrix of the gradient of loss at $\theta = \theta^*$. For well-specified models under \textit{i.i.d.} noises, this asymptotic covariance matrix matches the inverse Fisher information matrix, and thus the resulting averaged estimator $\bar{\theta}_t^{\sgd}$ is asymptotically efficient.

Even though the literature of SGD inference mainly focuses on \emph{i.i.d.} samples, the SGD algorithm indeed fits well into the online decision-making scheme, 
as the underlying parameter $(\theta_0^*,\theta_1^*)$ is the solution to the following stochastic optimization, 
\begin{align} \label{eq:oracle-intro}
\theta_a^* \in \underset{\theta \in \R^p}{\operatorname{argmin}} \, \mathcal{L}_a(\theta):=\E\left[\ell\big (\theta;(X_t,A_t,Y_t)\big) \mid X_t, A_t=a\right],\quad a\in\A,
\end{align}
where $\ell$ denotes the loss function designed according to the modeling assumptions of $\{\epsilon_t\}$. For example, in a linear model $Y_t=X_t^\top \theta_{A_t}^*+\epsilon_t$ with zero-mean noise $\{\epsilon_t\}$,  a natural choice of $\ell\big(\theta;(X_t,A_t,Y_t)\big)=\big(Y_t-X_t^\top\theta_{A_t}\big)^2$ is the squared loss. If $\epsilon_t$ is modeled with a zero median, a natural choice of $\ell\big(\theta;(X_t,A_t,Y_t)\big)$ is the least absolute deviation (LAD) loss, $ \big|Y_t-X_t^\top\theta_{A_t}\big|$, a special case of quantile loss. In both scenarios above, the minimizer of the population loss $\mathcal{L}_a(\theta)$ depends on the action $A_t=a$ but not on the distribution of $X_t$.

{ The application of SGD in contextual bandits has been explored in literature (e.g., \citealp{chen2021b}) for an $\varepsilon$-greedy policy. Particularly, a weighted SGD procedure updates
\begin{align} \label{eq:weighted-sgd-intro}
\theta_{a,t} = \theta_{a,t-1} - \eta_t w_t \nabla \ell\big(\theta_{t-1}; (X_t,A_t=a,Y_t)\big),\quad w_t=\frac{\ID_{\{A_t=a\}}}{2\Pr(A_t\mid X_t,\theta_{t-1})},
\end{align}
under a specific weighting scheme, inverse probability weighting. Notably, at every time $t$, the outcome $Y_t$ in each observation $(X_t,A_t,Y_t)$ is adaptively collected upon the decision of action $A_t$. The weight $w_t$ in (\ref{eq:weighted-sgd-intro}) indicates that, at each time $t$, only one $\theta_{a,t}$ between the duplet $\big(\theta_{0,t},\theta_{1,t}\big)$, is updated by SGD. Inverse probability weighting (IPW) is utilized to demonstrate that the weighted stochastic gradient $w_t \nabla \ell\big(\theta_{t-1}; (X_t,A_t,Y_t)\big)$ in (\ref{eq:weighted-sgd-intro}) is an unbiased estimator of the gradient of a deterministic population loss that is independent to the entire the historical information. Precisely in this setting of \eqref{eq:weighted-sgd-intro}, that population loss is indeed the \emph{equal-weighted} combination of the population losses $\frac{1}{|\mathcal{A}|}\sum_{a\in\mathcal A}\mathcal{L}_a$, independent to the historical information. While the unbiasedness and independence properties clear the technical difficulty of theoretical analysis of the asymptotic normality of the IPW-SGD estimator, IPW inflates its asymptotic variance by a factor of order $1/\varepsilon$. This results in highly volatile estimators and excessively wide confidence intervals, compromising the reliability of statistical inference.}

Designing algorithms to ameliorate decision-making and enhance the asymptotic efficiency of estimators remains both challenging and important. In this paper, we allow a general choice of the weighting parameter $w_t$ in \eqref{eq:weighted-sgd-intro}, which admits the IPW weights as a special case and derives the explicit formula for the asymptotic distribution of the generalized-weighting ASGD algorithm, thus provides us a way to compare different choices of $w_t$ and even optimize over $w_t$ for some simple models.  
Our proposed estimator significantly improves the asymptotic efficiency over IPW-ASGD and achieves comparable efficiency if the practitioner picks one arm steadily. This estimator helps construct narrow yet reliable confidence intervals for the underlying parameter of interest. The analysis also reveals a recommendation of optimal choices of weights $w_t$ in certain policies. To overcome the technical challenge raised in history-dependent weighting parameters, we propose a new definition of the loss function, which is different from the loss function used in classical SGD literature (e.g., \citealp{chen2016statistical}) and adaptive SGD literature \citep{chen2021b}. We use two parameters, $\theta$ and $\theta^\prime$, to separate the effect of weighting parameters in SGD and that of decision-making procedures in the local geometric landscape of the loss function.

As a separate interest, this paper establishes a general framework that allows non-smooth loss functions such as quantile loss to estimate conditional quantiles of the reward $Y_t$, 
which finds ubiquitous applications such as operations management of business inventory and risk management of financial assets. Therefore, it is worth exploring the use of quantile-based objective functions in sequential decision-making.

{Additionally, our analysis facilitates both degenerate and non-degenerate models, where the former refers to the same underlying parameter under different actions, i.e., $\theta_{0}^*=\theta_{1}^*$. An important example is a variant of Thompson Sampling in the degenerate model based on the Hodges estimator, as studied for offline M-estimators in \cite{zhang2021statistical}. }

As a summary, we study a general framework of online statistical inference for contextual bandit. {
This paper is considered as an extensive generalization over \cite{chen2021b} from three aspects: weighting schemes; handling non-smooth loss functions via stochastic subgradient; and applicability to wider range of arm selection policies. }We summarize the contribution and emphasize the technical challenges in the following facets.
\begin{itemize}
\item SGD 
with inverse probability weighting (IPW) suffers from an unbounded asymptotic variance when the exploration rate tends to $0$, i.e., the relative efficiency of adaptive models versus non-adaptive models diverges to infinity. Our proposed algorithm features a general policy with a flexible specification of the weights to avoid such deficiency and obtain a bounded relative efficiency. We further provide some practical insights into the optimal weight specification in linear regression that attains the lowest asymptotic covariance matrix among a class of weight specifications.
\item We analyze SGD that features stochastic subgradients under nonsmooth losses. An important example is the quantile regression which can be used for risk-averse or risk-aware decision-making.
Moreover, this example provides robustness to the outliers of the reward due to the fact that the objective function is globally Lipschitz. 

\item Beyond the asymptotic normality of the proposed estimator, we further establish an analysis of the higher-order remainder term in its Bahadur representation. {In classical \emph{i.i.d.} SGD settings, the remainder term has the rate of $\mathcal{O}_p\big(t^{-\alpha+\frac12}+t^{-\frac{\alpha}{2}}+t^{\alpha-1}\big)$.} {On the contrary, under the non-degenerate adaptive setting with two example policies, the reminder term entails a slower rate of $\mathcal{O}_p\big(t^{-\alpha + \frac{1}{2}} + t^{-\frac{\alpha}{4}}+t^{\alpha-1}\big)$.}
{We attribute the  slower rate to the nature of adaptive data collection, which introduces the temporal difference of the gradient noise, unlike the independent structure in the classical SGD}.
\end{itemize}

The remaining of the paper is organized as follows. Section~\ref{sec:setup} introduces the general weighted SGD for contextual bandit and present illustrative examples of the classical regression problems. 
{In Section~\ref{sec:theory}, we formulate the problem under general weighting schemes and policies into stochastic optimization and study the asymptotic distribution of the SGD estimator. }Section~\ref{sec:finite-sample} establishes its Bahadur representation and discuss the optimal choices of the step sizes. Section \ref{sec:online-inf} presents an online inference procedure to construct the confidence intervals. {
In Section~\ref{sec:var-discuss}, we justify our framework for two illustrative regression examples under two specified arm selection policies, and specifically demonstrate the applicability to the degenerate model for a modified $\varepsilon$-greedy. We further present a comparison of the statistical efficiency under different weighting schemes, 
and provide practical implications on weight choices based on an explicit analytic form of the asymptotic covariance matrices. }In Section~\ref{sec:num}, we conduct
simulation studies and real data analyses which lend numerical support to our theoretical claims. A summary of notations throughout the paper is relegated to Section A of the supplementary material.

\subsection{Related works}\label{subsec:related}

\paragraph{Online statistical inference for model parameters in SGD}
The asymptotic distribution of ASGD is first given in \cite{ruppert1988efficient} and \cite{polyak1992acceleration}. Since then, there has been a rapid growth of interest recently in conducting statistical inference for model parameters in stochastic gradient algorithms. \cite{chen2016statistical,chen2024online} proposed two online estimators (plug-in and batch-means) in constructing estimators of limiting covariance matrix of ASGD, of which \cite{zhu2021online} extended the batch-means to overlapped batches. \cite{fang2018online} proposed a perturbation-based resampling procedure to conduct inference for ASGD. \cite{tang2023acceleration} studies a momentum-based variant of SGD. \cite{su2018uncertainty} proposed a tree-structured inference scheme to construct confidence intervals. \cite{wen2023online} studies online inference for tensors. 
\cite{lee2021fast,lee2022fast} generalized to a functional central limit theorem and proposed an online inference procedure called random-scaling for smooth objectives and quantile regression, respectively. 

\paragraph{Statistical inference in online decision-making}
\cite{chen2021a} studied statistical inference under a linear contextual bandit framework. 
\cite{zhang2021statistical,zhang2022statistical} conducted inference for $M$-estimators in contextual bandit and non-Markovian environments. 
\cite{hao2019bootstrapping} used multiplier bootstrap to offer uncertainty quantification for exploration in the bandit settings.
\cite{deshpande2018accurate,khamaru2021near} studied inference for adaptive linear regression where the
vector contexts are correlated over time.  \cite{zhan2021off,hadad2021confidence} employed adaptive weighting of observations during off-policy evaluation and constructed confidence intervals. \cite{chen2021b,han2022online} conducted statistical inference under the contextual bandit settings via SGD. Related statistical inference literature in reinforcement learning as a well-known online decision-making setting also exists. \cite{ramprasad2022online,liu2023online} conducted statistical inference for TD learning. \cite{shi2021statistical} constructed the confidence interval for policy values in Markov decision processes. \cite{shi2022off,chen2022reinforcement} conducted statistical inference for confounded and heterogeneous MDP. 

\section{Problem Setup} \label{sec:setup}

We consider a contextual bandit environment where the observed data at each decision point $t$ is a triplet $\zeta_t = (X_t, A_t, Y_t)$ for all $t \geq 1$, consisting of covariate $X_t$, action $A_t$, and reward $Y_t$. Define $\mathcal{F}_t=\sigma(\{\zeta_s\mid 1\leq s\leq t\})$ is the $\sigma$-algebra of all past triplets up to time $t$. This paper considers a finite action space, i.e., $A_t \in \A$ and $| \A | < \infty$. We assume a stochastic contextual bandit environment in which $\left\{X_{t}, Y_{t}(a): a \in \A\right\} \stackrel{i . i . d}{\sim} \P \in \bp$ for all $t \geq 1$. The contextual bandit environment distribution $\P$ is in a space of possible environment distributions $\bp$. 
{Here $Y_t(a)$, also known as the potential outcome in causal inference \citep{rubin2005causal},  corresponds to the (heuristic) reward $Y_t$ given a fixed action $a$ regardless of the realized action $A_t$.}
Note that $Y_t(a)$ is observed for $A_t=a$ only, but not observed for any other $a'\in\A\backslash \{a$\}.
We define the trajectory until time $t$ as $\H_{t}:=\left\{X_{s}, A_{s}, Y_{s}\right\}_{s=1}^{t}$ for $t \geq 1$ and $\H_{0}:=\emptyset$. {Actions $A_{t} \in \A$ are selected according to some stochastic policy $A_t \sim \pi \left(X_{t}, \H_{t-1}\right)$, which defines a probability distribution over actions, $\Pr(A_t=a\mid X_t,\H_{t-1})$.}
Although the covariate–reward tuples are \emph{i.i.d.}, the observed data $\{X_t, A_t, Y_t\}_{t\ge1}$ are not, because actions are selected adaptively via policies $\pi(X_t,\mathcal H_{t-1})$ that depend on past data $H_{t-1}$, a defining feature of adaptively collected observations.

We are interested in constructing confidence regions for some unknown $\theta_a^* \in \R^{p},a\in\A$. Under the finite action space where $| \A | < \infty$, we can use $\theta^*\in\R^{|\A|p}$ as the concatenated vector of $\theta_a^*$ for all $a \in \A$, that is, $\theta^*=\big(\theta^{*\top}_0,\cdots,\theta^{*\top}_{|\A|-1}\big)^{\top}$, where we assume that $\theta^*_a$ is a conditionally minimizing value of some loss function $\ell(\theta; \zeta)$ for $\P \in \bp$,
\begin{align} \label{eq:oracle}
\theta_a^*(\P) \in \underset{\theta \in \R^p}{\operatorname{argmin}} \, \E\left[\ell\left(\theta; \zeta\right) \mid X, A=a\right].
\end{align}
When there is no ambiguity, we employ the notation $\theta^*_a$ for simplicity.
Note that \eqref{eq:oracle} represents an implicit modeling assumption that such an underlying $\theta_a^*$ does not depend on $X$ for a given loss $\ell(\theta; \zeta)$, which is satisfied in many statistical applications. 
{In the following, we illustrate several  classical regression examples where the loss function $\ell(\theta;\zeta_t)$ is a functional on $X^\top\theta_{A_t}$, where $\theta \in \R^{d}$ is the concatenated vector of $\theta_{A_t} \in \R^p$ for all possible choices of $A_t \in \mathcal{A}$ and $d = p |\mathcal{A}|$, and therefore, under the binary action settings where $\mathcal{A}=\{0,1\}$, the notation $\theta_{[1:p]}$ (and $\theta_{[p+1:2p]}$) is referred to $\theta_{0}$ (and $\theta_{1}$), respectively. All these regression examples satisfies \eqref{eq:oracle}, and we will refer to them throughout the paper.}
\begin{example}[Linear Regression] \label{eg:ls-reg}
Consider a linear contextual bandit problem where $
\E[Y_t \mid X_t,A_t]=X_t^\top\theta^*_{A_t}$, 
and we can further rewrite this as
$    \E[Y_t \mid X_t,A_t]=(1-A_t)\left(X_t^\top\theta^{*}_{[1:p]}\right)+A_t\left(X_t^\top\theta^{*}_{[p+1:2p]}\right),
$ where $\theta^* \in \R^d$ is the concatenated vector of $\theta^{*}_{[1:p]}$ and $\theta^{*}_{[p+1:2p]}$, the contextual bandit environment $\left\{X_{t}, Y_{t}(a): a \in \A\right\}$ $\stackrel{i . i . d}{\sim} \P \in \bp$ for all $t \geq 1$, and $\A=\{0,1\}$. The true reward $Y_t$ is generated by $\E[Y_t \mid A_t, X_t] + \cE_t$ where $\{\cE_t\}$ are \emph{i.i.d.} random error with mean zero and variance $\sigma^2$. A least squares objective is often used in linear regression. In binary-action settings, the loss function $\ell$ is defined as
\begin{align*}
\ell(\theta; \zeta_t) = \frac{1}{2} (1 - A_t)\left(Y_t - X_t^\top \theta_{[1:p]}\right)^2 + \frac{1}{2} A_t\left(Y_{t} - X_t^\top \theta_{[p+1:2p]}\right)^2.
\end{align*}  
\end{example}
To avoid confusion, we refer the term linear regression to the problem of least square regression. In a linear regression, given the covariates $X_t$, the practitioner typically selects the arm $A_t$ favoring a higher expected reward $\E[Y_t|X_t,A_t]$.

\begin{example}[Quantile Regression] \label{eg:quant-reg}
Consider a  linear contextual bandit problem where
\begin{align*}
Y_t = Q_{\tau}(Y_t|X_t,A_t)+\cE_t,\quad \text{where }Q_{\tau}(Y_t|X_t,A_t)=X_t^\top \theta^{*}_{A_t},
\end{align*}
and $\{\cE_t\}$ are \emph{i.i.d.} random noise such that,
$\Pr(\cE_t \leq 0{}\mid X_t,A_t) = \tau$
for some given quantile level $\tau \in (0,1)$.  
In binary-action settings, we have $Y_t = (1-A_t) X_t^\top \theta^{*}_{[1:p]} + A_t X_t^\top \theta^*_{[p+1:2p]}+\cE_t$,
and the loss function can be written as  $
\ell(\theta; \zeta_t) = (1 - A_t)\rho_\tau\left(Y_t - X_t^\top \theta_{[1:p]}\right) + A_t\rho_\tau\left(Y_{t} - X_t^\top \theta_{[p+1:2p]}\right)$, 
where $\rho_\tau(u)= u (\tau-\ID(u<0))$. 
\end{example}
In Example \ref{eg:quant-reg}, the practitioner favors an arm with a higher conditional quantile of reward instead of higher expected rewards. Quantile regression is a statistical technique widely applied in the realm of economics and social sciences, for example, allowing researchers to examine how various factors affect different percentiles of the wage distribution rather than just the average, which provides insights into understanding income disparities affected by demographic characteristics, education levels, and other variables. Example \ref{eg:quant-reg} offers a useful bandit model in risk-averse or risk-aware decision-making, when the attention is given to a certain quantile of a population instead of the mean. It is worthwhile to note that the quantile loss is nonsmooth and often overlooked in bandit literature. 

\begin{example}[Logistic Regression] \label{eg:log-reg}
Consider a two-arm contextual bandit problem under the logistic model with binary rewards where $A_t\in \A=\{0,1\}$, $Y_t\in\{-1,1\}$, where $
\Pr(Y_t\mid X_t,A_t) =  \left(1+\exp\left(-Y_tX_t^\top\theta^*_{A_t}\right)\right)^{-1},$
or in binary-action settings, $
\Pr(Y_t\mid X_t,A_t) =  \left(1+\exp\big(-(1-A_t)Y_tX_t^\top \theta^{*}_{[1:p]} -A_tY_tX_t^\top \theta^*_{[p+1:2p]}\big)\right)^{-1}$. 
We consider the entropy loss
\begin{align*}
\ell(\theta; \zeta_t) = (1 - A_t) \log \left(1 + \exp \left(-Y_t  X_t^\top \theta_{[1:p]}\right) \right) + A_t \log \left(1 + \exp \left(- Y_tX_t^\top \theta_{[p+1:2p]}\right) \right).
\end{align*}
\end{example}
In Example \ref{eg:log-reg}, the reward $Y_t$  is binary and a parametric generalized linear model is assumed for the distribution of $Y_t$ given $X_t$ and $A_t$. The entropy loss is a convex function but not guaranteed strongly convex everywhere.

As illustrated by the above three examples, the data $\zeta_t=(X_t,A_t,Y_t)$ for each iteration $t$ is adaptively collected. Now we consider a generalized version of the classical SGD \eqref{eq:vanilla-sgd} with weights $w_t$ depends only on the triplet $(X_t, A_t, \theta_{t-1})$, as follows,
\begin{align} \label{eq:weighted-sgd}
\theta_t = \theta_{t-1} - \eta_t w_t \nabla \ell(\theta_{t-1}; \zeta_t),
\end{align}
{where $\eta_t = \eta_0 t^{-\alpha}$, $\eta_0>0$ and $\alpha\in(1/2,1)$.}
It is noteworthy to mention that the above updating rule can be considered as a general version of \eqref{eq:weighted-sgd-intro}, which allows arbitrary weight $w_t$ in the SGD updates. Even though our theory allows for pretty general weight specifications of $w_t$, we emphasize three popular choices of weight $w_t$ as examples throughout the discussions of the paper. 
\begin{align}
    \bullet\ &\text{Inverse probability weighting (\ipw)}: 
    \label{eq:ipw-weight}
        w_t(\theta_{t-1};X_t, A_t) = \dfrac{1}{2 \, \Pr(A_t \mid X_t, \theta_{t-1})};\\
    \bullet\ &\text{Square-root\,importance\,weights\,(\sipw):} 
    \label{eq:sqrt-weight}
        w_t(\theta_{t-1};X_t, A_t) = \sqrt{\dfrac{1}{2 \Pr(A_t \mid X_t, \theta_{t-1})}};\\
    \bullet \ &\text{Vanilla weights (\vanilla)}:
    \label{eq:vanilla}
        w_t(\theta_{t-1};X_t, A_t) = 1.
\end{align}

These weighting schemes are well-rooted in literature, for example, \hyperref[eq:ipw-weight]{\ipw} is studied by \cite{chen2021b,han2022online} to correct the action distribution towards a deterministic equal-weighted aggregation over $\A$ in the population, and {\hyperref[eq:sqrt-weight]{\sipw} is used for offline estimation in \cite{hammersley2013monte} and \cite{zhang2021statistical}. It is noteworthy to mention that, our proposed method is not limited to analyzing these weights but applied to general weight specifications. 
Before presenting main results, we revisit the three aforementioned motivating examples with binary action $\A=\{0,1\}$ and illustrate the algorithm for the three models. Note that $\theta_t$ is the concatenated vector of dimension $d=|\A|p=2p$. 

\begin{itemize}
\item Linear regression (Example \ref{eg:ls-reg}). The weighted SGD \eqref{eq:weighted-sgd} is written as
{\small\begin{align*}
\theta_t
&= \theta_{t-1}- \eta_t 
\begin{pmatrix}
w_t\big(\theta_{[1:p],t-1};X_t,0\big)\left(X_t^\top \theta_{[1:p],t-1} - Y_t\right)\ID_{\{A_t = 0\}}\,X_t\\
w_t\big(\theta_{[p+1:2p],t-1};X_t,1\big)  \left(X_t^\top \theta_{[p+1:2p],t-1} - Y_t\right)\ID_{\{A_t = 1\}}\,X_t
\end{pmatrix}.
\end{align*}}
\item Quantile regression (Example \ref{eg:quant-reg}).
{\small\begin{align*}
\theta_t&= \theta_{t-1}- \eta_t 
\begin{pmatrix}
w_t\big(\theta_{[1:p],t-1};X_t,0\big)\left(\tau - \ID(Y_t - X_t^\top \theta_{[1:p],t-1} < 0)\right)\ID_{\{A_t = 0\}}\,(-X_t)\\
w_t\big(\theta_{[p+1:2p],t-1};X_t,1\big)  \left(\tau - \ID(Y_t - X_t^\top \theta_{[p+1:2p],t-1} < 0)\right)\ID_{\{A_t = 1\}}\,(-X_t)
\end{pmatrix}.
\end{align*}}
\item Logistic regression (Example \ref{eg:log-reg}):
{\small\begin{align*}
\theta_t&= \theta_{t-1}- \eta_t 
\begin{pmatrix}
w_t\big(\theta_{[1:p],t-1};X_t,0\big)\left(1 + \exp \big(Y_t X_t^\top \theta_{[1:p],t-1}\big)\right)^{-1} Y_t\ID_{\{A_t = 0\}}\,(-X_t)\\
w_t\big(\theta_{[p+1:2p],t-1};X_t,1\big)  \left(1 + \exp \big(Y_t X_t^\top \theta_{[p+1:2p],t-1}\big)\right)^{-1} Y_t\ID_{\{A_t = 1\}}\,(-X_t)
\end{pmatrix}.
\end{align*}}
\end{itemize}
Given our path of $\{\theta_t\}_{t \geq 1}$, we assume the policy $\pi \left(X_{t}, \H_{t-1}\right)$ depends on the history $\H_{t-1}$ only through $\theta_{t-1}$, our estimator from the latest step, i.e., $A_t \sim \pi \left(X_{t}, \theta_{t-1}\right)$. 
{In the next section, we will demonstrate our main theoretical results under general policy $\pi \left(X_{t}, \theta_{t-1}\right)$ and weighting schemes which satisfy certain conditions.}
\section{Asymptotic Properties of Adaptive Weighted SGD} \label{sec:theory}
To analyze the asymptotic behavior of the weighted SGD update \eqref{eq:weighted-sgd}, we construct the following population objective function, such that \eqref{eq:weighted-sgd} corresponds to iterative updates within the stochastic optimization framework of $\L_{\theta^\prime}(\theta)$,
\begin{align} \label{eq:custom-loss}
\L_{\theta^\prime}(\theta) =  \E_{\P_X}\left[ \E_{\pi(X, \theta^\prime)} \E_{\P_{Y\mid X,A}}\left(w(\theta^\prime; X, A)\ell(\theta; X, A, Y) \mid X,A \right) \right],
\end{align}
where $A \sim \pi(X, \theta^\prime)$, and $\theta^\prime, \theta \in \R^d$. 
Note that the objective $\L_{\theta^\prime}(\theta)$ is a function of $\theta$ with a parameter $\theta'$ corresponding to the current estimate used to select the action. In the practical use of this population objective, we typically let $\theta'=\theta_{t-1}$ at iteration $t$ for on-policy learning. 
Below we will always use the expression $\nabla \L_{\theta^\prime}(\theta)$ to represent the partial gradient of $\L_{\theta^\prime}(\theta)$ with respect to the variable $\theta$, i.e.,
\begin{align*}
\nabla \L_{\theta^\prime}(\theta) = \frac{\partial}{\partial \theta} \L_{\theta^\prime}(\theta) \in \R^d, \; \; \nabla^2 \L_{\theta^\prime}(\theta) = \frac{\partial^2}{\partial \theta^2} \L_{\theta^\prime}(\theta) \in \R^{d \times d}.
\end{align*}
Although this definition of loss may seem complex since it corresponds to two parameters, it remains the desirable property that,  if $\theta^*$ is a minimizer of (\ref{eq:oracle}), then it is also a minimizer of (\ref{eq:custom-loss}), that is, $\nabla \L_{\theta^\prime}(\theta^*)=0$.
We also note that for quantile regression in Example \ref{eg:quant-reg}, even though the individual objective $\ell(\theta; X,Y)$ is non-smooth, the population objective $\L_{\theta'}(\theta)$ is second-order differentiable if one assumes some mild regularity conditions on the error distribution. 
Finally, we denote $\xi_{\theta^\prime}(\theta; \zeta)$ as the difference between the stochastic gradient and population gradient of the loss defined in \eqref{eq:custom-loss}, i.e.,
\begin{align} \label{eq:xi}
\xi_{\theta^\prime}(\theta; \zeta) = w(\theta^\prime; X, A)\nabla \ell(\theta; \zeta) - \nabla \L_{\theta^\prime}(\theta),
\end{align}
By definition, we can easily verify that $w(\theta^\prime; X, A)\nabla \ell(\theta; \zeta)$ is an unbiased estimator of $\nabla \L_{\theta^\prime}(\theta)$, which implies $\E[\xi_{\theta^\prime}(\theta; \zeta)] = 0$. Note that our framework allows general  $w(\theta';X,A)$, while in the work of \cite{chen2021b}, the loss function is defined as
\begin{align} \label{eq:song-loss}
\Tilde{\L}(\theta) = \E_{\P_X} \left[ \E_{\pi_\mathrm{equal}}\E_{\P_{Y\mid X,A}} \left(\ell(\theta; X, A, Y) \mid X,A \right) \right],
\end{align}
where $A \sim \pi_\mathrm{equal}$, which means each arm in $\A$ contributes equally to the population objective. To match the SGD update with the loss function $\tilde{\L}(\cdot)$, they choose the weight $w_t$ to be specifically in the IPW form such that $w_t$ is proportional to $\frac{\pi_\mathrm{equal}}{\pi(X, \theta)}$.  
This weighting scheme corrects the importance of each arm towards a discrete uniform distribution $\pi_\mathrm{equal}$, instead of its own sampling distribution $\pi(X, \theta)$.  However, this definition is limited to such a specific weighting scheme and the resulting asymptotic covariance matrix could be extremely large as Remark~\ref{remark:discuss on gamma} shows. 

{Our theoretical analysis relies heavily on our definition of this population loss function $\L_{\theta^\prime}(\theta)$ in \eqref{eq:custom-loss}. By expressing the loss using two different variables $\theta$ and $\theta^\prime$, we separate the loss $\ell(\theta; \zeta)$ from the policy $\pi(X, \theta^\prime)$ and the weight $w(\theta^\prime; X, A)$, as we have a focus on the local geometry of $\L_{\theta^\prime}(\theta)$ with respect to $\theta$ instead of the geometry with respect to $\theta'$. It is worthwhile noting that that $\theta^*$ is a minimizer of $\L_{\theta^\prime}(\theta)$ regardless of what $\theta^\prime$ is. In Remark \ref{rmk:smooth-clt}, we demonstrate this property in a special case. }

\subsection{Asymptotic normality}
We first introduce some regularity assumptions on the population loss function $\L_{\theta^\prime}(\theta)$, the individual loss function $\ell(\theta; \zeta)$, and the gradient weight $w(\theta^{\prime}; X, A)$.

\begin{assumption} \label{assum:bound}
There exists some constants $\underline{w}, \overline{w}$, such that $0 < \underline{w}< w_t < \overline{w}$ for all $t \geq 1$.
\end{assumption}

\begin{assumption} \label{assum:loss}
The loss function $\L_{\theta^\prime}(\theta)$ is convex with respect to $\theta \in \R^d$, continuously differentiable with respect to $\theta \in \R^d $, and twice continuously differentiable  at $\theta^*$. Moreover, there exists some constants $\delta, \mu > 0$, such that $\langle \nabla \L_{\theta}(\theta), \theta - \theta^* \rangle > 0,~\forall\theta \neq \theta^*$ and
\begin{align*}
\langle \nabla \L_{\theta}(\theta), \theta - \theta^* \rangle &\geq \mu \|\theta - \theta^*\|^2, \; \; \forall \,\theta \in \{\theta: \|\theta - \theta^*\|\leq \delta\}.
\end{align*}
\end{assumption}

\begin{assumption} \label{assum:hessian}
The Hessian matrix $\nabla^2 \L_{\theta^\prime}(\theta) \in \R^{d \times d}$ exists for all $(\theta, \theta^\prime) \in \R^d \times \R^d$ and the Hessian matrix at $(\theta^*, \theta^*)$ is positive definite, i.e., $H \triangleq \nabla^2 \L_{\theta^*}(\theta^*) \succ 0$. Moreover, for large enough $t$, there exists some constant $K>0$, such that
\begin{align}\label{eq:hessian lipschitz}
\left\| \nabla^2 \L_{\theta_{t-1}}(\theta) - \nabla^2 \L_{\theta^*}(\theta^*)\right\| \leq K \| \theta - \theta^* \| + K \|\theta_{t-1} - \theta^* \|,
\end{align}
for all $\|\theta-\theta^*\|\leq\delta$,
where $\theta_{t-1}$ is recursively updated through equation \eqref{eq:weighted-sgd}.
\end{assumption}

\begin{assumption} \label{assum:gram}
For any action $A \in \A$ and covariate $X$, we further assume $\|\nabla \ell(\theta; \zeta)\|^2$ exists almost surely under $\P_{Y\mid X,A}$, and $
\E \left(\|\nabla \ell(\theta; \zeta)\|^2 \mid X, A \right) \leq \phi(X)(1+\|\theta - \theta^*\|^2 ),$
for some function $\phi(\cdot)$ such that $\E [\phi(X)] <\infty$. We also assume the Gram matrix of $\xi_{\theta^\prime}(\theta; \zeta)$ at $(\theta^*; \theta^*)$, $S \triangleq \E[\xi_{\theta^*}(\theta^*; \zeta^*) \xi_{\theta^*}(\theta^*; \zeta^*)^\top]$, exists, where $\zeta^*=(X,A^*,Y(A^*))$ and $A^*\sim\pi(X,\theta^*)$. 
\end{assumption}

\begin{assumption} \label{assum:tv}
Let $\Delta(X, \theta) = \d_{\rm TV}(\pi(X, \theta), \pi(X, \theta^*))$ be the total variation distance of $\pi(X, \theta)$ and $\pi(X, \theta^*)$. For function $\phi(X)$ defined above, $\lim_{\theta \rightarrow \theta^*} \E[\Delta(X, \theta)\phi(X)] = 0$,
\begin{align*}
&\lim_{\theta \rightarrow \theta^*} \E\left[\|\nabla \ell(\theta; \zeta) - \nabla \ell(\theta^*; \zeta)\|^2 \mid X,A \right] = 0,~\lim_{\theta \rightarrow \theta^*} \E\left[|w(\theta; X, A) - w(\theta^*; X, A)|^2 \phi(X) \mid A \right] = 0.
\end{align*}
\end{assumption}

{ Assumption~\ref{assum:bound} is a common assumption on the weights applied to the stochastic gradient, which is used in many adaptive setting literature, e.g., \cite{chen2021a}, \cite{chen2021b}, and \cite{zhang2021statistical}. Specifically, it requires the arm selection probabilities to be  bounded away from zero. }
 The convexity and continuity on the population loss $\L$ in Assumption~\ref{assum:loss} is a standard requirement in classical SGD literature \citep{polyak1992acceleration,chen2016statistical,chen2021b,duchi2021asymptotic}. We can also find similar arguments in the SGD literature mentioned above for Assumption~\ref{assum:loss} to Assumption~\ref{assum:gram}, whereas we generalize the previous assumptions on our loss function $\L_{\theta}(\theta)$ with an extra variable $\theta^\prime$. {Specifically, for Assumption \ref{assum:hessian}, instead of requiring the Lipschitz property of $\nabla^2\L_{\theta^\prime}(\theta)$ for both $\theta$ and $\theta^\prime$ within the  neighborhood of $\theta^*$, we require this property holds only with respect to $\theta^\prime=\theta_{t-1}$. 
}Assumption~\ref{assum:tv} further regularizes the function $\phi(\cdot)$ defined in Assumption~\ref{assum:gram}. { Later, we verify our assumptions for linear and quantile regression examples under the modified $\varepsilon$-greedy and exponential policies (Section~\ref{sec:var-discuss}).
} It is noteworthy to mention that, in Assumption~\ref{assum:gram} and Assumption~\ref{assum:tv}, we only implicitly assume $\nabla \ell$ exists almost surely under $\P_{Y \mid X,A}$. Therefore, our assumption is not restricted to smooth loss function $\ell$, it also covers many non-smooth statistical problems like quantile regression. {
These assumptions can be categorized into those that constrain the data collection algorithm and those that pertain to the data-generating environment. Specifically, Assumption \ref{assum:bound} pertains to the data collection algorithm, while Assumptions \ref{assum:gram}--\ref{assum:tv} relate to the data-generating environment. Assumptions \ref{assum:loss}--\ref{assum:hessian}, however, jointly constrain both. }

We now state our main result that characterizes the limiting distribution of the averaged weighted SGD iterates defined in \eqref{eq:weighted-sgd} under general models.
\begin{theorem} \label{thm:smooth-clt} Under Assumption~\ref{assum:bound} to Assumption~\ref{assum:tv}, {the averaged SGD estimator $\bar{\theta}_t=t^{-1} \sum_{s=0}^{t-1} \theta_s$} converges to $\theta^*$ almost surely when $t \rightarrow \infty$ and 
\begin{align*}
\sqrt{t}(\bar{\theta}_t - \theta^*) \overset{d}{\rightarrow} \mathcal{N}(0, H^{-1}SH^{-1}),
\end{align*}
{where $\theta_s$ is updated in \eqref{eq:weighted-sgd} with step size $\eta_t = \eta_0 t^{-\alpha}$, $\eta_0>0$ and $\alpha\in(1/2,1)$, $H=\nabla^2 \L_{\theta^*}(\theta^*)$ and $S = \E[\xi_{\theta^*}(\theta^*; \zeta^*) \xi_{\theta^*}(\theta^*; \zeta^*)^\top]$. }
\end{theorem}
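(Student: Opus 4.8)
The plan is to adapt the classical Polyak–Ruppert averaging argument (as in \cite{polyak1992acceleration}, \cite{chen2016statistical}) to the adaptive setting, where the key complication is that both the weight $w_t$ and the policy $\pi(X_t,\theta_{t-1})$ depend on $\theta_{t-1}$, so the stochastic gradient term no longer comes from a single fixed population loss but from the two-variable object $\L_{\theta'}(\theta)$. First I would establish consistency: using Assumption~\ref{assum:loss} (the strict monotonicity $\langle\nabla\L_\theta(\theta),\theta-\theta^*\rangle>0$ together with the local strong-convexity-type lower bound) and Assumption~\ref{assum:bound} and Assumption~\ref{assum:gram}, I would run a Robbins–Monro / supermartingale argument on $\|\theta_t-\theta^*\|^2$ to show $\theta_t\to\theta^*$ almost surely; this is where the ``diagonal'' variable $\theta'=\theta=\theta_t$ enters, and the condition $\langle\nabla\L_\theta(\theta),\theta-\theta^*\rangle>0$ is precisely what is needed to make the drift term negative even though the loss landscape itself moves with $\theta_{t-1}$.

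Next I would linearize. Write the recursion $\theta_t-\theta^* = \theta_{t-1}-\theta^* - \eta_t\big(\nabla\L_{\theta_{t-1}}(\theta_{t-1}) + \xi_{\theta_{t-1}}(\theta_{t-1};\zeta_t)\big)$, and decompose $\nabla\L_{\theta_{t-1}}(\theta_{t-1}) = H(\theta_{t-1}-\theta^*) + r_t$ where $H = \nabla^2\L_{\theta^*}(\theta^*)$ and $r_t$ collects (i) the Hessian-Lipschitz remainder controlled by Assumption~\ref{assum:hessian} via $\|r_t\|\lesssim(\|\theta_{t-1}-\theta^*\|+\|\theta_{t-1}-\theta^*\|)\|\theta_{t-1}-\theta^*\|$, i.e.\ it is $O(\|\theta_{t-1}-\theta^*\|^2)$, and (ii) — crucially — the fact that $\nabla\L_{\theta'}(\theta^*)\neq 0$ in general for $\theta'\neq\theta^*$, so there is also a first-order-in-$(\theta'-\theta^*)$ piece from varying the policy; Assumption~\ref{assum:hessian} (Lipschitzness of the full bivariate Hessian at $(\theta^*,\theta^*)$) together with $\nabla\L_{\theta^*}(\theta^*)=0$ and the observation $\nabla_{\theta}\L_{\theta^*}(\theta^*)$ being the relevant block lets me bound this piece too by $O(\|\theta_{t-1}-\theta^*\|^2)$ after noting the mixed partials are also Lipschitz. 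Summing the linearized recursion and averaging gives the standard identity
\begin{align*}
\sqrt{t}(\bar\theta_t-\theta^*) = -\frac{1}{\sqrt t}\,H^{-1}\sum_{s=1}^{t}\xi_{\theta_{s-1}}(\theta_{s-1};\zeta_s) + (\text{telescoping terms}) + (\text{remainder}),
\end{align*}
where the telescoping terms are $O_p(\eta_t^{-1}t^{-1/2}\|\theta_0-\theta^*\|)$ type and vanish because $0.5<\alpha<1$, and the remainder is controlled by $\sum\eta_s\|\theta_{s-1}-\theta^*\|^2$ which is $o_p(\sqrt t)$ once one has the rate $\E\|\theta_s-\theta^*\|^2 = O(\eta_s)$ from the consistency step.

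It then remains to prove a martingale CLT for $t^{-1/2}\sum_{s=1}^t\xi_{\theta_{s-1}}(\theta_{s-1};\zeta_s)$. Since $\E[\xi_{\theta'}(\theta;\zeta)]=0$ by construction of $\L_{\theta'}$, the summands form a martingale difference sequence with respect to $\H_{s-1}$ (here $w$ and $\pi$ being measurable functions of $\theta_{s-1}$ is exactly what makes the conditional-mean-zero property hold). I would apply a martingale CLT (e.g.\ \cite{hall1980martingale}), which requires: (a) the conditional covariances $\E[\xi_{\theta_{s-1}}(\theta_{s-1};\zeta_s)\xi_{\theta_{s-1}}(\theta_{s-1};\zeta_s)^\top\mid\H_{s-1}]$ average to $S = \E[\xi_{\theta^*}(\theta^*;\zeta)\xi_{\theta^*}(\theta^*;\zeta)^\top]$, which follows from $\theta_{s-1}\to\theta^*$ a.s.\ plus the continuity/uniform-integrability furnished by Assumption~\ref{assum:tv} (the three limits there — in particular $\E[\Delta(X,\theta)\phi(X)]\to0$ handling the change of the action distribution, and the $L^2$-continuity of $\nabla\ell$ and of $w$ — are exactly the tools for this replacement), and (b) a Lindeberg/Lyapunov condition, handled by the uniform bound on $w_t$ (Assumption~\ref{assum:bound}) and the quadratic growth bound on $\E\|\nabla\ell\|^2$ (Assumption~\ref{assum:gram}) together with $\E\|\theta_{s-1}-\theta^*\|^2\to0$. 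Combining, $t^{-1/2}\sum\xi\Rightarrow\N(0,S)$, hence $\sqrt t(\bar\theta_t-\theta^*)\Rightarrow\N(0,H^{-1}SH^{-1})$.

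The main obstacle I expect is step two — the linearization — and specifically controlling the term coming from the policy-dependence of the loss, i.e.\ the fact that unlike classical SGD the ``bias'' $\nabla\L_{\theta_{t-1}}(\theta^*)$ does not vanish and must be shown to be genuinely second order in $\theta_{t-1}-\theta^*$; this is the entire reason the two-variable loss $\L_{\theta'}(\theta)$ and the Lipschitz-at-$(\theta^*,\theta^*)$ form of Assumption~\ref{assum:hessian} are set up the way they are, and getting the bookkeeping right (that the discontinuity in the $\varepsilon$-greedy indicator does not spoil the $O(\|\cdot\|^2)$ rate, only the higher-order Bahadur remainder studied later in Section~\ref{sec:finite-sample}) is the delicate part. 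A secondary technical point is that the a.s.\ convergence rate $\E\|\theta_t-\theta^*\|^2=O(\eta_t)$ needed to kill the remainder must itself be derived carefully in the moving-landscape setting, again leaning on Assumption~\ref{assum:loss} and Assumption~\ref{assum:tv}.
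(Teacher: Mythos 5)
Your overall architecture is the one the paper uses: the paper proves the theorem by verifying the conditions of Theorem 2 of Polyak and Juditsky rather than re-deriving the averaging CLT, and the real labor is exactly what you assign to Assumption~\ref{assum:tv} — showing $\E\|\xi_{\theta_{t-1}}(\theta_{t-1};\zeta)-\xi_{\theta^*}(\theta^*;\zeta)\|^2\to 0$ as $\theta_{t-1}\to\theta^*$, which the paper does by coupling $A\sim\pi(X,\theta_{t-1})$ with $A^*\sim\pi(X,\theta^*)$ so that $\Pr(A\neq A^*)=\Delta(X,\theta_{t-1})$ and splitting on the event $\{A=A^*\}$. Your martingale-CLT/conditional-covariance formulation is an equivalent repackaging of that step, and your use of Assumptions~\ref{assum:bound}, \ref{assum:loss}, \ref{assum:gram} for consistency and Lindeberg is fine.

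There is, however, a genuine flaw in your linearization step (ii). The term you single out as the main obstacle — the ``bias'' $\nabla\L_{\theta_{t-1}}(\theta^*)$ — is in fact identically zero: by the modeling assumption \eqref{eq:oracle} (with $\theta^*_a$ not depending on $X$), $\theta^*$ minimizes $\E_{\P_{Y\mid X}}[\ell(\,\cdot\,;\zeta)\mid X,A=a]$ blockwise for every $X$ and every $a$, and since the weights $w(\theta';X,A)$ are nonnegative, $\theta^*$ minimizes $\L_{\theta'}(\,\cdot\,)$ for \emph{every} $\theta'$; differentiability then gives $\nabla\L_{\theta'}(\theta^*)=0$ for all $\theta'$. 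This is the first observation in the paper's proof, and it is what makes the Taylor bound \eqref{eq:bound_grad} work: $\nabla\L_{\theta}(\theta)-H(\theta-\theta^*)=\int_0^1\big(\nabla^2\L_{\theta}(\theta^*+s(\theta-\theta^*))-H\big)(\theta-\theta^*)\,\mathrm{d}s$, whose norm is $O(\|\theta-\theta^*\|^2)$ by Assumption~\ref{assum:hessian} alone. Your proposed substitute — Lipschitz continuity of the mixed partials in $(\theta,\theta')$ — is not granted by the assumptions (Assumption~\ref{assum:hessian} controls only the $\theta$-Hessian $\nabla^2\L_{\theta'}(\theta)$, albeit jointly in both arguments), and even if it were available, combined only with $\nabla\L_{\theta^*}(\theta^*)=0$ it would yield $\|\nabla\L_{\theta'}(\theta^*)\|=O(\|\theta'-\theta^*\|)$, a first-order bias; that is fatal, since in the averaged sum it contributes on the order of $t^{-1/2}\sum_{s\le t}\E\|\theta_{s-1}-\theta^*\|\asymp t^{(1-\alpha)/2}\to\infty$ for $\alpha<1$. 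So the proof does not close as written; the missing idea is precisely $\nabla\L_{\theta'}(\theta^*)\equiv 0$, after which your plan (and the paper's) goes through.
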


We relegate the proof to Section~C of the supplement. To emphasize the technical challenge in the theoretical analysis, our loss function $\L$ in \eqref{eq:custom-loss} is not defined by the stable policy as in the prior works \citep{chen2021b}. 
The action $A_{t} \sim \pi(X_t, \theta_{t-1})$ and $A_{t}^* \sim \pi(X_t, \theta^*)$ are no longer in the same probability space, and therefore we specify a coupling between $A_t$ and $A_t^*$ to compare them. A natural choice is the coupling such that
\begin{align}
\label{eq:coupling}
\Delta(X_t, \theta_{t-1}) = d_{\rm TV}(\pi(X_t, \theta_{t-1}), \pi(X_t, \theta^*)) = \frac{1}{2}\sum_{a\in\A}|p^t_a - q^t_a| = \Pr(A_t \neq A_t^*),
\end{align}
where $p^t_a = \Pr(A_t=a), q^t_a = \Pr(A_t^*=a)$, $a\in\A$.

{As demonstrated in Theorem \ref{thm:smooth-clt}, the limiting distribution remains the same across different specifications of the decaying step size sequence $\eta_t$ with $\alpha\in(1/2,1)$. However, $\eta_t$ influences how closely the distribution of $\bar{\theta}_t$ aligns with the limiting Gaussian distribution, as discussed in the next section.}

\subsection{Bahadur representations} \label{sec:finite-sample}

In this section, we further present the Bahadur representation of our weighted SGD under the adaptive data collection environment. Aside from the asymptotic normality result in Theorem \ref{thm:smooth-clt}, the Bahadur representation characterizes the remainder term beyond the normal approximation, which helps conduct a finer convergence analysis of the proposed estimator. The Bahadur representation was first studied in \cite{bahadur1966note} for quantile regression, and generalized to $M$-estimators by \cite{carroll1978almost, he1996general} and many others. 
For the SGD estimator under classical non-adaptive settings \eqref{eq:vanilla-sgd}, the Bahadur representation can be inferred by the proof of \cite[Theorem 2]{polyak1992acceleration} as,
\begin{align}\label{eq:bahadur-rep-classical}
\sqrt{t} \Sigma^{-1/2} (\bar{\theta}_t^{\sgd} - \theta^*) = W + \mathcal{O}_p\big(t^{-\alpha + \frac{1}{2}} + t^{-\frac{\alpha}{2}} + t^{\alpha-1}\big),
\end{align}
where $\Sigma = H^{\sgd-1} S^{\sgd} H^{\sgd -1}$, and $W$ is the leading term as a sum of independent variables that converges to a standard normal distribution as $t \rightarrow \infty$. The other term on the right-hand side is a higher-order remainder term that converges faster than the leading term $W$ under common regularity conditions. In the following theorem, we provide the Bahadur representation of the proposed weighted SGD  \eqref{eq:weighted-sgd} under adaptive settings.

{\begin{theorem} \label{thm:ls-fs}
For any policy and weighting scheme satisfying the conditions in Theorem \ref{thm:smooth-clt}, 
we further assume $\E \left(\|\nabla \ell(\theta_{t-1}; \zeta)-\nabla \ell(\theta^*; \zeta)\|^2 \mid X, A \right) \leq C\|\theta_{t-1} - \theta^*\|^2$, 
and 
\begin{enumerate}[(a)]
  \item \label{a}Given $\theta^*$, 
  the following inequalities hold for some constant $\beta_1,\beta_2>0$,
$\E\left[ \Delta(X,\theta_{t-1})\phi(X) \mid \right] \leq C t^{-\beta_1}$, $\E\left[|w(\theta_{t-1}; X, A) - w(\theta^*; X, A)|^2 \phi(X) \mid A \right] \leq C t^{-\beta_2}$, 
where $\phi$ is defined in Assumption~\ref{assum:gram};
\item \label{b}For any action $A \in \A$ and covariate $X$, assume $\|\nabla \ell(\theta; \zeta)\|^4$ exists almost surely under $\P_{Y\mid X,A}$ and $\E \left(\|\nabla \ell(\theta; \zeta)\|^4 \mid X, A \right) \leq C(1+\|\theta - \theta^*\|^4 )$ for some positive constant $C$. 
\end{enumerate}
We have for $\bar{\theta}_t$ is identically defined as in Theorem \ref{thm:smooth-clt}, 
\begin{align} \label{eq:bahadur-rep}
\sqrt{t}\Sigma^{-1/2}(\bar{\theta}_t - \theta^*) &= \underbrace{\frac{1}{\sqrt{t}} \sum_{i=1}^{t-1} \Sigma_t^{-1/2} Q_i^t \xi_{\theta^*}(\theta^*; \zeta_i^*)}_{W} + \mathcal{O}_p\big(t^{-\alpha + \frac{1}{2}} + t^{-\frac{1}{2}\min\{\alpha,\beta_1,\beta_2\}} + t^{\alpha-1}\big),
\end{align}
where $\Sigma=H^{-1}SH^{-1}$, $\Sigma_t= \frac{1}{t}\sum_{i=1}^{t-1} Q_i^t S Q_i^t$, and $Q_i^t = \eta_i \sum_{j = i}^{t-1} \prod_{k = i+1}^j (I_d - \eta_k H)$ for $t>0$. For the main term on the right-hand side of \eqref{eq:bahadur-rep}, we have $\E[W]=0$, $\E[WW^\top]=I_d$. 

\end{theorem}}

We defer the proof details to Section~F of the supplement, where we decompose the remainder term into four quantities and provide their upper bounds. To derive the above decomposition, we need a certain level of continuity of the distribution of covariate $X$. 
{The assumption (a) can be verified under different policies with various combination of $\beta_1$ and $\beta_2$. Details for verifying (a) are provided in Sections~F.1 and F.2 of the supplement.} 
The assumption (b) is a fourth moment condition that strengthens Assumption~\ref{assum:gram}, enabling the use of the Martingale central limit theorem. To study the Bahadur representation, we require a generalization of the coupling we defined in \eqref{eq:coupling}. Consider the $(|\A|-1)$-simplex $S = \{(x_1, \dots, x_{|\A|}) \mid x_i \geq 0, \sum x_i = 1\}$. It has $|\A|$ vertices given by $V_i = (0, \dots, 0, 1, 0, \dots)$ where $1$ is in the $i$-th coordinate. Pick a point $P$ uniformly from $S$. For any categorical distribution with probability $(p_1, \dots, p_{|\A|})$, define $K = (p_1, \dots, p_{|\A|})$. The probability that $P$ lies in the sub-simplex with vertices $\{V_1, \dots, V_{i-1},V_{i+1} \dots , V_{|\A|}, K\}$ ($V_i$ is deleted) is exactly $p_i$. Thus, $K$ gives a partition of $S$ that has the required categorical distribution and we can use this to define the action $A$. Furthermore, given two different distributions $K, K^\prime$, the quantity $\P(A \neq A^\prime)$ is bounded by $C d_{\rm TV}(K, K^\prime)$, where $C$ is some constant which only depends on $|\A|$.

\begin{remark}
{The decay rate of the remainder term in \eqref{eq:bahadur-rep} explicitly demonstrate the effect of step size $\eta_t$ on how fast the limiting distribution of $\bar{\theta}_t$ converges to its limiting normal distribution. }Given the Bahadur representation of $\bar{\theta}_t$, we now emphasize the difference in the convergence rate of the adaptive SGD and the classical SGD results. 
{This remainder rate \eqref{eq:bahadur-rep} generally exhibits a slower rate compared to the i.i.d.~settings \eqref{eq:bahadur-rep-classical}, due to a critical term $t^{-\frac12\min\{\alpha,\beta_1,\beta_2\}}$. However, since $\beta_1$ and $\beta_2$ can be regarded as arbitrarily large in i.i.d.~settings, \eqref{eq:bahadur-rep} effectively generalizes \eqref{eq:bahadur-rep-classical}, recovering the remainder of the classical SGD in non-adaptive environments.} 
In Section F.1 of the supplement, we further establish a lower bound. 
In practice, we are confronted with the challenge that the underlying distribution for $(X,A,Y)$ is unknown. Therefore, we must rely on policies such as $\varepsilon$-greedy to learn from past observations, which inevitably leads to a slower rate of convergence in adaptive settings. This phenomenon is not unique to the $\varepsilon$-greedy policy and also arises under other policies, including exponential policies. In the next section, we study the remainder rates of the modified $\varepsilon$-greedy policy \eqref{eq:modified eps-greedy} and the exponential policy \eqref{eq:exp3} in linear regression, as established in Corollaries~\ref{corr:bahadur eps} and~\ref{corr:bahadur exp3}.

\end{remark}
\subsection{Online statistical inference} \label{sec:online-inf}
To provide statistical inference for the model parameter, we need to estimate the variance of $\hat\theta_t$, which is $H^{-1}SH^{-1}$, as we established in Theorem \ref{thm:smooth-clt}, in a fully online fashion. A few options have been provided from SGD inference literature, e.g., the plug-in estimator  \citep{chen2016statistical, chen2021b}, the batch-means estimator  \citep{chen2016statistical,zhu2021online}, the bootstrap estimator  \citep{fang2018online}. Among the above, the plug-in estimator is expected to achieve a very good numerical behavior as evident from classical SGD approaches. In this paper, we use the plug-in estimator for smooth loss functions $\ell$, and leave the other methods as interesting future work. In adaptive settings, the  online plug-in estimators for $S$ and $H$ are given by, 
\[
\hat S_n=\frac1n\sum_{t=1}^nw_t^2 \nabla \ell(\theta_{t-1}; \zeta_t)\nabla \ell(\theta_{t-1}; \zeta_t)^\top,\quad \hat H_n=\frac1n\sum_{t=1}^nw_t\nabla^2 \ell(\theta_{t-1}; \zeta_t).
\]
With the plug-in estimators $(\hat S_t, \hat H_t)$, an online plug-in inference procedure can be provided by replacing $S$ and $H$ in the asymptotic covariance matrix in Theorem \ref{thm:smooth-clt} with $(\hat S_t, \hat H_t)$.
In this section, we demonstrate the online plug-in inference procedure based on the limiting distribution of our proposed estimator $\bar{\theta}_t$ in Theorem \ref{thm:smooth-clt}. 

We establish the consistency of the plug-in estimator under the following Assumption~\ref{assum:plugin}, 
{which is simply a repetition of Assumption~\ref{assum:gram} and Assumption~\ref{assum:tv} with $\phi$ replaced by $\psi$ and with gradient replaced by Hessian.}
The proof is presented in Section~H of the supplement.

\begin{assumption}\label{assum:plugin}
For any action $A \in \A$ and covariate $X$, we assume that $\nabla^2 \ell(\theta; \zeta)$ exists and $\E \left(\|\nabla^2 \ell(\theta; \zeta)\|^2 \mid X, A \right)$ is bounded by $\psi(X)(1+\|\theta - \theta^*\|^2 )$ 
for some function $\psi(\cdot)$ such that $\E [\psi(X)] < \infty$. In addition, we have $\lim_{\theta \rightarrow \theta^*} \E[\Delta(X, \theta)\psi(X)] = 0$ where $\Delta(X,\theta)$ is defined in Assumption \ref{assum:gram}, and 
\begin{align*}
\lim_{\theta \rightarrow \theta^*} \E\left[\|\nabla^2 \ell(\theta; \zeta) - \nabla^2 \ell(\theta^*; \zeta)\|^2 | X,A \right] = 0, ~
\lim_{\theta \rightarrow \theta^*} \E\left[|w(\theta; X, A) - w(\theta^*; X, A)|^2 \psi(X) | A \right] = 0.
\end{align*}
\end{assumption}

\begin{proposition} \label{thm:plugin}
Under Assumption~\ref{assum:bound} to Assumption~\ref{assum:plugin}, the plug-in estimators are consistent, i.e., $\hat S_n \to S$ and $\hat H_n \to H$ in probability. 

\end{proposition}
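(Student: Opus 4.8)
The plan is to handle $\hat H_n$ and $\hat S_n$ through one common decomposition, adapting the \textit{i.i.d.}\ plug-in analysis of \cite{chen2016statistical} to the adaptive setting. Write $\hat G_n=\frac1n\sum_{t=1}^n g(\theta_{t-1};\zeta_t)$, where $g(\theta';\zeta)=w(\theta';X,A)\nabla^2\ell(\theta';\zeta)$ in the Hessian case and $g(\theta';\zeta)=w(\theta';X,A)^2\nabla\ell(\theta';\zeta)\nabla\ell(\theta';\zeta)^\top$ in the Gram case, and let $\bar g(\theta')$ denote $\E[g(\theta';\zeta_t)\mid\H_{t-1}]$ evaluated at $\theta_{t-1}=\theta'$. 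Since, conditionally on $\H_{t-1}$, the triple $\zeta_t$ is generated by $X_t\sim\P_X$, $A_t\sim\pi(X_t,\theta_{t-1})$ and $Y_t=Y_t(A_t)$, one gets $\bar g(\theta')=\nabla^2\L_{\theta'}(\theta')$ in the Hessian case and $\bar g(\theta')=\E_\P\big[\E_{\pi(X,\theta')}\big(w(\theta';X,A)^2\,\E_{\P_{Y\mid X}}[\nabla\ell(\theta';\zeta)\nabla\ell(\theta';\zeta)^\top\mid X,A]\mid X\big)\big]$ in the Gram case. By the oracle condition \eqref{eq:oracle} one has $\E_{\P_{Y\mid X}}[\nabla\ell(\theta^*;\zeta)\mid X,A]=0$, hence $\nabla\L_{\theta^*}(\theta^*)=0$ and $\xi_{\theta^*}(\theta^*;\zeta)=w(\theta^*;X,A)\nabla\ell(\theta^*;\zeta)$, so $\bar g(\theta^*)=H$ in the first case and $\bar g(\theta^*)=S$ in the second. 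It therefore suffices, in each case, to show that the ``fluctuation'' average $\frac1n\sum_{t=1}^n\big(g(\theta_{t-1};\zeta_t)-\bar g(\theta_{t-1})\big)$ and the ``bias'' average $\frac1n\sum_{t=1}^n\big(\bar g(\theta_{t-1})-\bar g(\theta^*)\big)$ both converge to $0$ in probability.

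For the bias average I would prove that $\bar g$ is continuous at $\theta^*$ and invoke $\theta_{t-1}\to\theta^*$ almost surely (from the analysis behind Theorem~\ref{thm:smooth-clt}, which under the step-size conditions also yields $\sup_t\E\|\theta_{t-1}-\theta^*\|^2<\infty$); then $\frac1n\sum_{t=1}^n\bar g(\theta_{t-1})\to\bar g(\theta^*)$ almost surely, since the average of an almost surely convergent sequence converges to its limit. In the Hessian case, continuity of $\bar g(\cdot)=\nabla^2\L_{\cdot}(\cdot)$ at $\theta^*$ is immediate from Assumption~\ref{assum:hessian}, which gives $\|\bar g(\theta')-H\|\le 2K\|\theta'-\theta^*\|$. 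In the Gram case, continuity follows by dominated convergence as $\theta'\to\theta^*$: the integrand converges by the conditional $L^2$-continuity of $\nabla\ell$ and of $w$ and the $\phi$-weighted total-variation continuity of $\pi$ in Assumption~\ref{assum:tv}, while it is dominated by a constant multiple of $\phi(X)$ by Assumptions~\ref{assum:bound} and~\ref{assum:gram}; for the linear model this reduces to the explicit computation already behind Theorem~\ref{prop:ls-reg} and Remark~\ref{rmk:smooth-clt}.

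For the fluctuation average, $M_t:=g(\theta_{t-1};\zeta_t)-\bar g(\theta_{t-1})$ is a martingale difference sequence for the filtration $\{\H_t\}$. In the Hessian case, Assumption~\ref{assum:plugin} gives $\E[\|\nabla^2\ell(\theta_{t-1};\zeta_t)\|^2\mid\H_{t-1}]\le\E[\psi(X)](1+\|\theta_{t-1}-\theta^*\|^2)$; together with $w_t\le\overline w$ and $\sup_t\E\|\theta_{t-1}-\theta^*\|^2<\infty$ this gives $\sup_t\E\|M_t\|^2<\infty$, and by orthogonality of martingale differences $\E\big\|\frac1n\sum_{t=1}^n M_t\big\|^2=\frac1{n^2}\sum_{t=1}^n\E\|M_t\|^2=\mathcal{O}(n^{-1})\to0$. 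Combined with the bias step, this gives $\hat H_n\to H$ in probability.

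The case $\hat S_n$ is the main obstacle: its summand carries $\|\nabla\ell(\theta_{t-1};\zeta_t)\|^2$, for which Assumptions~\ref{assum:gram}--\ref{assum:tv} provide only a conditional \emph{second} moment, so $M_t$ is controlled only in $L^1$ and the clean $L^2$ martingale bound above is no longer available. I would instead use a truncation argument anchored in uniform integrability of $\{M_t\}$. Writing $\nabla\ell(\theta_{t-1};\zeta_t)=\nabla\ell(\theta^*;\zeta_t)+r_t$, the family $\{\|\nabla\ell(\theta^*;\zeta_t)\|^2\}_t$ is uniformly integrable — in the linear model it is even identically distributed, equal to $\|X_t\|^2\cE_t^2$ — while $\|r_t\|^2\le\|X_t\|^4\|\theta_{t-1}-\theta^*\|^2$ is, almost surely and for all large $t$, dominated by the \textit{i.i.d.}\ integrable variables $\|X_t\|^4$ because $\theta_{t-1}\to\theta^*$; hence $\{\|\nabla\ell(\theta_{t-1};\zeta_t)\|^2\}_t$, and therefore $\{M_t\}$, is uniformly integrable. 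Truncating $M_t$ at a level $a_n\to\infty$ with $a_n=o(n)$ then splits $\frac1n\sum_{t=1}^n M_t$ into (i) a centered truncated martingale whose squared $L^2$ norm is $\mathcal{O}(a_n/n)\to0$, (ii) a tail remainder whose $L^1$ norm is at most $\sup_t\E[\|M_t\|\ID(\|M_t\|>a_n)]\to0$, and (iii) the corresponding conditional-mean correction, also bounded by this tail quantity; all three vanish, so $\frac1n\sum_{t=1}^n M_t\to0$ in probability, and with the bias step $\hat S_n\to S$ in probability.
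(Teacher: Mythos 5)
Your proposal is correct in substance but takes a genuinely different route from the paper's. The paper never forms a martingale decomposition: it couples each adaptive summand with an oracle summand evaluated at $(\theta^*, A^*)$ with $A^*\sim\pi(X_t,\theta^*)$ via the coupling \eqref{eq:coupling}, reuses the $M_2$ estimate \eqref{eq:bound_M2b} from the proof of Theorem~\ref{thm:smooth-clt} (and its $\psi$-analogue under Assumption~\ref{assum:plugin}) to get $L^2$ convergence of $w(\theta_{t-1};X_t,A_t)\nabla\ell(\theta_{t-1};\zeta_t)$ to $w(\theta^*;X_t,A^*)\nabla\ell(\theta^*;\zeta_t)$, upgrades this (Cauchy--Schwarz plus bounded second moments) to $L^1$ convergence of the averaged outer products and Hessians to their oracle counterparts, and finishes with the ordinary LLN applied to the i.i.d.\ oracle averages, whose means are exactly $S$ and $H$. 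You instead split each average into a martingale-difference fluctuation plus a bias term, prove continuity of the conditional mean at $\theta^*$, and kill the fluctuation by $L^2$ orthogonality (for $\hat H_n$) and a uniform-integrability/truncation weak law (for $\hat S_n$). What the paper's comparison-to-oracle device buys is precisely the avoidance of the UI machinery: only the \emph{difference} of weighted gradients must be small, so the conditional second moments of Assumptions~\ref{assum:gram}--\ref{assum:tv} suffice with no model-specific structure, whereas your treatment of $\hat S_n$ is the one place that leans on the linear model (i.i.d.\ $\|X_t\|^2\cE_t^2$ and the pointwise bound $\|r_t\|^2\le\|X_t\|^4\|\theta_{t-1}-\theta^*\|^2$). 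That reliance is removable: UI of $\{\|\nabla\ell(\theta^*;\zeta_t)\|^2\}$ holds in general because $\A$ is finite and the conditional law of $\nabla\ell(\theta^*;\zeta)$ given $(X,A=a)$ does not depend on $t$, and for the remainder the cleaner statement is $\E\|r_t\|^2\to 0$ (hence UI), which follows from the same estimates and implicit domination step the paper already uses; your ``a.s.\ eventually dominated'' phrasing by itself is weaker than what UI requires. Conversely, your $O(n^{-1})$ martingale bound for $\hat H_n$ is arguably cleaner than the paper's second pass through the coupling argument with $\psi$, and your decomposition makes explicit the identifications $\bar g(\theta^*)=H$ and $\bar g(\theta^*)=S$ that the paper leaves implicit in its LLN step.
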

{For constructing confidence intervals, we estimate the limiting covariance matrix $H^{-1}SH^{-1}$, for which Proposition~\ref{thm:plugin} establishes the consistency of the plug-in estimator $\hat H_n^{-1}\hat S_n\hat H_n^{-1}$. To avoid possible singularity of $\hat H_n$ in finite samples, we adopt a thresholded version: let $\hat H_n=U\hat\Lambda_nU^\top$ be its eigenvalue decomposition, and define $\widetilde H_n=U\widetilde\Lambda_nU^\top$ with $\widetilde\Lambda_{n,kk}=\max\{\kappa_1,\hat\Lambda_{n,kk}\}$ for $k=1,\ldots,|\A|p$, where $\kappa_1<\mu$ as $\mu$ defined in Assumption~\ref{assum:loss}.
 By construction, $\widetilde H_n$ is positive definite and consistent. Hence, for any $c\in\mathbb R^d$, a confidence interval for $c^\top\theta^*$ is obtained by projecting $\bar\theta_t$ and $\widetilde H_t^{-1}\hat S_t\widetilde H_t^{-1}$ onto $c$, yielding an asymptotically exact interval at level $q$ with $z$-score $z_{q/2}$, as stated in the corollary below.
}

{\begin{corollary}
    Under Assumption~\ref{assum:bound} to Assumption~\ref{assum:plugin}, as $t\rightarrow\infty$,
\begin{equation*}
    \mathbb{P}\left\{c^{\top} \theta^* \in\left[c^{\top} \bar{\theta}_t-\frac{z_{q / 2}}{\sqrt{t}} \sqrt{c^{\top} \widetilde{H}_{t}^{-1} \hat{S}_{t} \widetilde{H}_{t}^{-1} c}, \ 
    c^{\top} \bar{\theta}_t+\frac{z_{q / 2}}{\sqrt{t}} \sqrt{c^{\top} \widetilde{H}_{t}^{-1} \hat{S}_{t} \widetilde{H}_{t}^{-1} c}\right]\right\} \rightarrow 1-q.
\end{equation*}
\end{corollary}}

\section{Practical Examples} \label{sec:var-discuss}

{In this section, we instantiate our general theoretical framework using two distinct policies: a \emph{modified} $\varepsilon$-greedy policy and an exponential policy. We apply these policies to the linear regression (Example \ref{eg:ls-reg}) and quantile regression (Example \ref{eg:quant-reg}) settings introduced earlier. We verify Assumptions \ref{assum:bound}--\ref{assum:tv}, and derive explicit analytic forms of the asymptotic covariance matrices under Gaussian covariates and discuss the choice of weighting schemes}. {Due to space constraints, we detail the results for linear regression in the main text and relegate the derivation and verification for quantile regression to Section E of the supplement.}
The verification of the logistic regression (Example \ref{eg:log-reg}) follows from a similar procedure.
\subsection{Modified $\varepsilon$-greedy policy}
{In this section, we present the main results under a modified $\varepsilon$-greedy policy instead of  its original version. The original $\varepsilon$-greedy policy assigns the probability of selecting an action $A_t$ to be}
\begin{align} \label{eq:eps-greedy}
\Pr(A_t = 0 \mid X_t, \theta_{t-1}) = (1-\varepsilon)\ID \left\{X_t^\top \theta_{0,t-1}> X_t^\top \theta_{1,t-1}\right\} + \frac{\varepsilon}{2},
\end{align}
for some constant $\varepsilon \in (0,1)$. Here, the $\varepsilon$ is a pre-specified constant that helps address the exploration-and-exploitation dilemma, which is often set as some small constant close to zero. 
{However, under the degenerate model where $\theta^*_{0}=\theta^*_{1}$, the $\varepsilon$-greedy policy \eqref{eq:eps-greedy} introduces a discontinuity at $\theta_{t-1}=\theta^*$. Additionally, Assumption \ref{assum:hessian} does not hold, as the Hessian $\nabla^2\L_{\theta^\prime}(\theta)$ is discontinuous in $\theta'$ near $\theta^\prime=\theta^*$. Consequently, the asymptotic normality results and the inference procedure are invalidated. This discontinuity arises because, even though $\theta_{t-1}$ may converge to $\theta^*$, under \eqref{eq:eps-greedy}, $\theta'=\theta_{t-1}\neq \theta^*$ corresponds to the non-degenerate objective $\L_{\theta^\prime}(\theta)$ while $\theta'=\theta^*$ corresponds to the degenerate one. As a result, their respective landscapes and asymptotic properties differ by nature.

Many related work has discussed the inference problems in this degenerate model. For example, \cite{zhang2021statistical} discusses how the quality of the Gaussian approximation degrades as the true data generating
process gets closer to the “degenerate” model.
\cite{luedtke2016statistical} addresses the challenge posed by ``exceptional laws'', where treatment effects are either zero or the model is non-unique, complicating the construction of pathwise differentiable estimators and valid inference in such degenerate settings.

The aforementioned challenges necessitate a modified $\varepsilon$-greedy policy applicable to both degenerate and non-degenerate models. Specifically, this policy is designed to facilitate an asymptotic transition to the degenerate regime when the model parameters are identical. Drawing inspiration from the Hodges estimator, we propose the following modified policy
\begin{align} \label{eq:modified eps-greedy}
\Tilde{\Pr}(A_t = 0 \mid X_t, \theta_{t-1}) 
&= \Pr(A_t = 0 \mid X_t, \theta_{t-1})\cdot\ID\left\{\|\theta_{0,t-1}-\theta_{1,t-1}\|>t^{-\frac{\alpha}{4}}\right\}\nonumber\\
\quad&+\frac{1}{2}\cdot\ID\left\{\|\theta_{0,t-1}-\theta_{1,t-1}\|\leq t^{-\frac{\alpha}{4}}\right\},
\end{align}
where $\Pr(A_t = 0 \mid X_t, \theta_{t-1})$ corresponds to the original $\varepsilon$-greedy policy defined in \eqref{eq:eps-greedy}.
This modified policy employs a thresholding rule on $\|\theta_{0,t-1}-\theta_{1,t-1}\|$ to distinguish between the degenerate and non-degenerate regimes. A formal theoretical demonstration is provided in Lemmas J.3 and J.4 of the supplement.
}

This setting can be relaxed to a deterministic sequence $\{\varepsilon_t\}$ which converges to some constant $\varepsilon_\infty \in (0,1)$, and we defer the technical details of $\varepsilon_t$ to Section G in the supplement. 
In the current work, we focus on policies that depend only on $X_t,\theta_{t-1}$ for simplicity. It may be relaxed to $A_t \sim \pi \left(X_{t}, \Phi_{t-1}\right)$ for other statistics $\Phi_{t-1}$ relying on the history $\theta_0, \theta_1, \cdots, \theta_{t-1}$, e.g., the running average of the $\{\theta_{s}\}_{s=0}^{t-1}$, which we leave for future works.

We now use the linear regression model in Example \ref{eg:ls-reg} with random design as a special case of our main result that has been presented in Theorem \ref{thm:smooth-clt}. We specify $w_t$ as a pre-specified function of $\Pr(A_t \mid X_t, \theta_{t-1})$, i.e., $w_t(\theta_{t-1};X_t, A_t) = \varphi(\Pr(A_t \mid X_t, \theta_{t-1}))$. The following Theorem \ref{prop:ls-reg} provides a new way to determine further the optimal weighting scheme to minimize the asymptotic variance of the weighted ASGD estimator.

To further illustrate our assumptions and central limit theorem result in Theorem~\ref{thm:smooth-clt}, we validate them under two examples we mentioned above, i.e., linear regression (Example~\ref{eg:ls-reg}) and quantile regression (Example~\ref{eg:quant-reg}), with the modified $\varepsilon$-greedy in \eqref{eq:modified eps-greedy}. As a result, Theorem~\ref{thm:smooth-clt} holds for these two examples. 
{Here we only demonstrate the results for linear regression and relegate the results for quantile regression and their verification to Section E of the supplement.} 

In Corollary~\ref{corr:ls-reg} below, we demonstrate that Assumptions~\ref{assum:bound}--\ref{assum:tv} are quite natural and can be satisfied by the linear regression example. Before this,
in order to describe the decaying rate of a probability density function and its (sub)gradients, we use the definition of \emph{rapidly decreasing} functions in the above corollary, which is also known as Schwartz functions. The definition captures the properties of the functions whose derivatives of any order decrease faster than any reciprocal power of $x$ as $x$ tends to infinity. Detailed definitions and discussions are relegated to Section~J of the supplement. 
 {The following corollary shows the asymptotic normality of Theorem \ref{thm:smooth-clt} can be applied to linear regression under the modified $\varepsilon$-greedy policy. The similar results for quantile regression are demonstrated in Corollary~E.1 of the supplement.}

\begin{corollary} \label{corr:ls-reg}
{Using the modified $\varepsilon$-greedy policy in \eqref{eq:modified eps-greedy}, for the linear regression example we used in Example~\ref{eg:ls-reg}, assume that the covariate $X$ is sub-Gaussian} and $\E [X X^\top] \succ 0$. Further assume that the probability density function of $X$, $p(x)$, is smooth and rapidly decreasing, and the weight $
  w_t(\theta_{t-1};X_t, A_t) = \varphi(\Pr(A_t \mid X_t, \theta_{t-1}))$ where the function $\varphi(\cdot): (0,1) \mapsto \R^+$ is continuous. 
  Under the above conditions,
Assumptions~\ref{assum:bound}--\ref{assum:tv} are satisfied and therefore Theorem~\ref{thm:smooth-clt} holds. 
\end{corollary}\begin{remark}
{We emphasize that, under the modified $\varepsilon$-greedy policy, the asymptotic normality established in Theorem \ref{thm:smooth-clt} is pointwise but not uniform across all underlying distributions $\P \in \mathbf{P}$. This limitation is a consequence of the superefficiency inherent in the construction of the Hodges estimator. Specifically, there exists no universal threshold $t_0$ such that the approximation error of $\sqrt{t}(\overline{\theta}_{t}-\theta^{*})$ to its limiting distribution remains uniformly bounded for all $t > t_0$ across the entire parameter space. We provide a detailed discussion and illustration of this phenomenon in Section I.1 of the supplementary material.
For a contrast, we refer readers to Remark \ref{rem:eps}, where we demonstrate that the exponential policy admits uniform asymptotic normality. 
}
\end{remark}

More specifically, when $X$ is Gaussian, we can derive  an explicit analytic forms of the Hessian matrix $H$ and Gram matrix $S$ that match their definitions in Theorem~\ref{thm:smooth-clt}.

\begin{theorem} \label{prop:ls-reg}
In the linear regression Example~\ref{eg:ls-reg} with the modified $\varepsilon$-greedy policy in \eqref{eq:modified eps-greedy}, assume that $\varphi(\cdot): (0,1) \mapsto \R^+$ is continuous, and $X_t \sim \N(\mu, I_p)$. {The ASGD estimator $\bar{\theta}_t=t^{-1} \sum_{s=0}^{t-1} \theta_s$ converges to $\theta^*$ almost surely}
and, as $ t\rightarrow\infty$, 
\begin{align*}
&\sqrt{t}(\bar{\theta}_t - \theta^*) \overset{d}{\rightarrow} \mathcal{N}(0, H^{-1}SH^{-1}),\;
\text{where }&S& = \begin{bmatrix}
S_0& 0 \\
0 & S_1
\end{bmatrix},~H = \begin{bmatrix}
H_0& 0 \\
0 & H_1
\end{bmatrix}.
\end{align*} 
\begin{itemize}
    \begin{small}
    \item Non-degenerate model:
        \begin{align*}
        S_0 &= \sigma^2\left(\Big(1-\frac{\varepsilon}{2}\Big) \varphi^2\Big(1-\frac{\varepsilon}{2}\Big) G_0^* + \frac{\varepsilon}{2} \varphi^2\Big(\frac{\varepsilon}{2}\Big)  G_1^* \right), &S_1& = \sigma^2\left(\frac{\varepsilon}{2} \varphi^2\Big(\frac{\varepsilon}{2}\Big) G_0^* + \Big(1-\frac{\varepsilon}{2}\Big) \varphi^2\Big(1-\frac{\varepsilon}{2}\Big) G_1^*\right), \\
H_0 &= \Big(1-\frac{\varepsilon}{2}\Big) \varphi\Big(1-\frac{\varepsilon}{2}\Big) G_0^* + \frac{\varepsilon}{2} \varphi\Big(\frac{\varepsilon}{2}\Big)  G_1^* , \; \; &H_1& = \frac{\varepsilon}{2} \varphi\Big(\frac{\varepsilon}{2}\Big) G_0^* + \Big(1-\frac{\varepsilon}{2}\Big) \varphi\Big(1-\frac{\varepsilon}{2}\Big) G_1^*,\\
G_0^* &= {\Phi \left(a^*\right) \left(I_p+\mu\mu^\top\right) + \frac{1}{\sqrt{2\pi}}a^* e^{-\frac{a^{*2}}{2}}\nu^*\nu^{*\top}},&G_1^*&= \left(1 - \Phi \left(a^*\right)\right) \left(I_p+\mu\mu^\top\right) - \frac{1}{\sqrt{2\pi}}a^* e^{-\frac{a^{*2}}{2}}\nu^*\nu^{*\top},
    \end{align*}
    {and $\nu^*=(\theta^*_{0} - \theta^*_{1})/\|\theta^*_{0} - \theta^*_{1}\|$, $a^* = \mu^\top \nu^*$, and $\Phi$ is the cumulative distribution function of standard normal distribution. }

    \item Degenerate model: $
        S_0=S_1=\frac{\sigma^2}{2}\varphi^2\Big(\frac{1}{2}\Big)\left(I_p+\mu\mu^\top\right)$, ~$ 
        H_0=H_1=\frac{1}{2}\varphi\Big(\frac{1}{2}\Big)\left(I_p+\mu\mu^\top\right).$
    \end{small}
\end{itemize}
\end{theorem}

The proof of Theorem \ref{prop:ls-reg} is provided in Section~D.2 of the supplement, by verifying the assumptions and calculating the covariance matrices. Before we discuss its implications, we first illustrate the definition of $\L_{\theta^\prime}(\theta)$ in \eqref{eq:custom-loss} under the special case of Theorem \ref{prop:ls-reg}. 
\begin{remark}\label{rmk:smooth-clt}
{Under the stated conditions in Theorem~\ref{prop:ls-reg}}, we have for any $\varepsilon$ defined in the modified $\varepsilon$-greedy policy in (\ref{eq:modified eps-greedy}),
\begin{align*}
\L_{\theta^\prime}(\theta) &= (\theta^*-\theta)^\top {G_{\theta^\prime} }(\theta^*-\theta)+\frac{\sigma^2}{2} \left(\big(1-\frac{\varepsilon}{2}\big) \varphi\big(1-\frac{\varepsilon}{2}\big) + \frac{\varepsilon}{2} \varphi\big(\frac{\varepsilon}{2}\big)+\frac{1}{2} \varphi\big(\frac{1}{2}\big)\right),
\end{align*}
where $w_t(\theta_{t-1};X_t, A_t) = \varphi(\Pr(A_t \mid X_t, \theta_{t-1}))$,
{and $G_{\theta^\prime}$ is a positive definite matrix which is determined by $\theta^\prime$.
Therefore, we can clearly see that $\theta^*$ is a minimizer of $\L_{\theta^\prime}(\theta)$ no matter what $\theta^\prime$ is}, which is the desirable property we mentioned before. 
\end{remark}
{In light of Theorem \ref{prop:ls-reg}, we specifically consider a certain class of modified $\varepsilon$-greedy policies with weighting schemes characterized by $\varphi$, to compare the corresponding asymptotic covariance matrices in the degenerate and non-degenerate models.} 
We specify  $\varphi_{\gamma}(p) = (|\A|p)^{\gamma}$ as a class of power functions parameterized by a constant $\gamma$. This class of weights covers the following three popular weighting schemes: {\hyperref[eq:ipw-weight]{\ipw}} as $\gamma = -1$, {\hyperref[eq:sqrt-weight]{\sipw}} as $\gamma=-1/2$, and {\hyperref[eq:vanilla]{\vanilla}} as $\gamma = 0$, up to some constants. 

{For the degenerate model, we notice that 
the explicit form of covariance matrix $H^{-1}SH^{-1}$ in Theorem \ref{prop:ls-reg} is not related to the weighting scheme $\varphi$, implying the three popular weighting schemes all have the same covariance matrix. For the non-degenerate model,} 
the explicit form of the covariance matrix in Theorem \ref{prop:ls-reg} appears to be complicated at first sight, which explains why the literature mainly focuses on {\hyperref[eq:ipw-weight]{\ipw}} and {\hyperref[eq:sqrt-weight]{\sipw}} that either keeps the Hessian matrix $H$ as constant or stabilizes the estimator by keeping the Gram matrix $S$ as constant, respectively. Notably, the behavior of general $\gamma$ can be analyzed once we notice that $H_0, H_1, G_0, G_1$ all have the form $\tilde b(I+\mu\mu^\top) + \tilde c\nu^*\nu^{*\top}$ for some constants $\tilde b$ and $\tilde c$, as thus they can be simultaneously diagonalized. With details due in Section~D.3  of the supplement, we can explicitly perform the eigendecomposition of the asymptotic covariance matrix $H^{-1}SH^{-1}$. We can also show that when varying $\gamma$, 
{the eigenvectors stay fixed and each eigenvalue exhibits the following form with some $b \in (0,B)$ where $B$ is some constant,\begin{align}
g(\gamma) = \frac{(1-\varepsilon/2)^{1+2\gamma}b + (\varepsilon/2)^{1+2\gamma}(B-b)}{\big((1-\varepsilon/2)^{1+\gamma}b +(\varepsilon/2)^{1+\gamma}(B-b)\big)^2}. \label{eq:cov-poly-main}
\end{align}}
{Based on the settings of Theorem \ref{prop:ls-reg}, we have already derived the analytic closed form of the asymptotic distribution with the explicit expression of asymptotic covariance and its eigenvalue decomposition. Building on this result,}
 we will discuss the impact of different $\gamma$, especially for the three choices of weight $w_t$ we mentioned before, and the impact of different $\varepsilon$ which measures the trade-off between exploration and exploitation, respectively.

\begin{remark}[Discussion on $\gamma$ in the non-degenerate model]\label{remark:discuss on gamma}
    In practice, for the $\varepsilon$-greedy policy, one specifies $\varepsilon$ as some small constant. When $\varepsilon$ gets close to $0$, it can be inferred from \eqref{eq:cov-poly-main} that $\gamma \geq - 1/2$ leads to a finite covariance matrix; {this includes {\hyperref[eq:vanilla]{\vanilla}} as $\gamma=0$ and {\hyperref[eq:sqrt-weight]{\sipw}} as $\gamma=-1/2$ but excludes {\hyperref[eq:ipw-weight]{\ipw}} as $\gamma=-1$.  Meanwhile, $\gamma<-1/2$ leads to an infinite covariance matrix. Furthermore, the minimum of \eqref{eq:cov-poly-main} is obtained at $\gamma = 0$ for all $b \in (0,B)$. }
Therefore, under the settings in Theorem~\ref{prop:ls-reg}, {\hyperref[eq:vanilla]{\vanilla}} has an asymptotic covariance matrix that is dominated by any other asymptotic covariance matrix obtained from a power-law weighted scheme, $\varphi_{\gamma}(p) = (|\A|p)^{\gamma}$. The following Corollary \ref{prop:vanilla-cov-mat} concludes the above discussion, which is proved in Section~D.4 of the supplement. 
\end{remark}
\begin{corollary}[Optimal weights in non-degenerate linear regression] \label{prop:vanilla-cov-mat} Under the assumptions of Theorem \ref{prop:ls-reg},  the \vanilla~SGD  has the optimal asymptotic covariance matrix in the linear regression setting, i.e., $\Sigma_{\mathrm{vnl}} \preceq \tilde{\Sigma}$, 
where $\Sigma_{\mathrm{vnl}}$ is the asymptotic covariance matrix of vanilla SGD and $\tilde{\Sigma}$ is the asymptotic covariance matrix under any other weighting function $\varphi$ where $  w_t(\theta_{t-1};X_t, A_t) = \varphi(\Pr(A_t \mid X_t, \theta_{t-1}))$. 
\end{corollary}
{The above Remark \ref{remark:discuss on gamma} and Corollary \ref{prop:vanilla-cov-mat} both suggest that, when applying the modified $\varepsilon$-greedy policy in linear regression with normally distributed covariates $X$, {\hyperref[eq:vanilla]{\vanilla}} and {\hyperref[eq:sqrt-weight]{\sipw}} are preferred over {\hyperref[eq:ipw-weight]{\ipw}}. Now we can further demonstrate the Bahadur representations on linear regression under the modified $\varepsilon$-greedy.}
\begin{corollary}\label{corr:bahadur eps}
    Under the modified $\varepsilon$-greedy policy and the conditions in Corollary \ref{corr:ls-reg}, the rate of the remainder term in \eqref{eq:bahadur-rep} is upper bounded by $\mathcal{O}_p\big(t^{-\alpha + \frac{1}{2}} + t^{-\frac{\alpha}{4}} + t^{\alpha-1}\big)$, which is slower than \eqref{eq:bahadur-rep-classical} in the \emph{i.i.d.} settings.
    If we minimize the order of the rate over $\alpha\in(\frac12,1)$, we have that the optimal convergence rate of the remainder term is $\mathcal{O}_p(t^{-0.2})$ with $\alpha=0.8$. 
    {A corresponding simulation is conducted in Figure B.11 of the supplement.
    Moreover, a matching lower bound for the remainder is established in Section F.1 of the supplement.}
\end{corollary}

\subsection{Exponential  policy and other policies}
Our analysis is not restricted to an $\varepsilon$-greedy policy, but indeed encompasses general randomized policies including $\varepsilon_t$-greedy policy where $\varepsilon_t\rightarrow 0$, Thompson Sampling, exponential policies (Boltzmann exploration), etc. For $\varepsilon_t$-greedy, we defer additional discussion and technical details to Section~G of the supplement. Other than that, exponential policies employ a softmax function, and the practitioner chooses action by
\begin{equation}\label{eq:exp3}
    \Pr(A_t=a\mid X_t,\theta_{t-1})=\frac{e^{ X_t^\top\theta_{a,t-1}}}{\sum_{a^\prime\in\A}e^{ X_t^\top\theta_{a^\prime,t-1}}}.
\end{equation}
Such exponential weighting mechanism is often considered in adversarial bandit and multinomial logit bandit modeling (see, e.g., LinEXP3 in \citealp{lattimore2020bandit}). 
{We adopt a clipping $\text{clip}_{\delta}(p_a) = \max\{\delta, p_a\}$ to ensure Assumption \ref{assum:bound} holds, that is, bound the selection probability of action $a$ away from 0, where $\delta>0$ and $p_a$ refers to the arm selection probability defined in \eqref{eq:exp3}. 
In the following Corollary  \ref{corr:exp3 ls-reg}, we use this exponential policy to illustrate our main results under the linear regression (Example \ref{eg:ls-reg}) and relegate the results for quantile regression (Example \ref{eg:quant-reg}) to Section E of the supplement.
\begin{corollary}\label{corr:exp3 ls-reg}
    Under the stated conditions in Corollary~\ref{corr:ls-reg} with the exponential policy in \eqref{eq:exp3} applied to linear regression (Example~\ref{eg:ls-reg}), we further assume the weight $
  w_t(\theta_{t-1};X_t, A_t) = \varphi(\Pr(A_t \mid X_t, \theta_{t-1}))$ where the function $\varphi(\cdot): (0,1) \mapsto \R^+$ is  differentiable, and $\varphi^\prime(\epsilon)$ is bounded for $\epsilon\in[\delta_0,1)$ where $\delta_0$ is the clipping parameter applied to $\text{clip}_{\delta_0}$. 
Under the above conditions, 
Assumptions~\ref{assum:bound}--\ref{assum:tv} are satisfied and therefore Theorem~\ref{thm:smooth-clt} holds. 
\end{corollary}
\begin{remark}\label{rem:eps}
{
In contrast to $\varepsilon$-greedy strategies, the asymptotic normality established in Corollary~\ref{corr:exp3 ls-reg} is uniform over the class of distributions $\mathbf{P}$. This uniformity arises because the exponential policy ensures that the joint density of $(X, A, Y)$ is differentiable in quadratic mean, thereby satisfying the Local Asymptotic Normality (LAN) property (see Theorem 7.2 in \citealp{van2000asymptotic}). 
In contrast, $\varepsilon$-greedy type policies violate this property due to their inherent discontinuity and therefore precludes uniform asymptotic normality. 
A rigorous justification for this uniformity is provided in Section~I.2 of the supplementary material.}
\end{remark}

In addition to the modified $\varepsilon$-greedy, we have already demonstrated  the exponential policy is also a candidate policy satisfying Assumptions \ref{assum:bound}--\ref{assum:tv} in some application scenarios. The similar results for quantile regression are demonstrated in Corollary~E.2 of the supplement. Now, we will further illustrate our Bahadur representation results for this policy under the linear regression setting (Example \ref{eg:ls-reg}).

\begin{corollary}\label{corr:bahadur exp3}
    Under the exponential  policy and the conditions in Corollary \ref{corr:exp3 ls-reg}, the rate of the remainder term in \eqref{eq:bahadur-rep} is $\mathcal{O}_p\big(t^{-\alpha + \frac{1}{2}} + t^{-\frac{\alpha}{4}} + t^{\alpha-1}\big)$ 
     which holds for both the degenerate and non-degenerate models.
\end{corollary}
}
}

\section{Simulations and Real Data Analysis}\label{sec:num}
In this section, we investigate the empirical performance of the proposed estimators on normal approximation. We further construct the confidence intervals using a plug-in estimator of the asymptotic covariance matrices and report their coverage rates. Lastly, we validate the performance of the proposed estimator and inference procedure on a logistic regression of a real dataset. Due to the space limitation, 
{we will demonstrate the results for linear regression in the main text, while report the results for quantile regression and other tables and figures in Section B of the supplement.}

\subsection{Normal approximation with modified $\varepsilon$-greedy}\label{sec:normal-approximation}

We verify Theorem~\ref{thm:smooth-clt} under linear regression and quantile regression (Example~\ref{eg:ls-reg} and Example~\ref{eg:quant-reg}). For both examples, the true parameter $\theta^* \in \R^{20}$ and
\begin{align*}
Y_t = (1-A_t) X_t^\top \theta^{*}_{[1:10]} + A_t X_t^\top \theta^*_{[11:20]} + \cE_t.
\end{align*}
In the numerical experiments below, we fix the sample size as $200,000$. The covariate $X_t \sim \N(0, I_{10})$ and the noise $\{\cE_s\}_{s=1}^t$ is \emph{i.i.d.} with standard deviation $\sigma = 0.1$. We use $\varepsilon$-greedy policy~\eqref{eq:modified eps-greedy} to select actions, and set $\varepsilon = 0.02$. For the SGD update~\eqref{eq:weighted-sgd}, we specify the step sizes as $\eta_t = \eta\cdot\max(t, 300)^{-\alpha}$. As indicated in Theorem~\ref{thm:ls-fs}, we set $\alpha = 0.8$ for both linear regression and quantile regression. We compare three weighting schemes below, {\hyperref[eq:ipw-weight]{\ipw}, {\hyperref[eq:sqrt-weight]{\sipw}, {\hyperref[eq:vanilla]{\vanilla} under the modified $\varepsilon$-greedy \eqref{eq:modified eps-greedy} policy. 
\begin{figure}[!t]
    \centering
    \subfigure[\vanilla, Arm $0$]{
    \includegraphics[width=0.31\textwidth]{./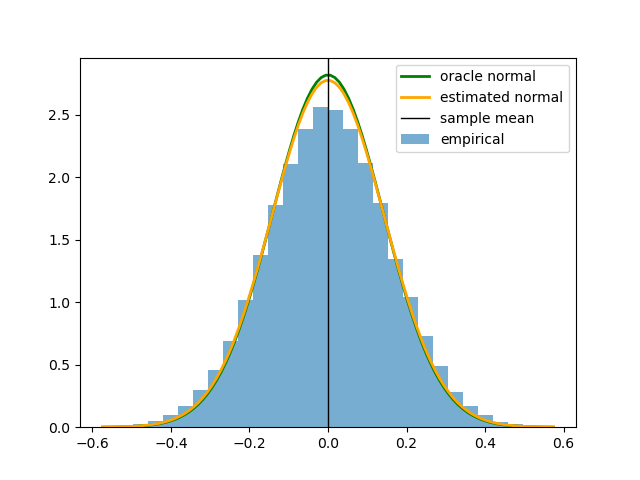}}
    \subfigure[\sipw, Arm $0$]{
    \includegraphics[width=0.31\textwidth]{./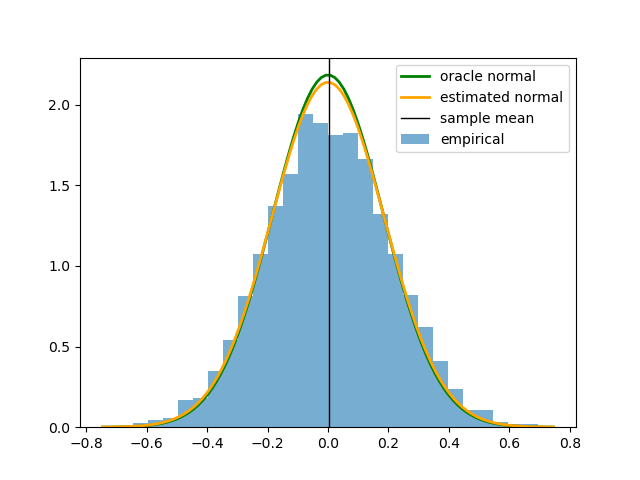}}
    \subfigure[\ipw, Arm $0$]{
    \includegraphics[width=0.31\textwidth]{./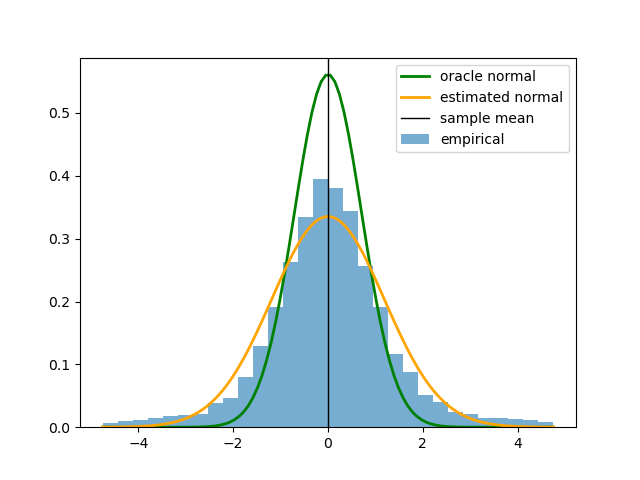}}\\
    \subfigure[\vanilla, Arm $1$]{
    \includegraphics[width=0.31\textwidth]{./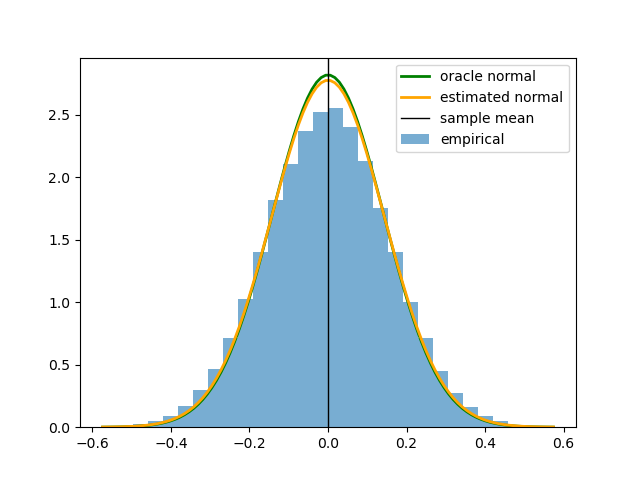}}
    \subfigure[\sipw, Arm $1$]{
    \includegraphics[width=0.31\textwidth]{./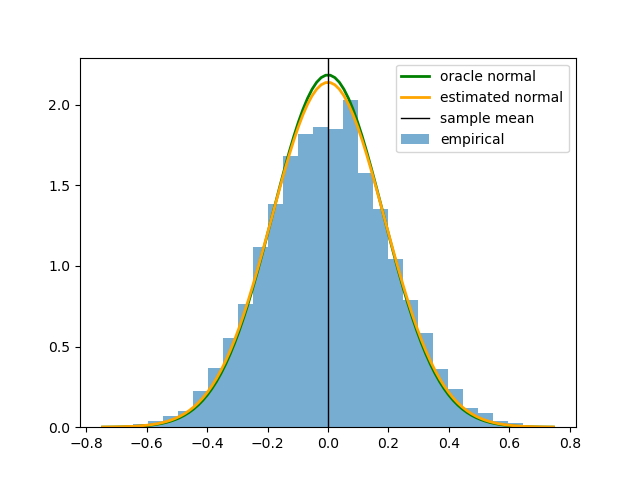}}
    \subfigure[\ipw, Arm $1$]{
    \includegraphics[width=0.31\textwidth]{./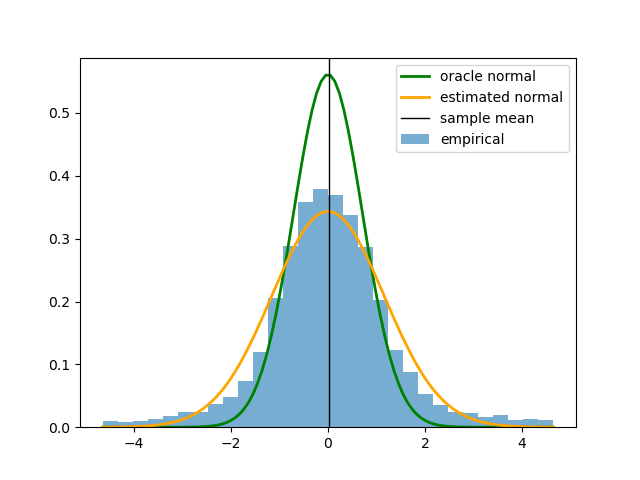}}\\
    \caption{SGD on a non-degenerate linear regression model with the modified $\varepsilon$-greedy policy using different weight schemes. We report the empirical distribution of each action's first dimension of $\sqrt{t}(\bar{\theta}_t - \theta^*)$ based on $10,000$ Monte-Carlo simulations.
    We plot the density of a zero-mean normal distribution that matches the second-order moments.}
    \label{fig-app-linear}
\end{figure}
\begin{figure}[!t]
    \centering
    \subfigure[\vanilla, Arm $0$]{
    \includegraphics[width=0.31\textwidth]{./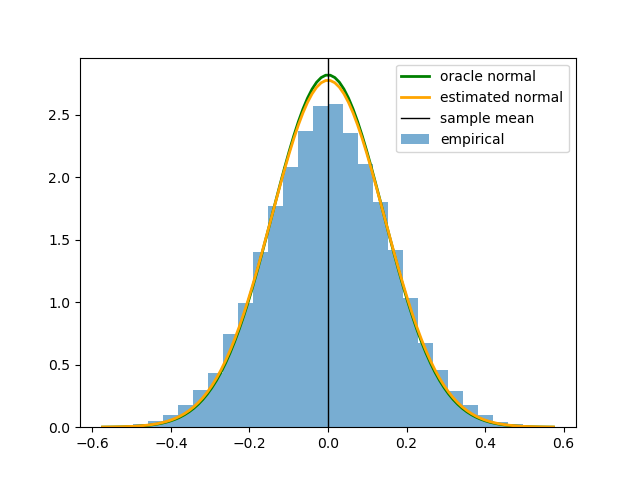}}
    \subfigure[\sipw, Arm $0$]{
    \includegraphics[width=0.31\textwidth]{./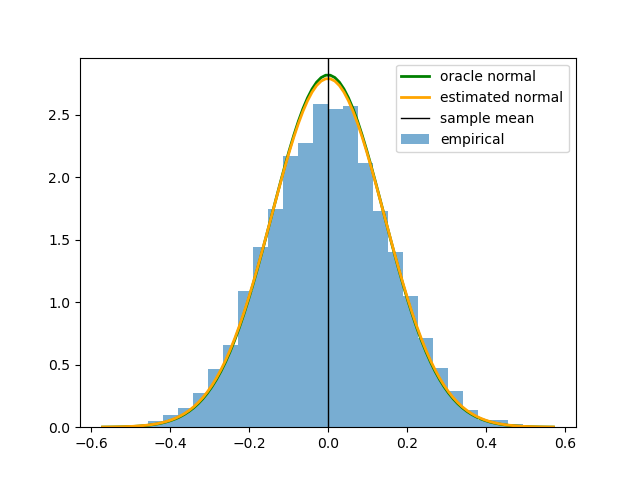}}
    \subfigure[\ipw, Arm $0$]{
    \includegraphics[width=0.31\textwidth]{./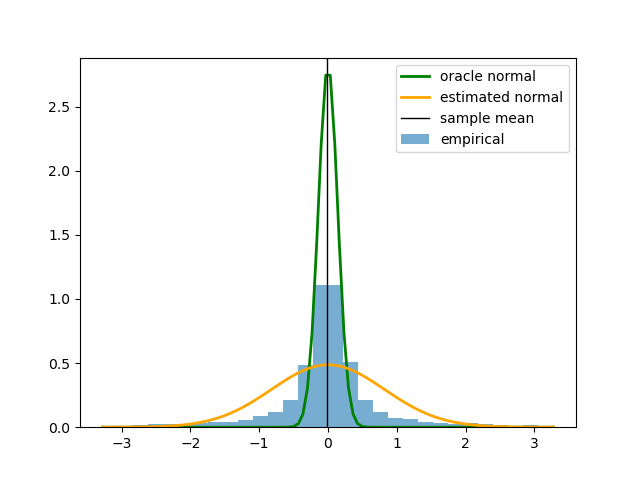}}\\
    \subfigure[\vanilla, Arm $1$]{
    \includegraphics[width=0.31\textwidth]{./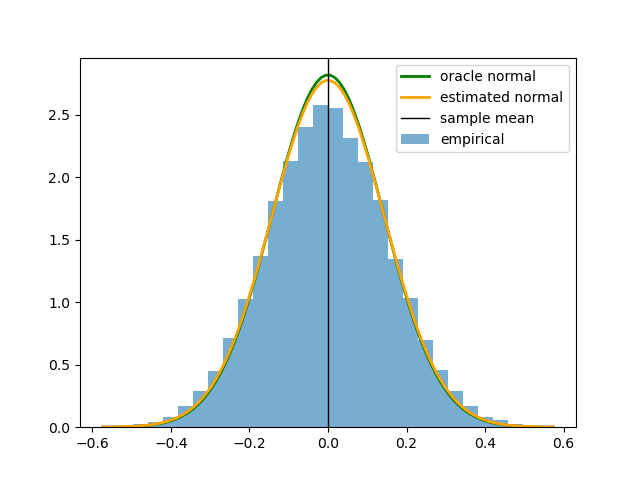}}
    \subfigure[\sipw, Arm $1$]{
    \includegraphics[width=0.31\textwidth]{./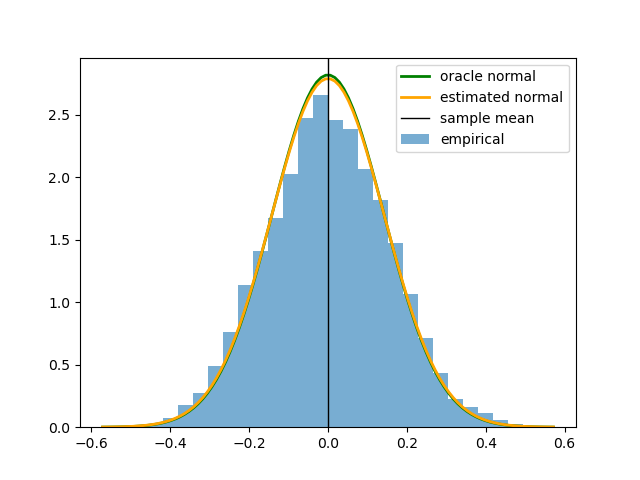}}
    \subfigure[\ipw, Arm $1$]{
    \includegraphics[width=0.31\textwidth]{./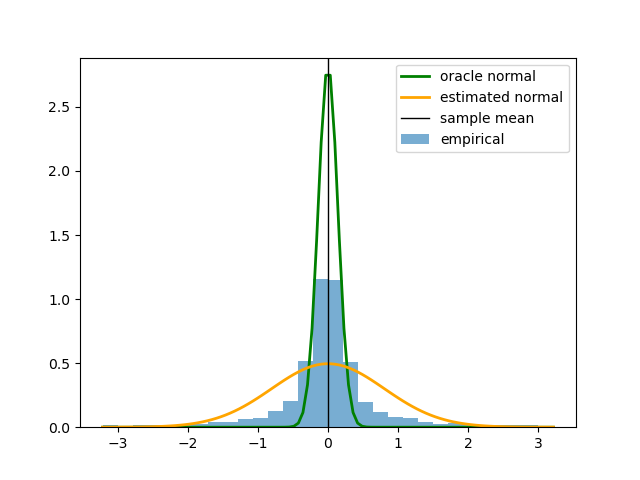}}\\
    \caption{SGD on a degenerate linear regression model with the modified $\varepsilon$-greedy policy using different weight schemes. We report the empirical distribution of each action's first dimension of $\sqrt{t}(\bar{\theta}_t - \theta^*)$ based on  $10,000$ Monte-Carlo simulations.
    We plot the density of a zero-mean normal distribution that matches the second-order moments.}
    \label{fig-app-linear1}
\end{figure}
\begin{figure}[!t]
    \centering
    \subfigure{
    \includegraphics[width=1\textwidth]{./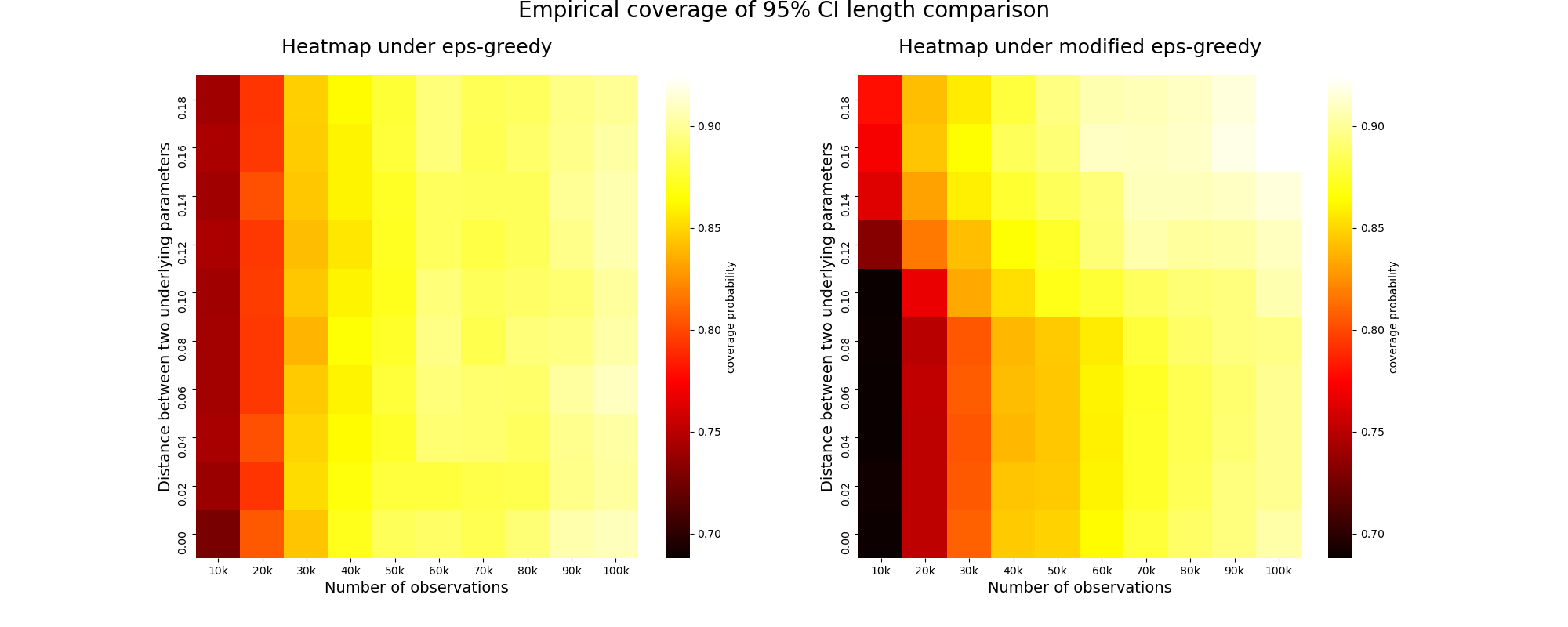}}
    \subfigure{
    \includegraphics[width=1\textwidth]{./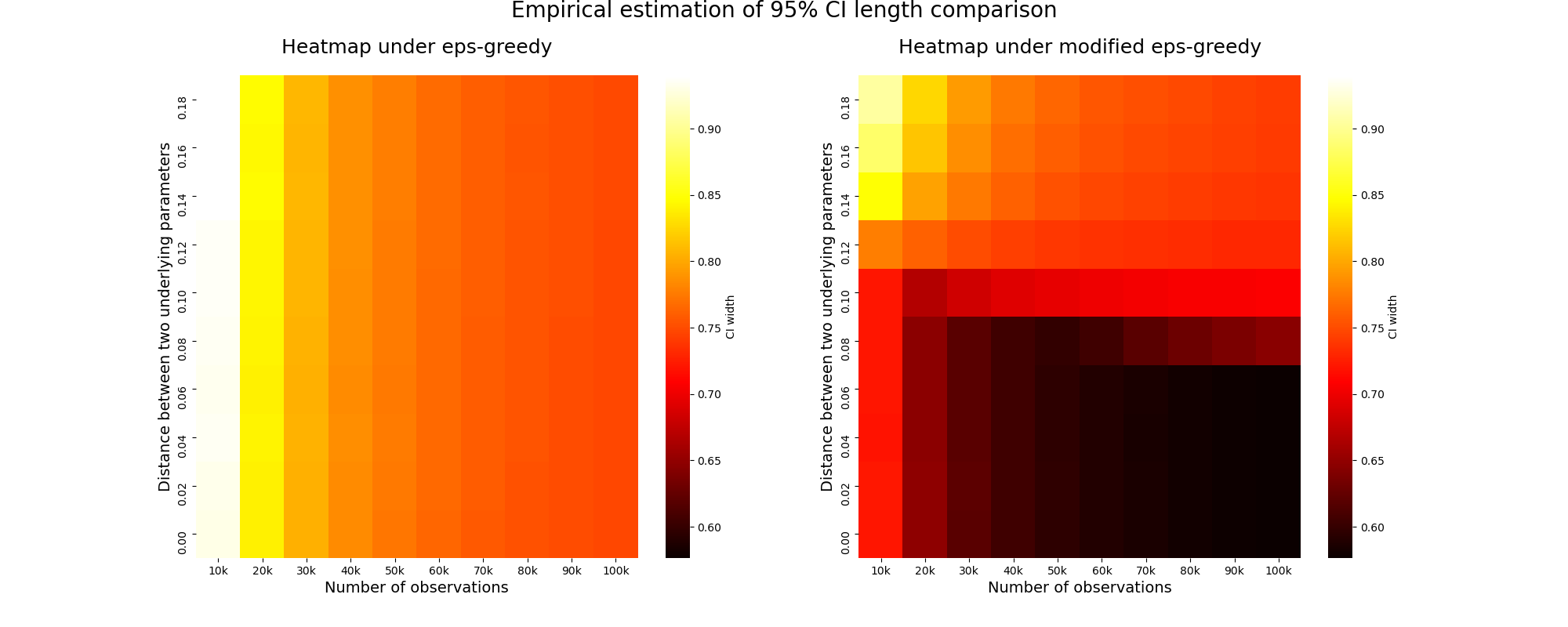}}
    \caption{SGD on linear regression with \sipw \ in the near-degenerate model. We report the empirical coverage rate and its corresponding 95\% CI length.}
    \label{fig-heat-map}
\end{figure}
\subsubsection{Non-degenerate model}
We first present the results for linear regression in the non-degenerate model. In Figure~\ref{fig-app-linear}, we plot the empirical distribution of each action's first dimension of $\sqrt{t}(\bar{\theta}_t - \theta^*)$ using $10,000$ Monte-Carlo simulations. {As can be inferred from the plots, the vanilla SGD and the sqrt-IPW SGD have much smaller standard deviation compared with \ipw~SGD, which matches our discussions in Section~\ref{sec:var-discuss}, and they also exhibit better normal approximation than \ipw. We present studentized
statistics and compare their histograms with a standard normal distribution in Section B.1 of the supplementary material.}

 Under the same setting as in Section \ref{sec:normal-approximation}, we compare the inference results for three candidate weighted-SGD schemes under 
 {non-degenerate linear regression}
 in Table~B.1 of the supplement. Both {\hyperref[eq:vanilla]{\vanilla}} and {\hyperref[eq:sqrt-weight]{\sipw}} provide a valid conference interval, while {\hyperref[eq:ipw-weight]{\ipw}} provides a much wider confidence interval than its oracle. 

{To visualize the evolution of the empirical distribution over $T$, Figure~\ref{fig:identity_transition} presents the normal approximations for $T = 50{,}000$, $100{,}000$, and $150{,}000$. As $T$ increases, the empirical histogram aligns more closely with the theoretical Gaussian density, confirming the convergence. 
The corresponding total variation distances  are further reported in Figure B.12 of the supplement, further illustrating the rate of convergence.

\begin{figure}[!t]
    \centering
    \subfigure[$T=50{,}000$]{\includegraphics[width=0.32\textwidth]{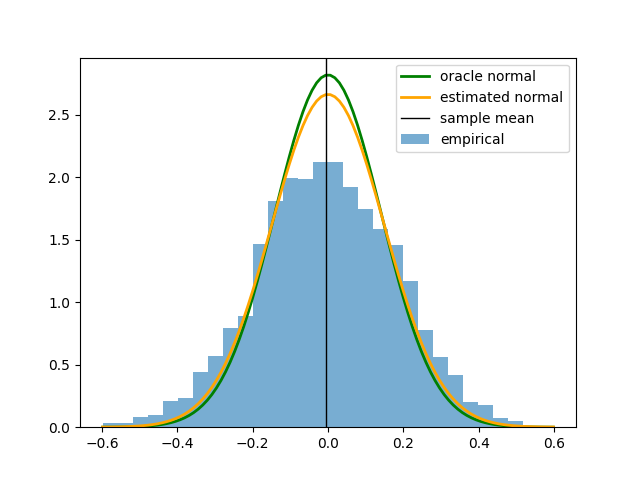}
        \label{fig:sub1}}
   \subfigure[$T=100{,}000$]{\includegraphics[width=0.32\textwidth]{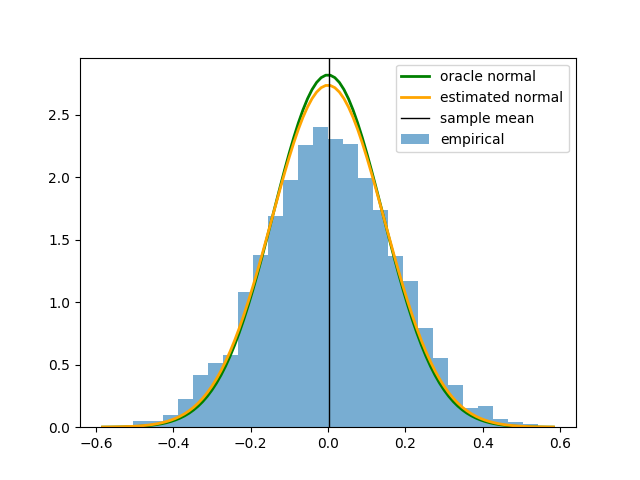}}
    \subfigure[$T=150{,}000$]{\includegraphics[width=0.32\textwidth]{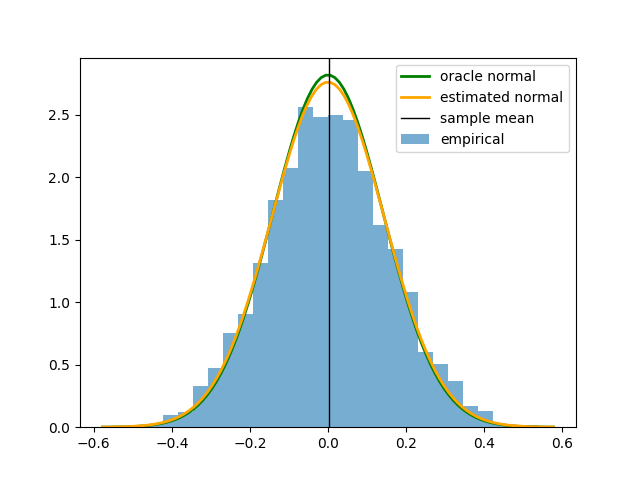}}

    \caption{Empirical distribution transitions across different $T$ values for \vanilla, Arm 0.}
    \label{fig:identity_transition}
\end{figure}
}

 \subsubsection{Degenerate model}
We now present the results for linear regression in the degenerate model. In Figure~\ref{fig-app-linear1}, we plot the empirical distribution of each action's first dimension of $\sqrt{t}(\bar{\theta}_t - \theta^*)$ using $10,000$ Monte-Carlo simulations. 
{It shows that {\hyperref[eq:ipw-weight]{\ipw}} exhibits the same issue as in the non-degenerate model, as early-stage triggering of the non-degenerate criterion under the modified $\varepsilon$-greedy policy leads to inflated variance estimates that propagate throughout the SGD process. We also present studentized statistics in Figure~B.2 of the supplementary material.
 }

Similar to Section \ref{sec:normal-approximation}, we compare the inference results for three candidate weighting schemes under degenerate linear regression models in Table~B.2. Both {\hyperref[eq:vanilla]{\vanilla}} and {\hyperref[eq:sqrt-weight]{\sipw}} provide comparable conference intervals, while {\hyperref[eq:ipw-weight]{\ipw}} provides much wider ones. 

In Figure \ref{fig-heat-map}, we compare the empirical cover rate and its corresponding 95\% confidence interval length under the classical $\varepsilon$-greedy \eqref{eq:eps-greedy} policy with its modified version \eqref{eq:modified eps-greedy}, when $\theta^*_{[1:10]}$ deviates a little from $\theta^*_{[11:20]}$ (near-degenerate model). We find that their empirical coverage rates are generally close after 70,000 SGD iterations. However, the empirical CI length of the modified $\varepsilon$-greedy policy is shorter than the other when the distance between $\theta^*_{[1:10]}$ and $\theta^*_{[11:20]}$ is less than 0.08. 
}

\subsection{Normal approximation with exponential policy}
{Under analogous settings to Section \ref{sec:normal-approximation}, we report and compare the performance of~the three weighting schemes for the exponential policy \eqref{eq:exp3} in Figures B.6--B.7  of the supplementary material for non-degenerate models and Figures B.8--B.9 for degenerate ones.}

\subsection{Real data analysis}

In this section, we apply our online estimation and inference framework to Yahoo! Today module user click-log dataset and conduct statistical inference for model parameters. We use the news recommendation and user response records on May $1^{\mathrm{st}}$, 2009. On this day, we consider the two most recommended ($405,888$ times) articles, No.109510 and No.109520 for analysis.
We follow the experiment settings in \cite{chen2021b}. The action $A_t$ is specified to be $1$ when Article No.109510 is recommended and $A_t = 0$ when Article No.109520 is recommended. The original user features have six covariates, where the first five sum up to one, and the sixth is constant $1$. In our experiments below, we keep the second to fifth in the original features as $X_{[2:5]}$ and specify $X_{[1]} = 1$ as the intercept. As the reward $Y_t$ is binary, we consider a logistic regression model and set $Y_t = 1$ if the user clicks on the article link and $Y_t = -1$ if not. We use the $\varepsilon$-greedy algorithm~\eqref{eq:eps-greedy}. To match the process with our offline dataset, we keep the entry if the recorded offline action matches the action given by our online $\varepsilon$-greedy algorithm with two specifications of $\varepsilon\in\{0.2,0.02\}$.

We use the same specifications as above, $300$-step meltdown and $\alpha = 0.8$, and compare three weighting schemes, {\hyperref[eq:vanilla]{\vanilla}}, {\hyperref[eq:sqrt-weight]{\sipw}}, and {\hyperref[eq:ipw-weight]{\ipw}. Tables~B.5 and B.6 in the supplement present the result for $\varepsilon=0.2$ and $\varepsilon=0.02$. {Our results and findings for {\hyperref[eq:ipw-weight]{\ipw}} align with those in \cite{chen2021b}, while  {\hyperref[eq:vanilla]{\vanilla}} and {\hyperref[eq:sqrt-weight]{\sipw}} have smaller standard errors and $p$-values, matching our discussion regarding the different weight schemes.}

}
\newpage
\bibliographystyle{chicago}
\bibliography{refs}

\setcounter{section}{0}
\renewcommand\thesection{\Alph{section}}
\appendix

\numberwithin{figure}{section}
\numberwithin{table}{section}
\numberwithin{equation}{section}
\counterwithin{assumption}{section}
\counterwithin{theorem}{section}
\newpage
\setcounter{page}{1}

\section{Notations}\label{supp:notation}
We first introduce some notations in our paper. For any pair of positive integers $m < n$, we use $[m:n]$ as a shorthand for the discrete set of $\{m, m+1, \ldots, n\}$. For any vector $\theta \in \R^d$, we use $\theta_{[m:n]}$ to denote the vector consisting of the $m$-th to $n$-th coordinates of $\theta$. Similarly, $\theta_{[m:n],t}$ is the corresponding subvector of $\theta_t$. For a set of random variables $X_n$ and a corresponding set of constants $a_n$, $X_n$ = $\mathcal{O}_p(a_n)$ means that $X_n/a_n$ is stochastically bounded and $X_n = o_p(a_n)$ means that $X_n/a_n$ converges to zero in probability as $n$ goes to infinity. We denote $\overset{p}{\rightarrow}$ and $\overset{d}{\rightarrow}$ as convergence in probability and convergence in distribution, respectively.  For convenience, let $\| \cdot \|$ denote the standard Euclidean norm for vectors and the spectral norm for matrices. We use the standard Loewner order notation $\Sigma \succeq 0$ if a matrix $\Sigma$ is positive semi-definite. Denote $I_d$ as the identity matrix in $\R^{d\times d}$. For any square matrix $\Sigma$, $\lambda_{\min}(\Sigma)$ and $\lambda_{\max}(\Sigma)$ represent the smallest and the largest eigenvalues, respectively. We also introduce $\ID(\cdot)$ for the indicator function, and $\lesssim$ is used for inequalities with omitted constants.

\section{Figures and tables}\label{subsec:figuretable}In this section, we present the additional figures and tables relegated from the main text. 
\subsection{Results for studentized statistics}\label{subsec:stu}
\begin{figure}[H]
    \centering
    \subfigure[\vanilla, Arm $0$]{
    \includegraphics[width=0.22\textwidth]{./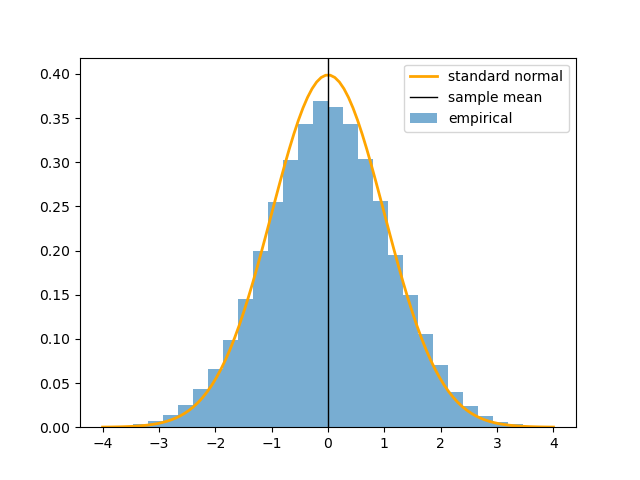}}
    \subfigure[\sipw, Arm $0$]{
    \includegraphics[width=0.22\textwidth]{./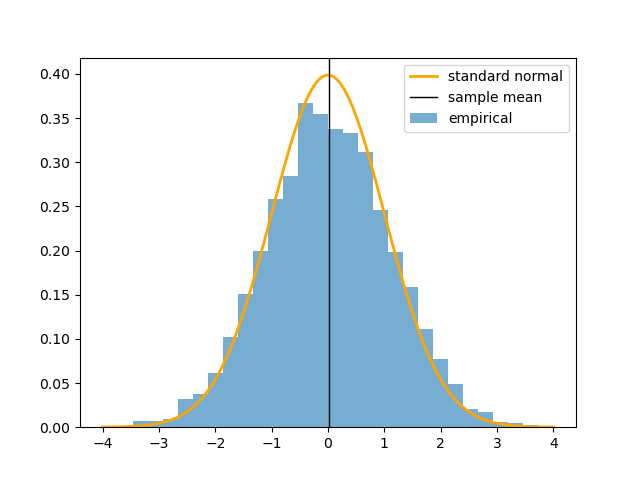}}
    \subfigure[\ipw, Arm $0$]{
    \includegraphics[width=0.22\textwidth]{./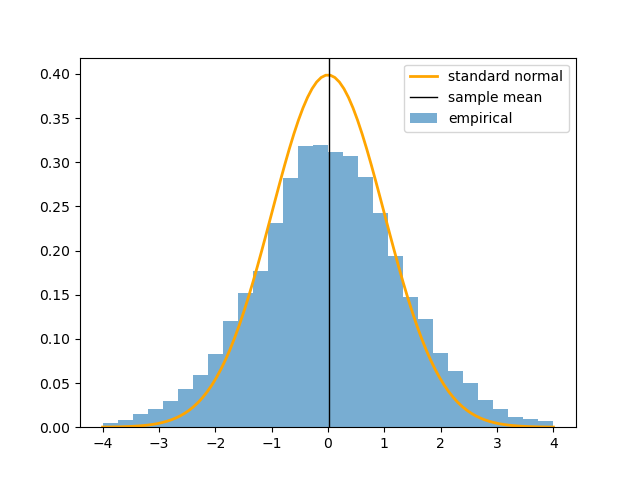}}\\
    \subfigure[\vanilla, Arm $1$]{
    \includegraphics[width=0.22\textwidth]{./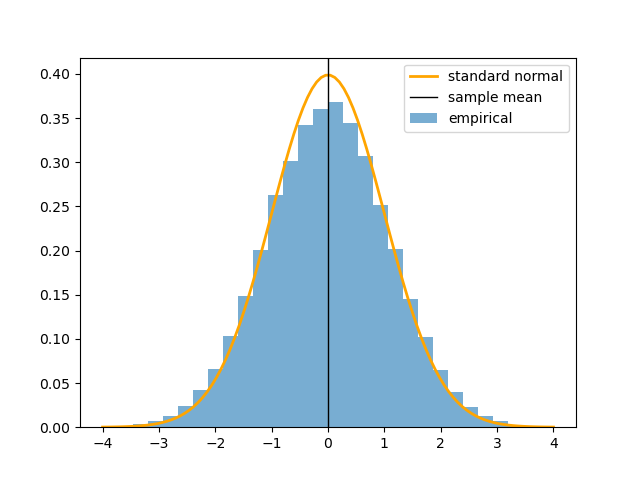}}
    \subfigure[\sipw, Arm $1$]{
    \includegraphics[width=0.22\textwidth]{./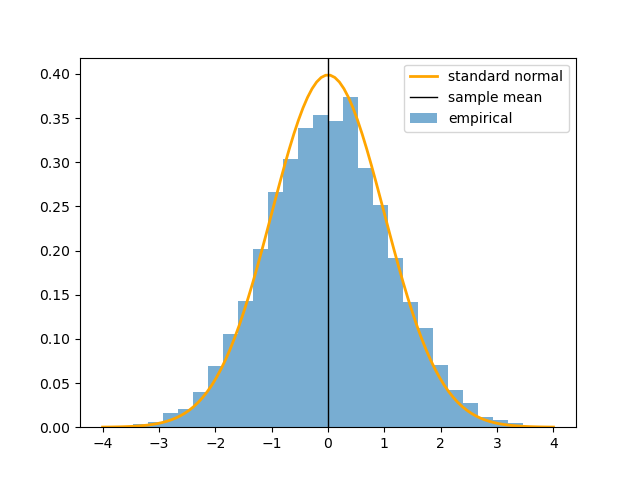}}
    \subfigure[\ipw, Arm $1$]{
    \includegraphics[width=0.22\textwidth]{./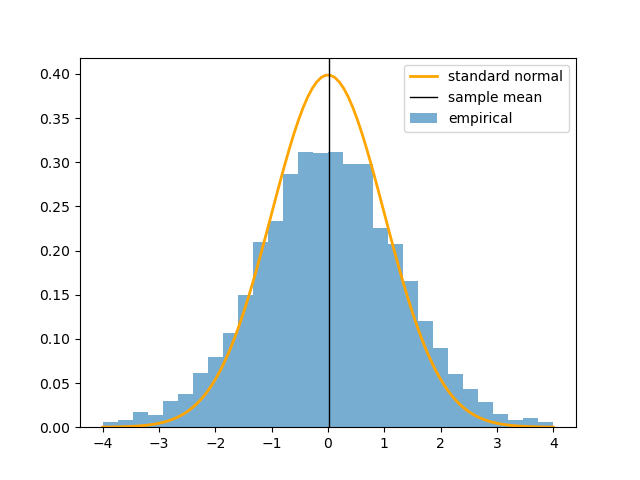}}\\
    \caption{SGD on linear regression with modified $\varepsilon$-greedy and different weights in the non-degenerate model. We report the empirical distribution of each action's first dimension of $\sqrt{t}\hat{S}_t^{-1/2}\hat{H}_t(\bar{\theta}_t - \theta^*)$ for $10,000$ Monte-Carlo simulations.}
    \label{fig-app-linear2}
\end{figure}

\begin{figure}[H]
    \centering
    \subfigure[\vanilla, Arm $0$]{
    \includegraphics[width=0.22\textwidth]{./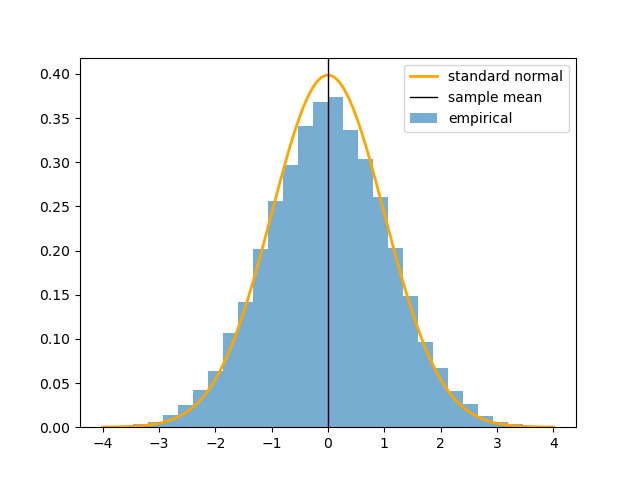}}
    \subfigure[\sipw, Arm $0$]{
    \includegraphics[width=0.22\textwidth]{./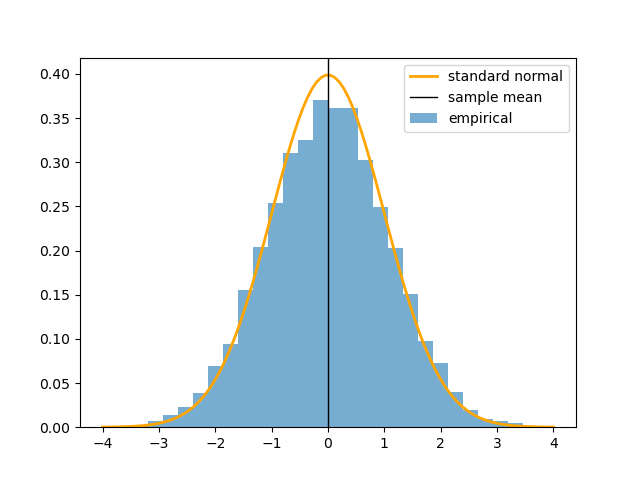}}
    \subfigure[\ipw, Arm $0$]{
    \includegraphics[width=0.22\textwidth]{./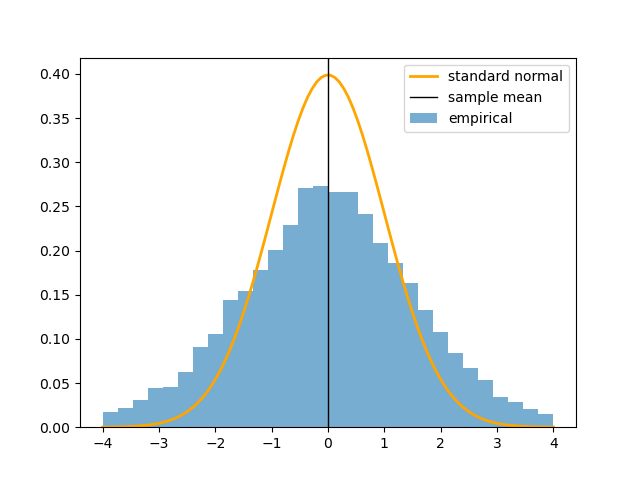}}\\
    \subfigure[\vanilla, Arm $1$]{
    \includegraphics[width=0.22\textwidth]{./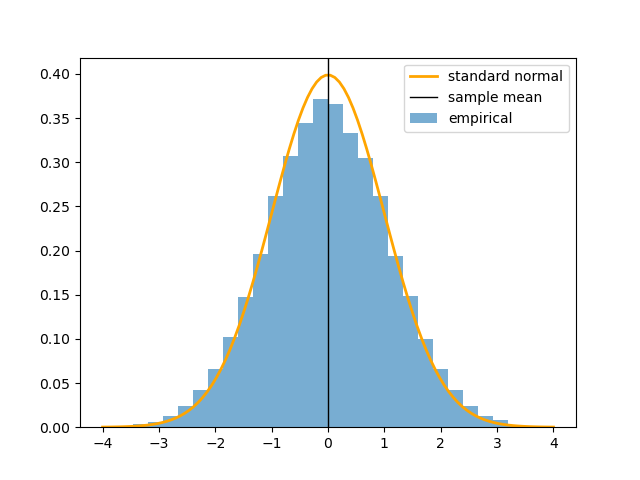}}
    \subfigure[\sipw, Arm $1$]{
    \includegraphics[width=0.22\textwidth]{./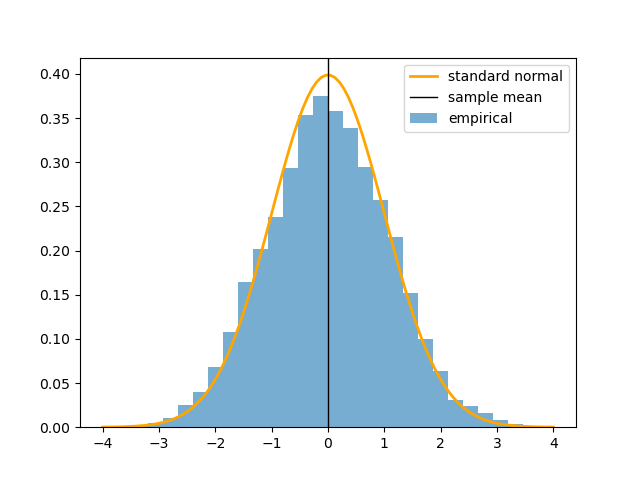}}
    \subfigure[\ipw, Arm $1$]{
    \includegraphics[width=0.22\textwidth]{./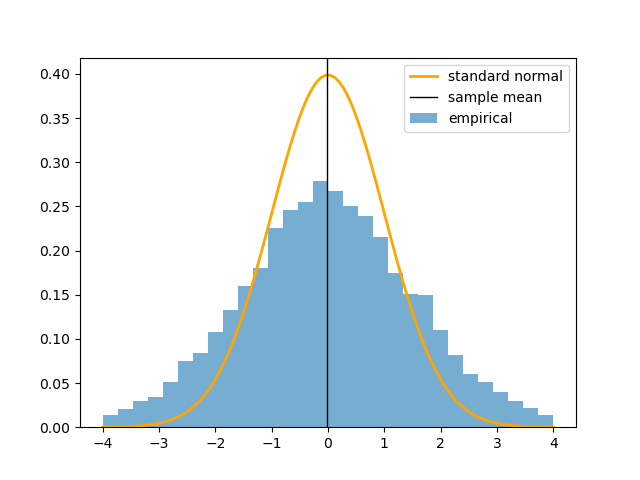}}\\
    \caption{SGD on linear regression with modified $\varepsilon$-greedy and different weights in the degenerate model. We report the empirical distribution of each action's first dimension of $\sqrt{t}\hat{S}_t^{-1/2}\hat{H}_t(\bar{\theta}_t - \theta^*)$ for $10,000$ Monte-Carlo simulations.}
    \label{fig-app-linear3}
\end{figure}

\subsection{Results for quantile regression}
The setting here is the same as Section \ref{sec:normal-approximation} and we use the modified $\varepsilon$-greedy policy. We conduct simulations on quantile regression with quantile level $\tau = 0.75$, and report in Figure~\ref{fig-qtl-non} for the non-degenerate model and Figure \ref{fig-qtl-de} for the degenerate model.

\vspace{-1em}
\begin{figure}[H]
    \centering
    \begin{minipage}{0.22\textwidth}
        \centering
        \includegraphics[width=\textwidth]{./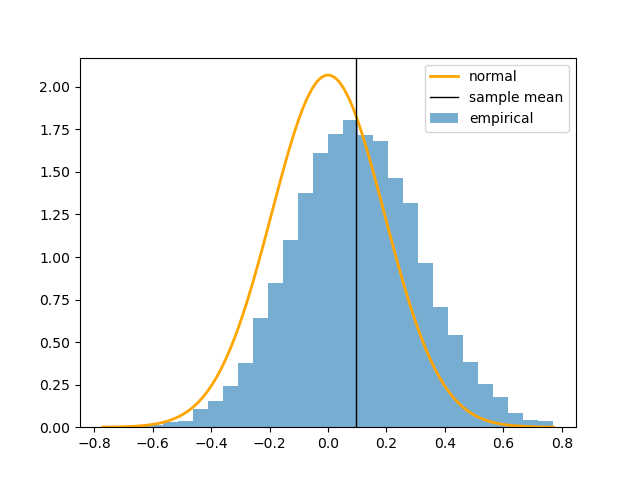}
        \caption*{\vanilla, Arm $0$}
    \end{minipage}
    \begin{minipage}{0.22\textwidth}
        \centering
        \includegraphics[width=\textwidth]{./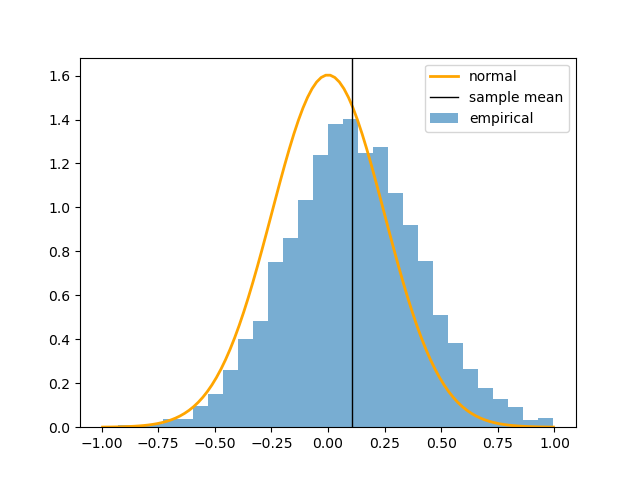}
        \caption*{\sipw, Arm $0$}
    \end{minipage}
    \begin{minipage}{0.22\textwidth}
        \centering
        \includegraphics[width=\textwidth]{./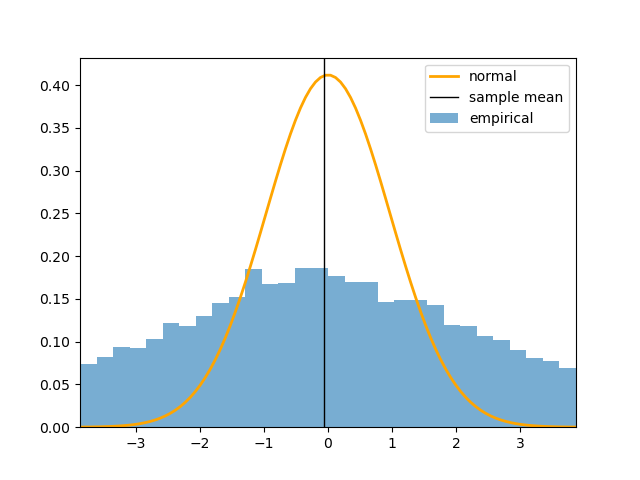}
        \caption*{\ipw, Arm $0$}
    \end{minipage}\\
    \begin{minipage}{0.22\textwidth}
        \centering
        \includegraphics[width=\textwidth]{./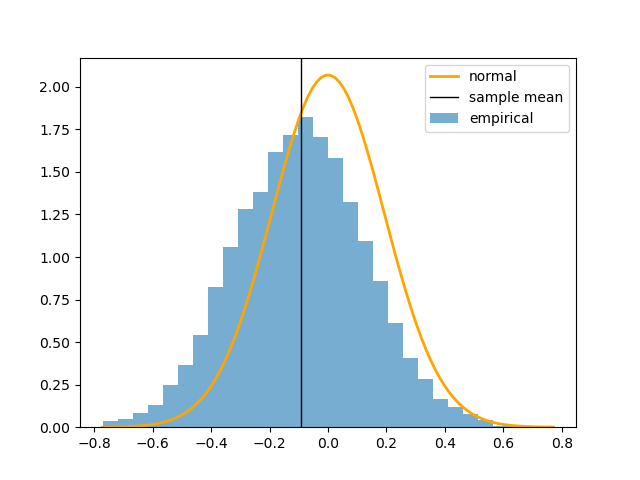}
        \caption*{\vanilla, Arm $1$}
    \end{minipage}
    \begin{minipage}{0.22\textwidth}
        \centering
        \includegraphics[width=\textwidth]{./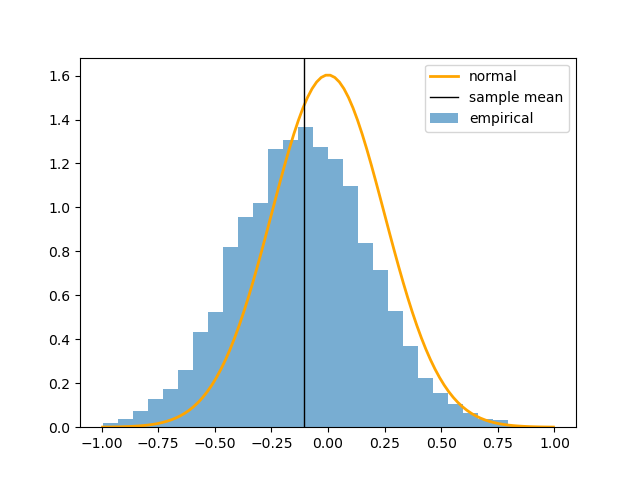}
        \caption*{\sipw, Arm $1$}
    \end{minipage}
    \begin{minipage}{0.22\textwidth}
        \centering
        \includegraphics[width=\textwidth]{./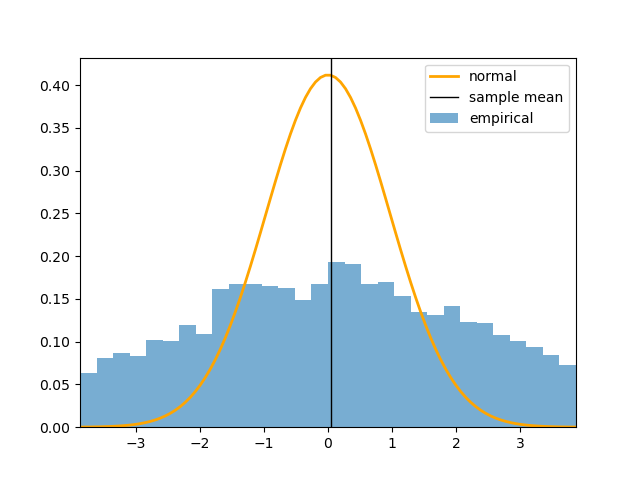}
        \caption*{\ipw, Arm $1$}
    \end{minipage}
    \caption{SGD on quantile regression with modified $\varepsilon$-greedy policy and different weights in the non-degenerate model. We report the empirical distribution of each action's first dimension of $\sqrt{t}(\bar{\theta}_t - \theta^*)$ for $10,000$ Monte-Carlo simulations {with $t=200,000$.}}
    \label{fig-qtl-non}
\end{figure}

\vspace{-1em}
\begin{figure}[H]
    \centering
    \begin{minipage}{0.2\textwidth}
        \centering
        \includegraphics[width=\textwidth]{./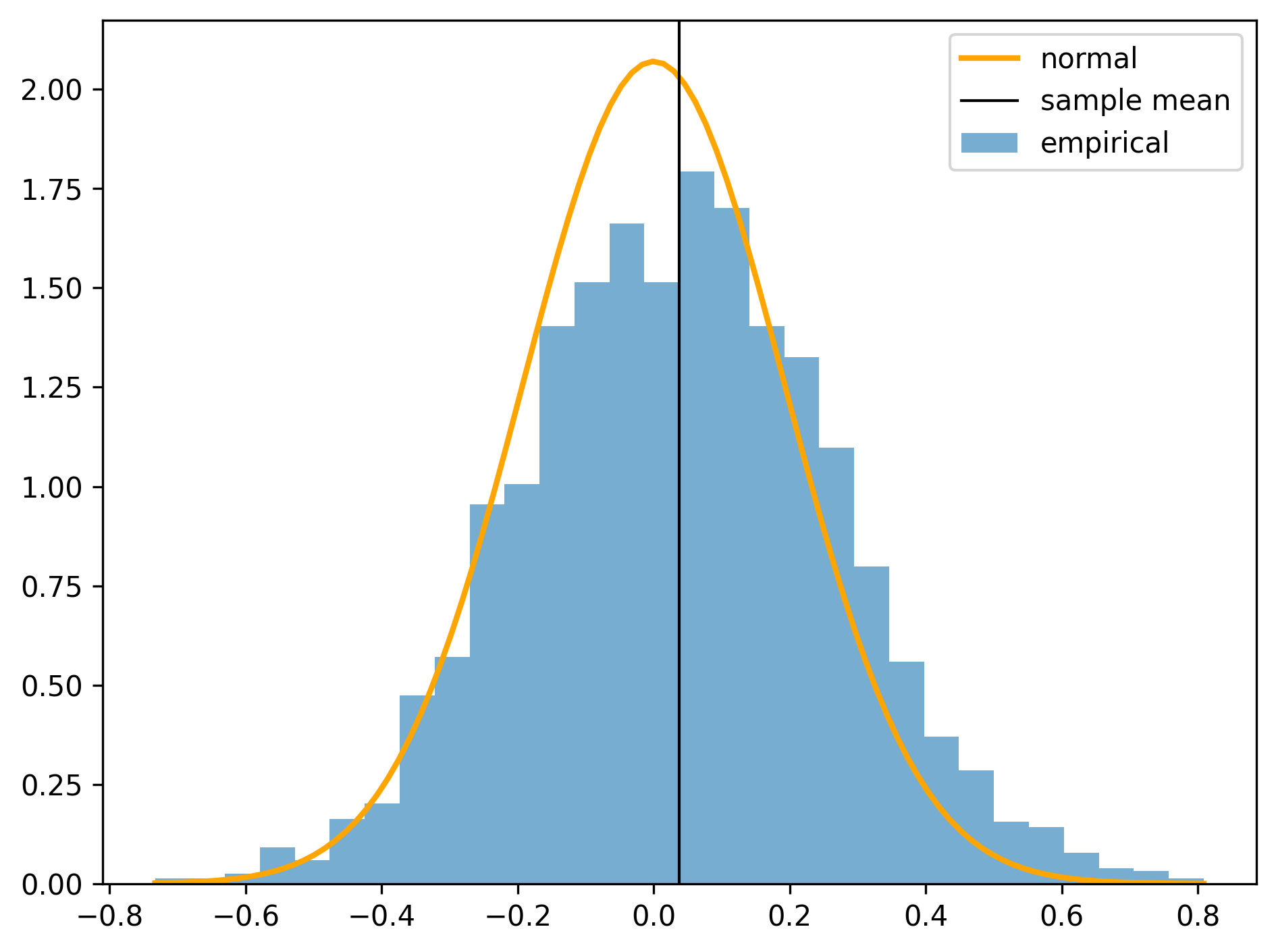}
        \caption*{\vanilla, Arm $0$}
    \end{minipage}
    \begin{minipage}{0.2\textwidth}
        \centering
        \includegraphics[width=\textwidth]{./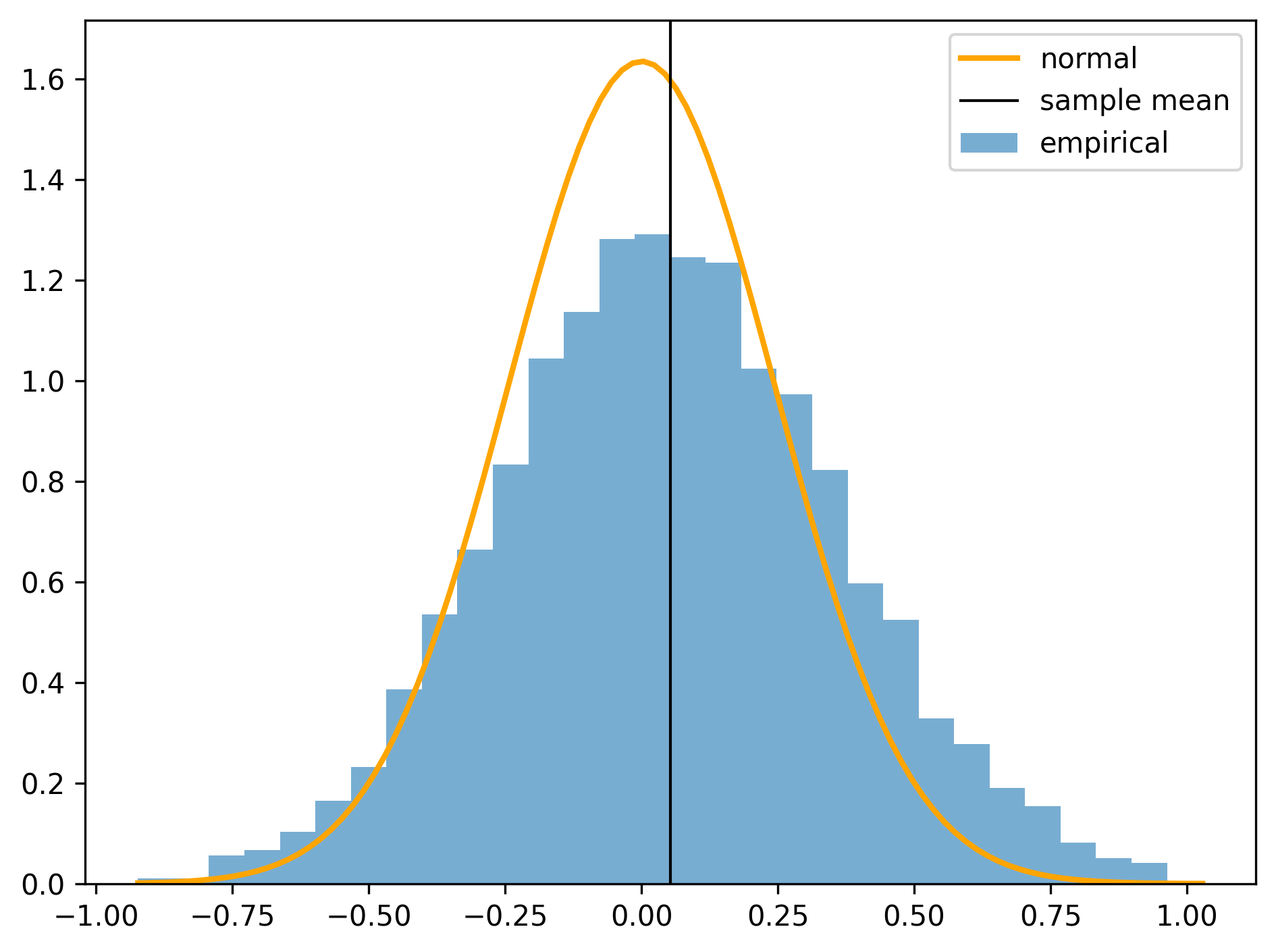}
        \caption*{\sipw, Arm $0$}
    \end{minipage}
    \begin{minipage}{0.2\textwidth}
        \centering
        \includegraphics[width=\textwidth]{./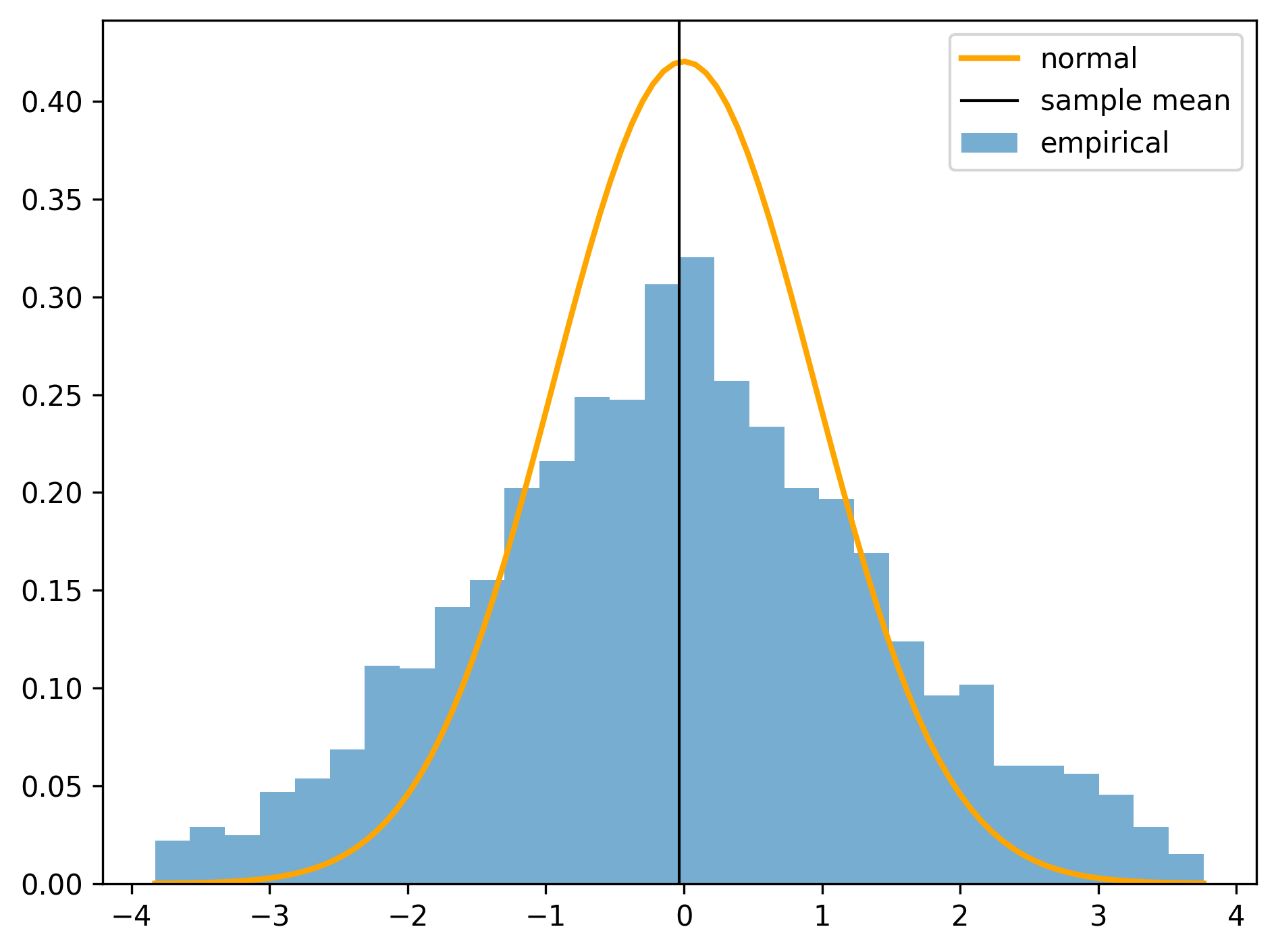}
        \caption*{\ipw, Arm $0$}
    \end{minipage}\\
    \begin{minipage}{0.2\textwidth}
        \centering
        \includegraphics[width=\textwidth]{./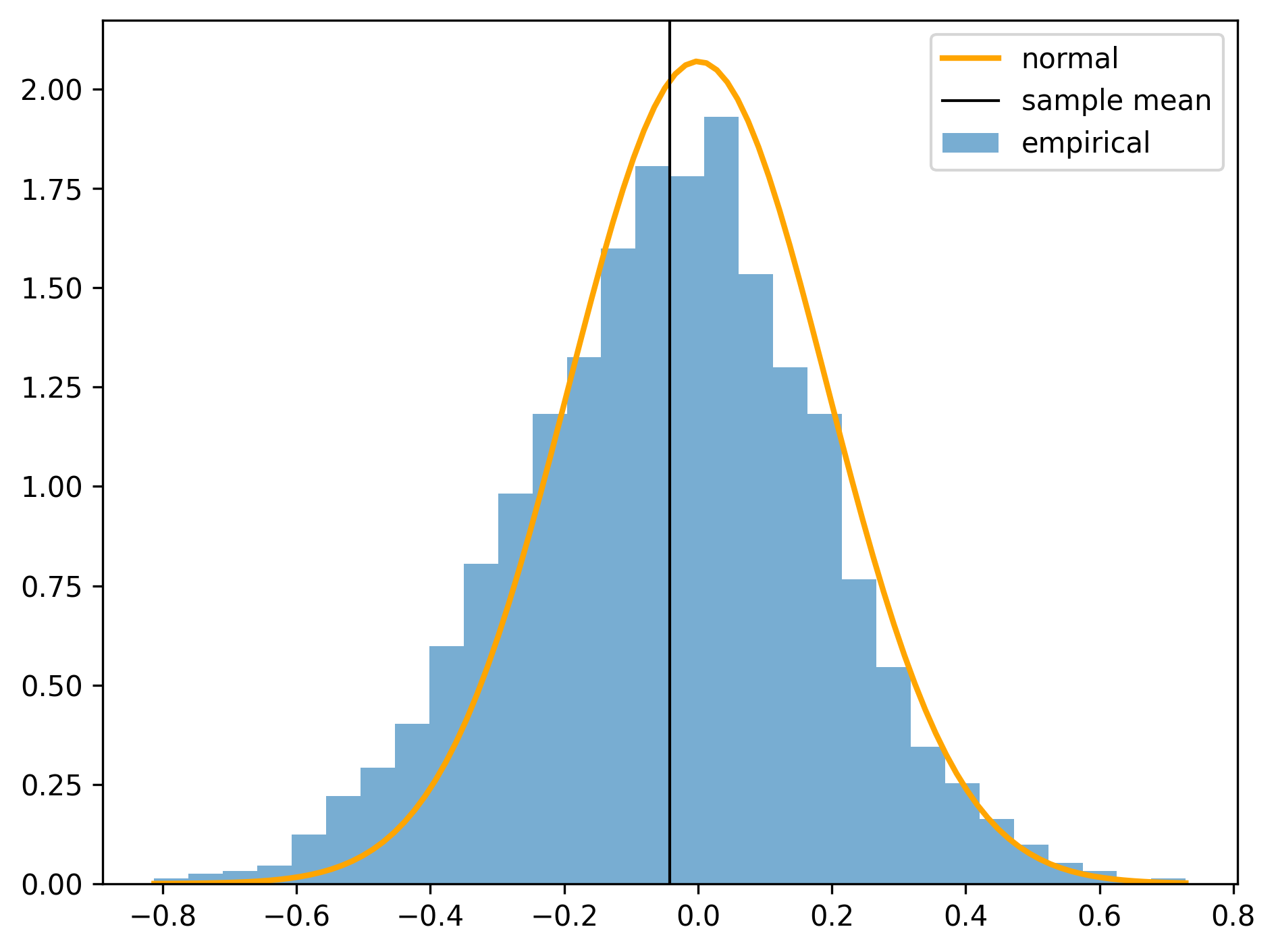}
        \caption*{\vanilla, Arm $1$}
    \end{minipage}
    \begin{minipage}{0.2\textwidth}
        \centering
        \includegraphics[width=\textwidth]{./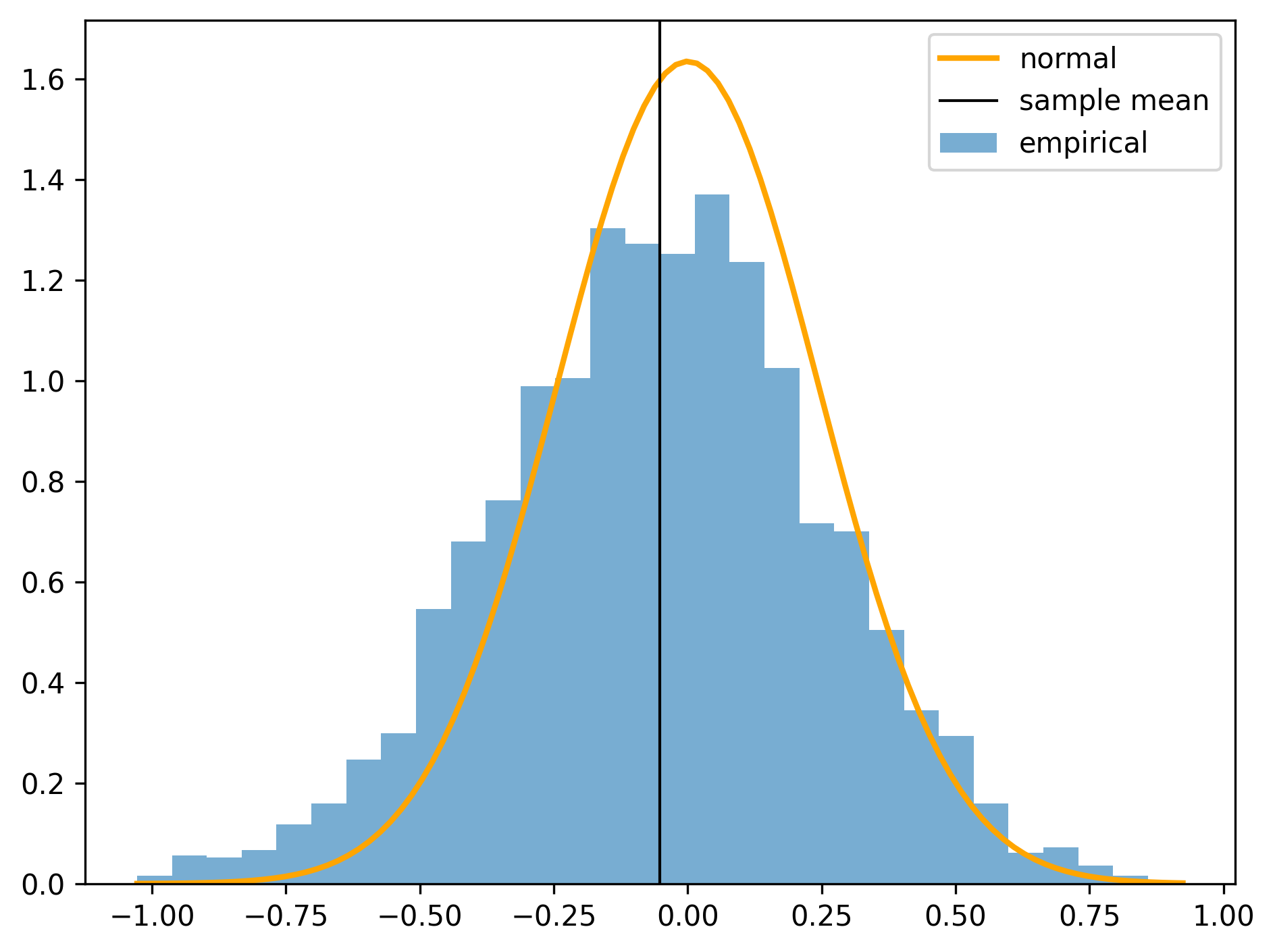}
        \caption*{\sipw, Arm $1$}
    \end{minipage}
    \begin{minipage}{0.2\textwidth}
        \centering
        \includegraphics[width=\textwidth]{./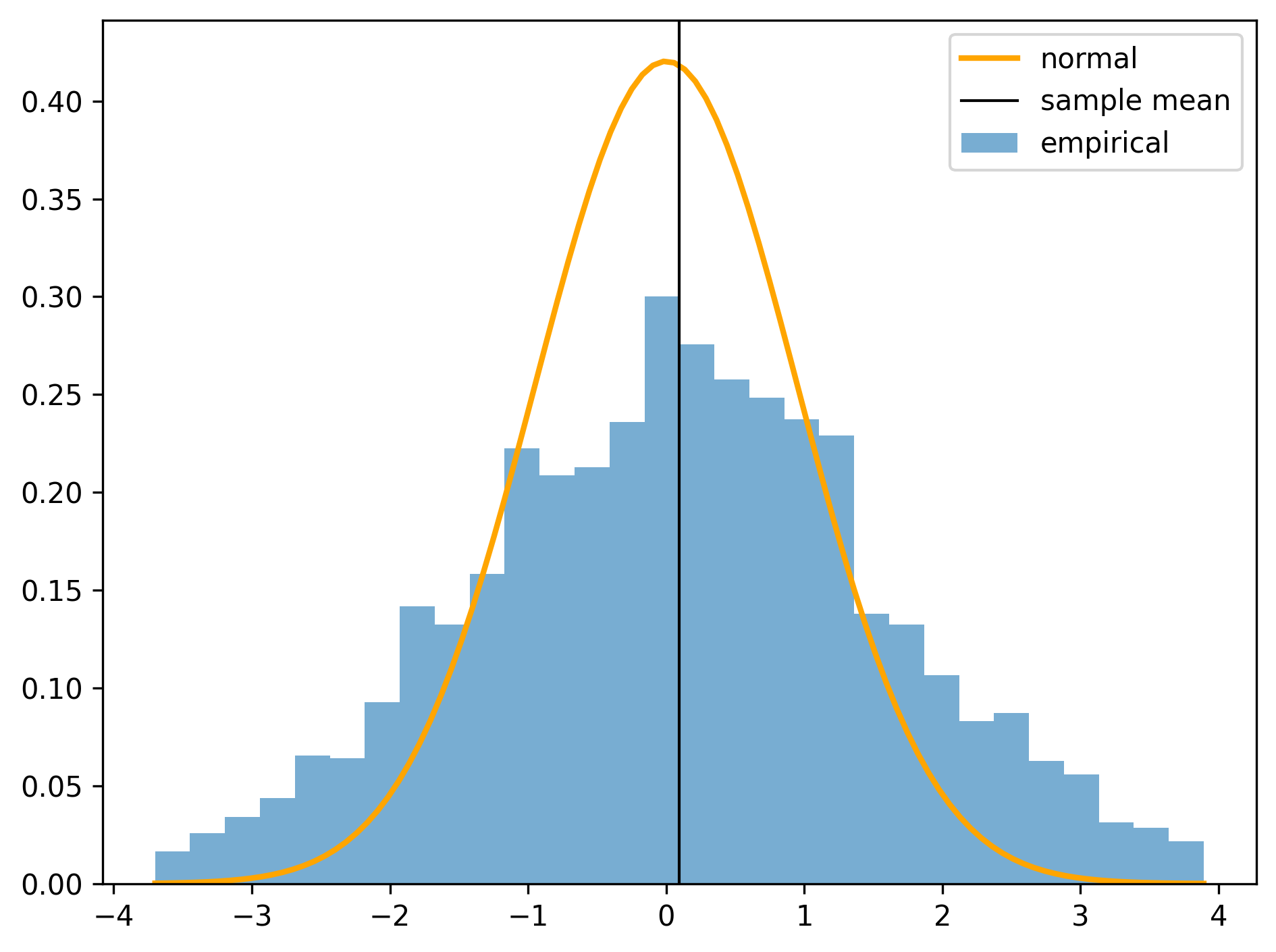}
        \caption*{\ipw, Arm $1$}
    \end{minipage}
    \caption{SGD on quantile regression with modified $\varepsilon$-greedy policy and different weights in the non-degenerate model. We report the empirical distribution of each action's first dimension of $\sqrt{t}(\bar{\theta}_t - \theta^*)$ for $10,000$ Monte-Carlo simulations with $t=2,000,000$.}
    \label{fig-qtl-non1}
\end{figure}

\vspace{-1em}
\begin{figure}[H]
    \centering
    \begin{minipage}{0.22\textwidth}
        \centering
        \includegraphics[width=\textwidth]{./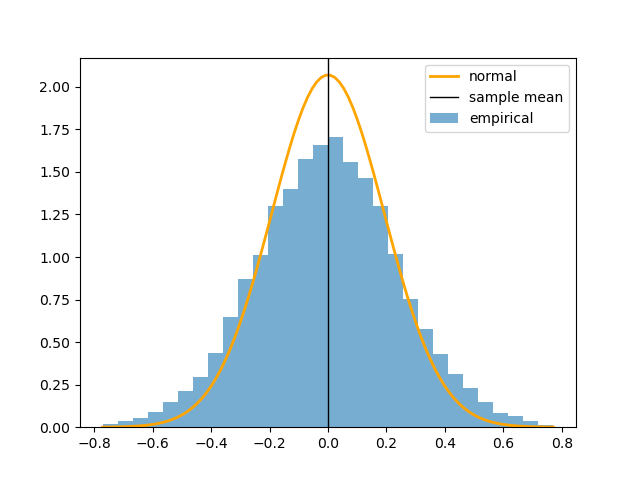}
        \caption*{\vanilla, Arm $0$}
    \end{minipage}
    \begin{minipage}{0.22\textwidth}
        \centering
        \includegraphics[width=\textwidth]{./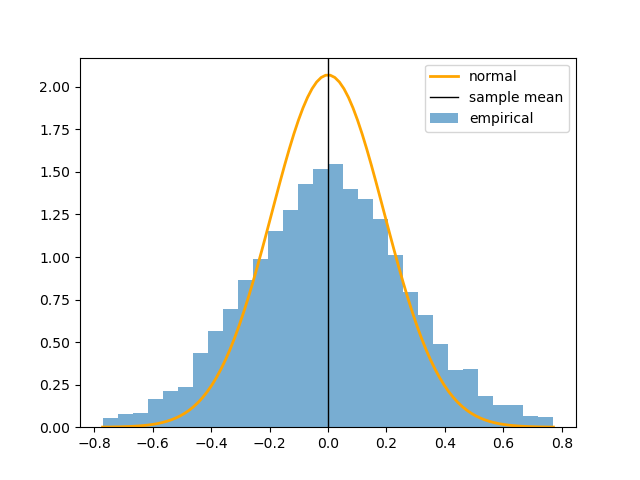}
        \caption*{\sipw, Arm $0$}
    \end{minipage}
    \begin{minipage}{0.22\textwidth}
        \centering
        \includegraphics[width=\textwidth]{./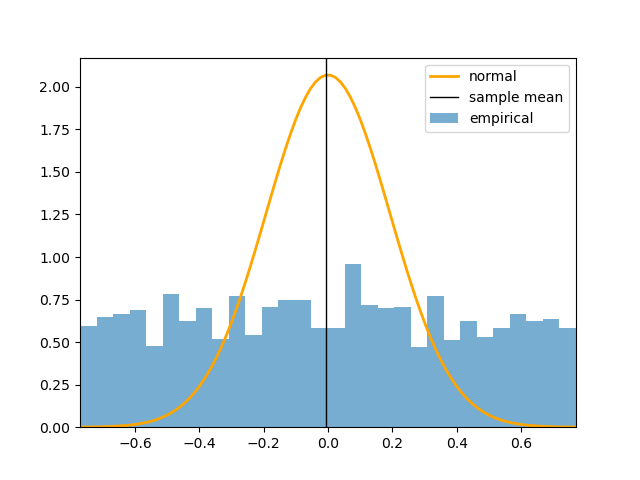}
        \caption*{\ipw, Arm $0$}
    \end{minipage}\\
    \begin{minipage}{0.22\textwidth}
        \centering
        \includegraphics[width=\textwidth]{./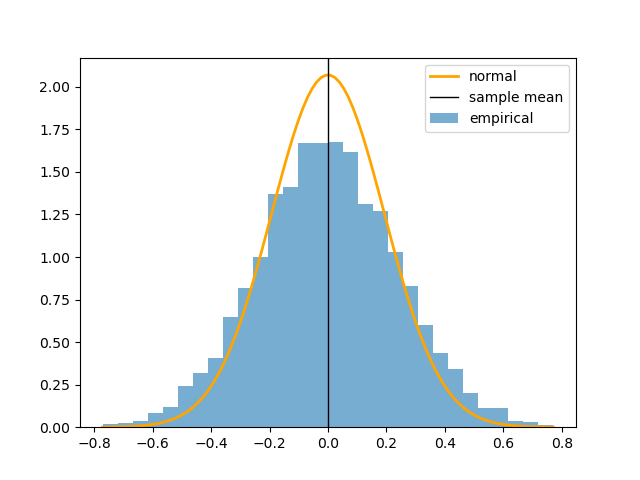}
        \caption*{\vanilla, Arm $1$}
    \end{minipage}
    \begin{minipage}{0.22\textwidth}
        \centering
        \includegraphics[width=\textwidth]{./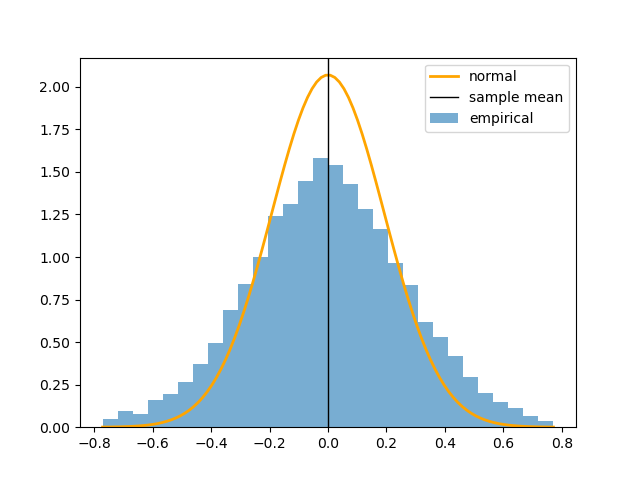}
        \caption*{\sipw, Arm $1$}
    \end{minipage}
    \begin{minipage}{0.22\textwidth}
        \centering
        \includegraphics[width=\textwidth]{./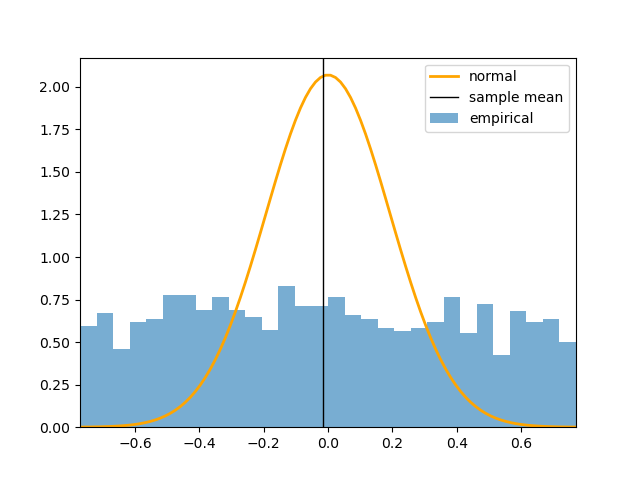}
        \caption*{\ipw, Arm $1$}
    \end{minipage}
    \caption{SGD on quantile regression with modified $\varepsilon$-greedy policy and different weights in the degenerate model. We report the empirical distribution of each action's first dimension of $\sqrt{t}(\bar{\theta}_t - \theta^*)$ for $10,000$ Monte-Carlo simulations.}
    \label{fig-qtl-de}
\end{figure}

\subsection{Results for exponential policy}
\vspace{-1em}
\begin{figure}[H]
    \centering
    \begin{minipage}{0.22\textwidth}
        \centering
        \includegraphics[width=\textwidth]{./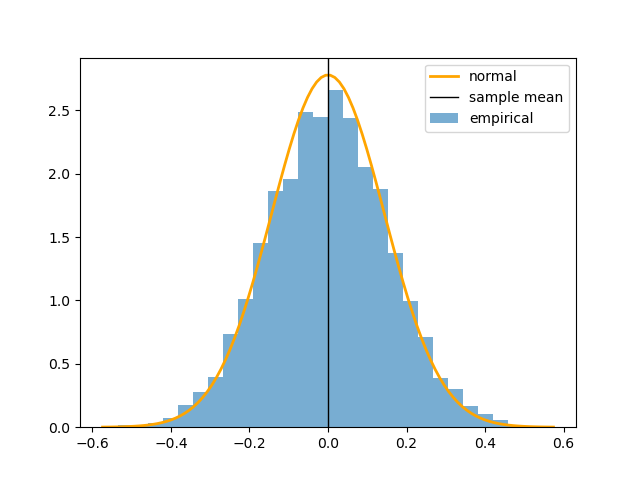}
        \caption*{\vanilla, Arm $0$}
    \end{minipage}
    \begin{minipage}{0.22\textwidth}
        \centering
        \includegraphics[width=\textwidth]{./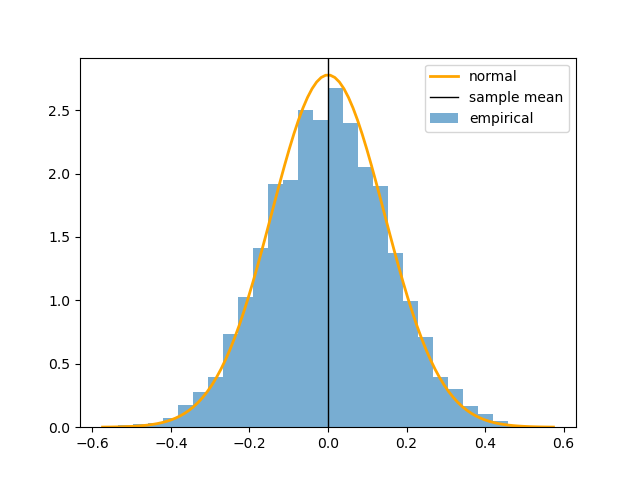}
        \caption*{\sipw, Arm $0$}
    \end{minipage}
    \begin{minipage}{0.22\textwidth}
        \centering
        \includegraphics[width=\textwidth]{./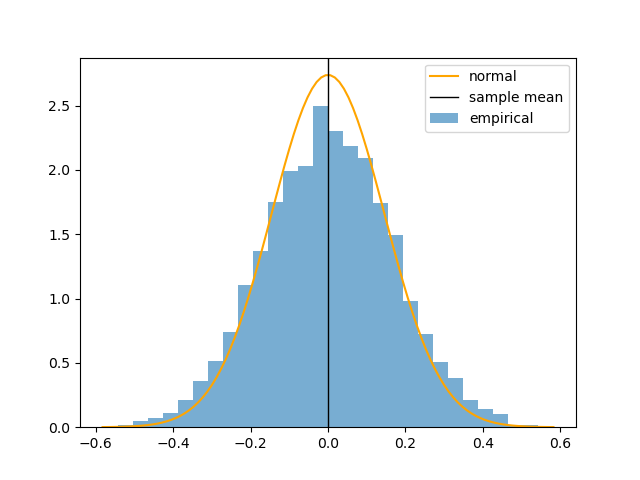}
        \caption*{\ipw, Arm $0$}
    \end{minipage}\\
    \begin{minipage}{0.22\textwidth}
        \centering
        \includegraphics[width=\textwidth]{./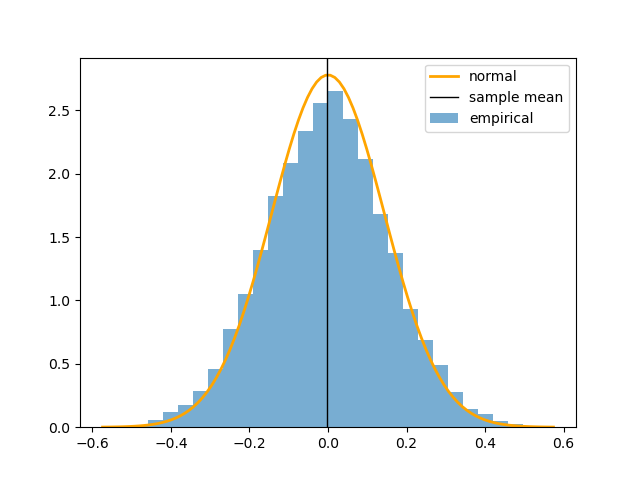}
        \caption*{\vanilla, Arm $1$}
    \end{minipage}
    \begin{minipage}{0.22\textwidth}
        \centering
        \includegraphics[width=\textwidth]{./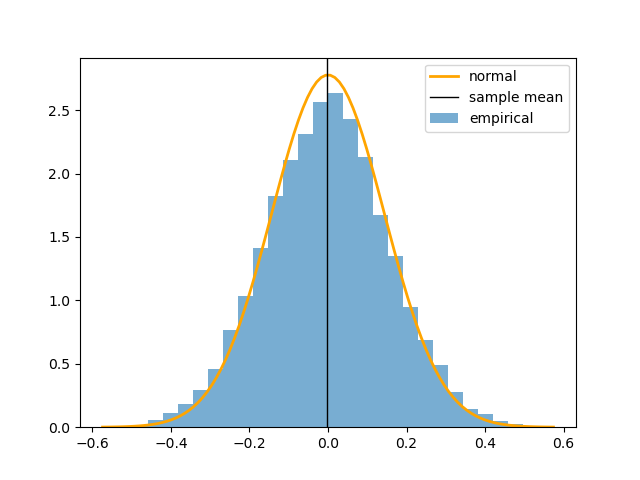}
        \caption*{\sipw, Arm $1$}
    \end{minipage}
    \begin{minipage}{0.22\textwidth}
        \centering
        \includegraphics[width=\textwidth]{./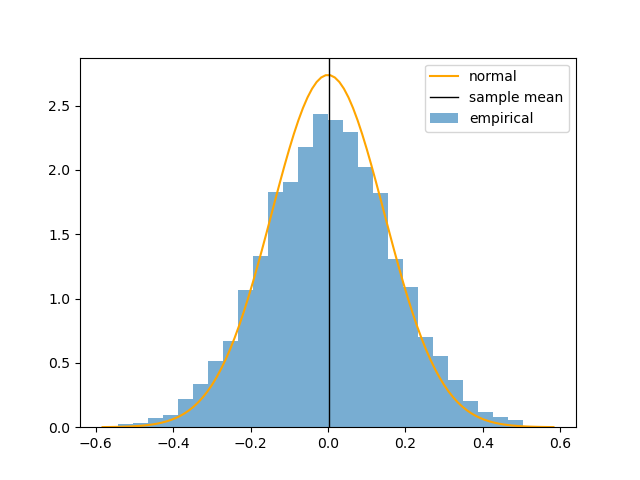}
        \caption*{\ipw, Arm $1$}
    \end{minipage}
    \caption{SGD on linear regression with exponential policy and different weights in the non-degenerate model. We report the empirical distribution of each action's first dimension of $\sqrt{t}(\bar{\theta}_t - \theta^*)$ for $10,000$ Monte-Carlo simulations.}
    \label{fig-exp3-non}
\end{figure}

\begin{figure}[H]
    \centering
    \begin{minipage}{0.22\textwidth}
        \centering
        \includegraphics[width=\textwidth]{./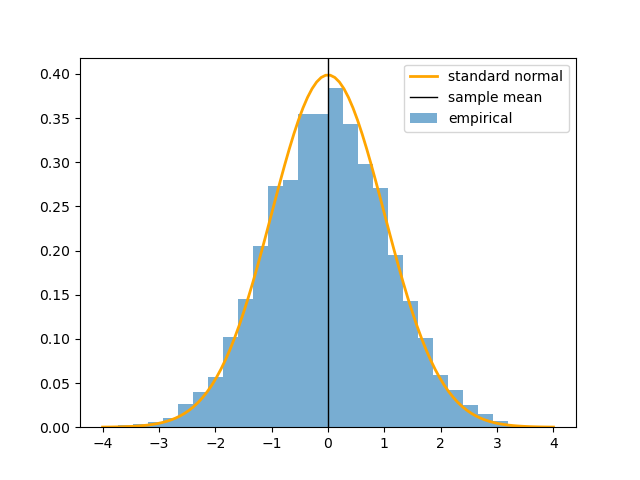}
        \caption*{\vanilla, Arm $0$}
    \end{minipage}
    \begin{minipage}{0.22\textwidth}
        \centering
        \includegraphics[width=\textwidth]{./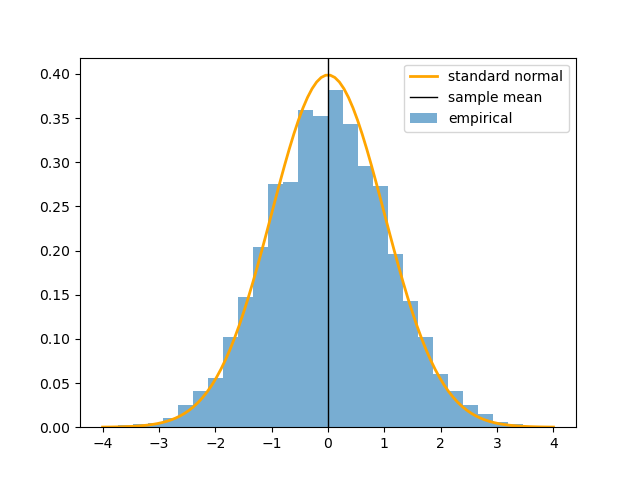}
        \caption*{\sipw, Arm $0$}
    \end{minipage}
    \begin{minipage}{0.22\textwidth}
        \centering
        \includegraphics[width=\textwidth]{./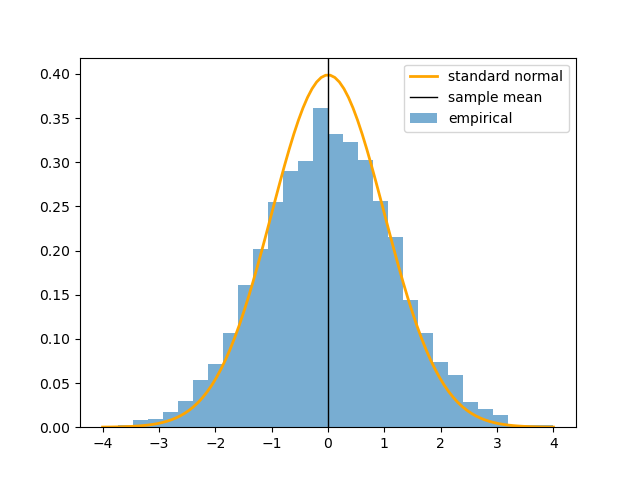}
        \caption*{\ipw, Arm $0$}
    \end{minipage}\\
    \begin{minipage}{0.22\textwidth}
        \centering
        \includegraphics[width=\textwidth]{./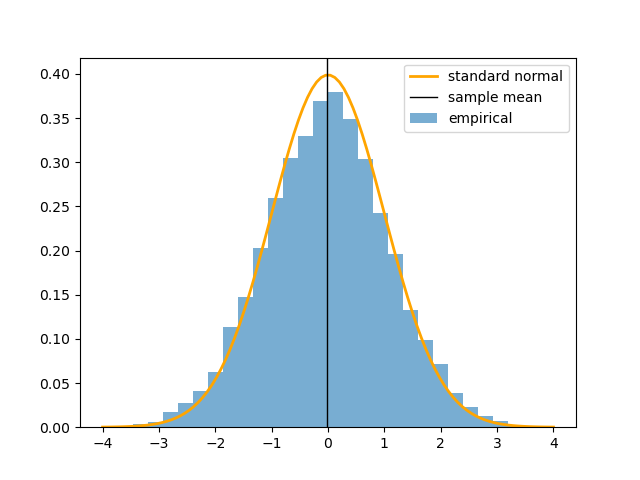}
        \caption*{\vanilla, Arm $1$}
    \end{minipage}
    \begin{minipage}{0.22\textwidth}
        \centering
        \includegraphics[width=\textwidth]{./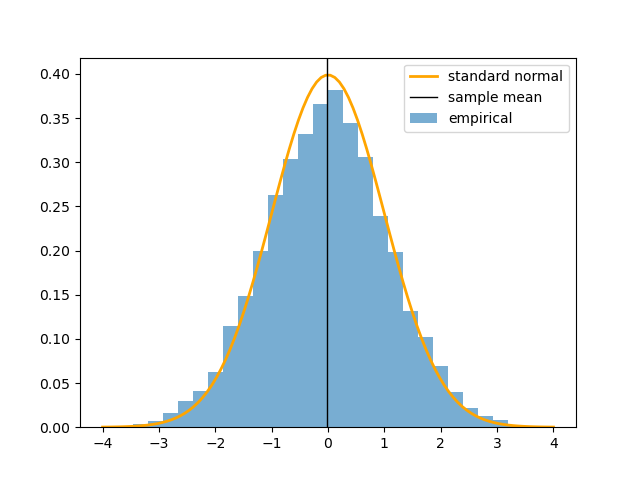}
        \caption*{\sipw, Arm $1$}
    \end{minipage}
    \begin{minipage}{0.22\textwidth}
        \centering
        \includegraphics[width=\textwidth]{./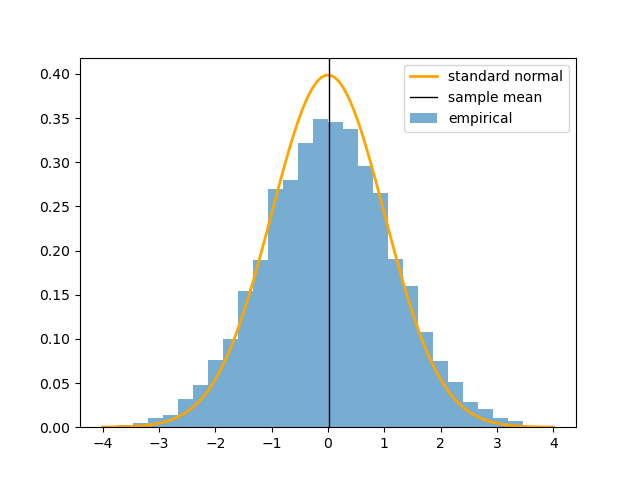}
        \caption*{\ipw, Arm $1$}
    \end{minipage}
    \caption{SGD on linear regression with exponential policy and different weights in the non-degenerate model. We report the empirical distribution of each action's first dimension of $\sqrt{t}\hat{S}_t^{-1/2}\hat{H}_t(\bar{\theta}_t - \theta^*)$ for $10,000$ Monte-Carlo simulations.}
    \label{fig-exp3-std-non}
\end{figure}

\begin{figure}[H]
    \centering
    \begin{minipage}{0.22\textwidth}
        \centering
        \includegraphics[width=\textwidth]{./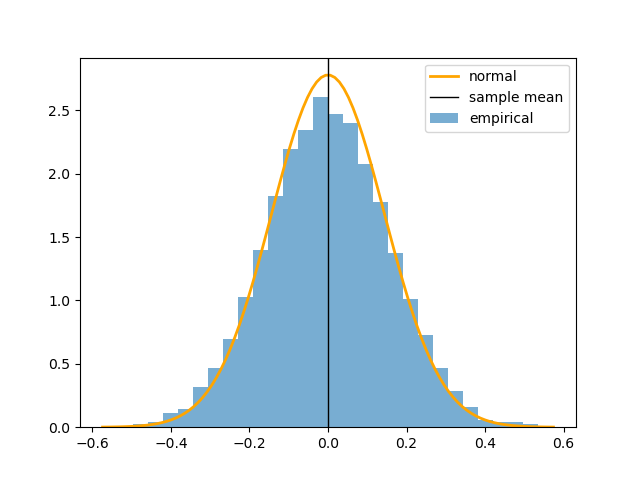}
        \caption*{\vanilla, Arm $0$}
    \end{minipage}
    \begin{minipage}{0.22\textwidth}
        \centering
        \includegraphics[width=\textwidth]{./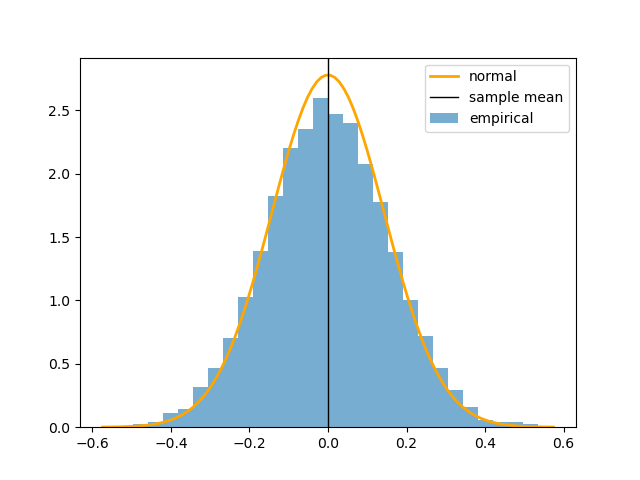}
        \caption*{\sipw, Arm $0$}
    \end{minipage}
    \begin{minipage}{0.22\textwidth}
        \centering
        \includegraphics[width=\textwidth]{./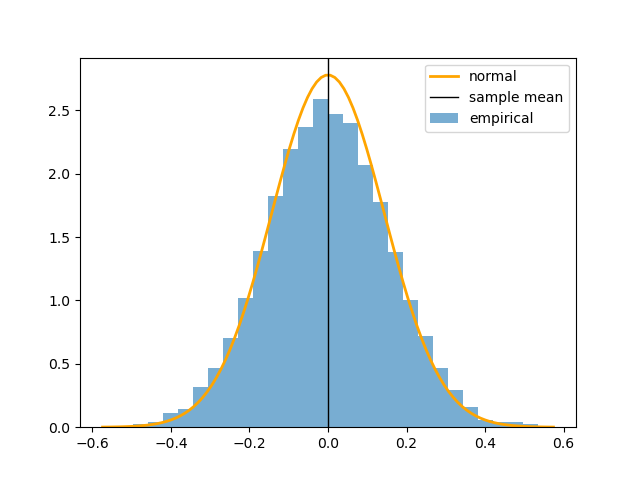}
        \caption*{\ipw, Arm $0$}
    \end{minipage}\\
    \begin{minipage}{0.22\textwidth}
        \centering
        \includegraphics[width=\textwidth]{./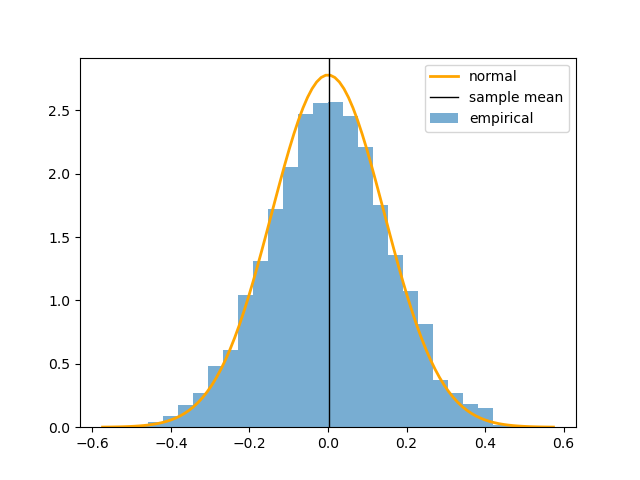}
        \caption*{\vanilla, Arm $1$}
    \end{minipage}
    \begin{minipage}{0.22\textwidth}
        \centering
        \includegraphics[width=\textwidth]{./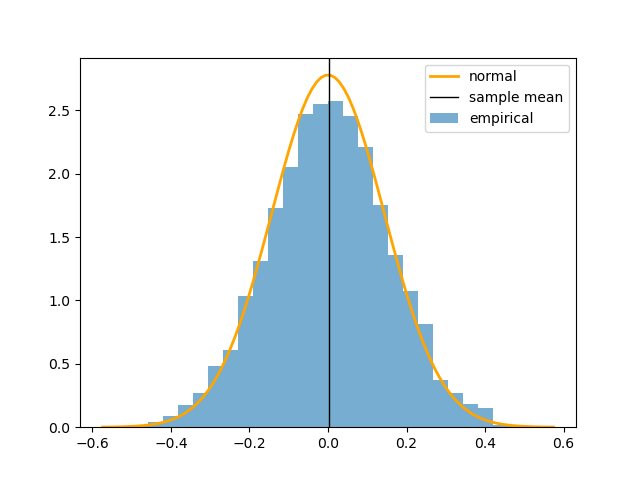}
        \caption*{\sipw, Arm $1$}
    \end{minipage}
    \begin{minipage}{0.22\textwidth}
        \centering
        \includegraphics[width=\textwidth]{./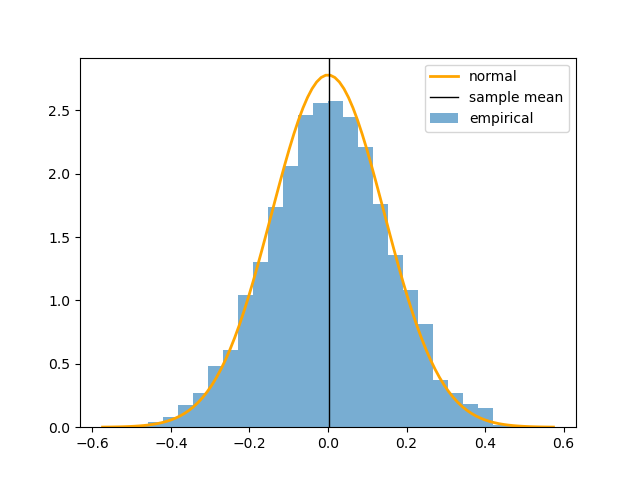}
        \caption*{\ipw, Arm $1$}
    \end{minipage}
    \caption{SGD on linear regression with exponential policy and different weights in the degenerate model. We report the empirical distribution of each action's first dimension of $\sqrt{t}(\bar{\theta}_t - \theta^*)$ for $10,000$ Monte-Carlo simulations.}
    \label{fig-exp3-de}
\end{figure}

\begin{figure}[H]
    \centering
    \begin{minipage}{0.22\textwidth}
        \centering
        \includegraphics[width=\textwidth]{./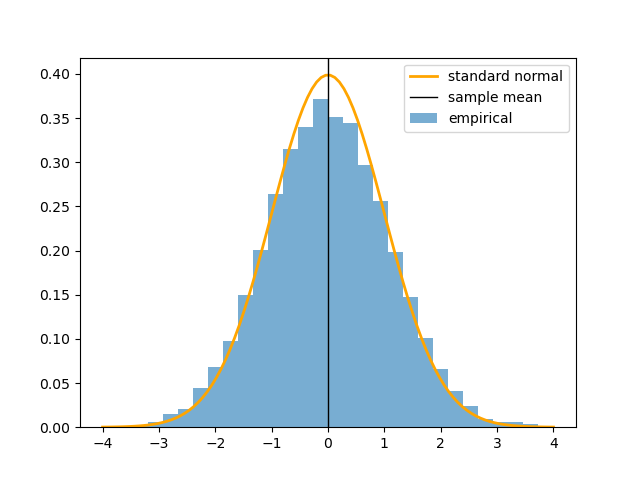}
        \caption*{\vanilla, Arm $0$}
    \end{minipage}
    \begin{minipage}{0.22\textwidth}
        \centering
        \includegraphics[width=\textwidth]{./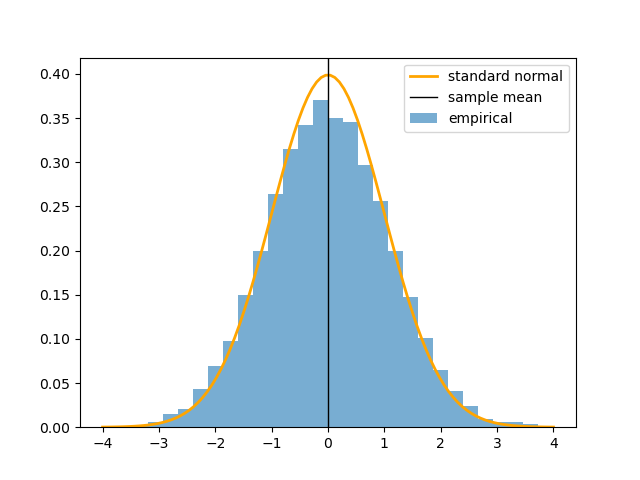}
        \caption*{\sipw, Arm $0$}
    \end{minipage}
    \begin{minipage}{0.22\textwidth}
        \centering
        \includegraphics[width=\textwidth]{./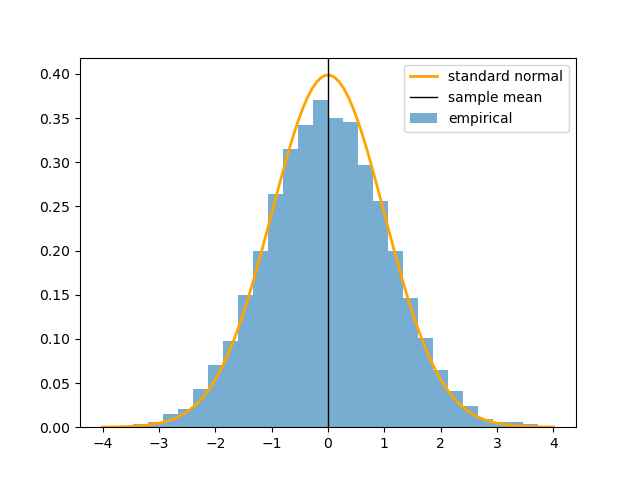}
        \caption*{\ipw, Arm $0$}
    \end{minipage}\\
    \begin{minipage}{0.22\textwidth}
        \centering
        \includegraphics[width=\textwidth]{./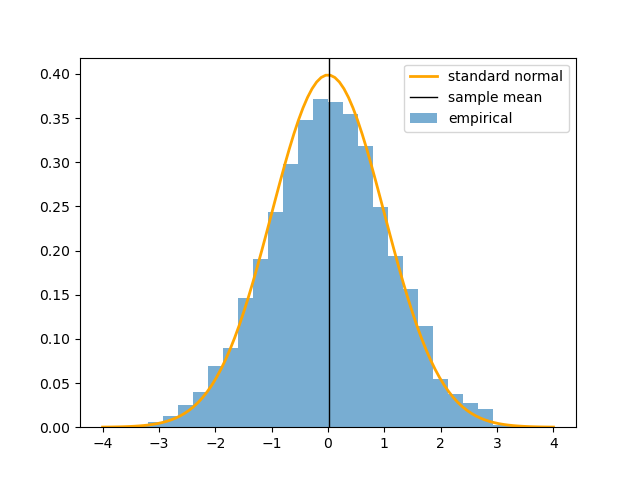}
        \caption*{\vanilla, Arm $1$}
    \end{minipage}
    \begin{minipage}{0.22\textwidth}
        \centering
        \includegraphics[width=\textwidth]{./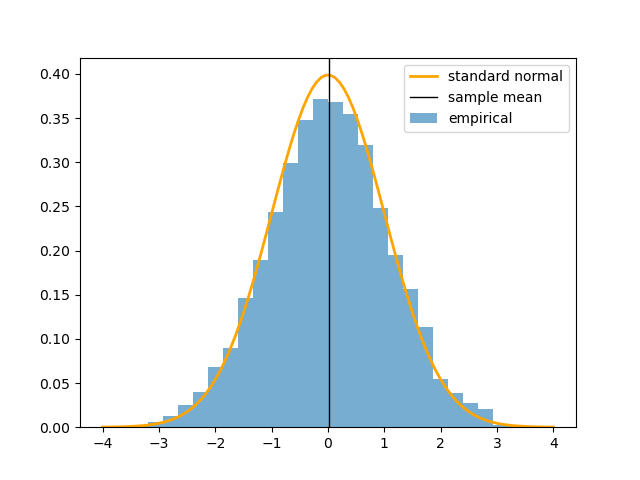}
        \caption*{\sipw, Arm $1$}
    \end{minipage}
    \begin{minipage}{0.22\textwidth}
        \centering
        \includegraphics[width=\textwidth]{./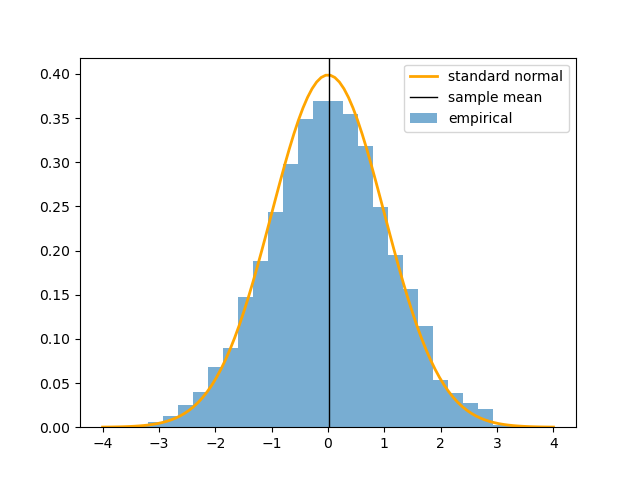}
        \caption*{\ipw, Arm $1$}
    \end{minipage}
    \caption{SGD on linear regression with exponential policy and different weights in the degenerate model. We report the empirical distribution of each action's first dimension of $\sqrt{t}\hat{S}_t^{-1/2}\hat{H}_t(\bar{\theta}_t - \theta^*)$ for $10,000$ Monte-Carlo simulations.}
    \label{fig-exp3-std-de}
\end{figure}
{
\subsection{Addtional figures}
\vspace{-1em}
\begin{figure}[H]
    \centering
    \includegraphics[width=1\linewidth]{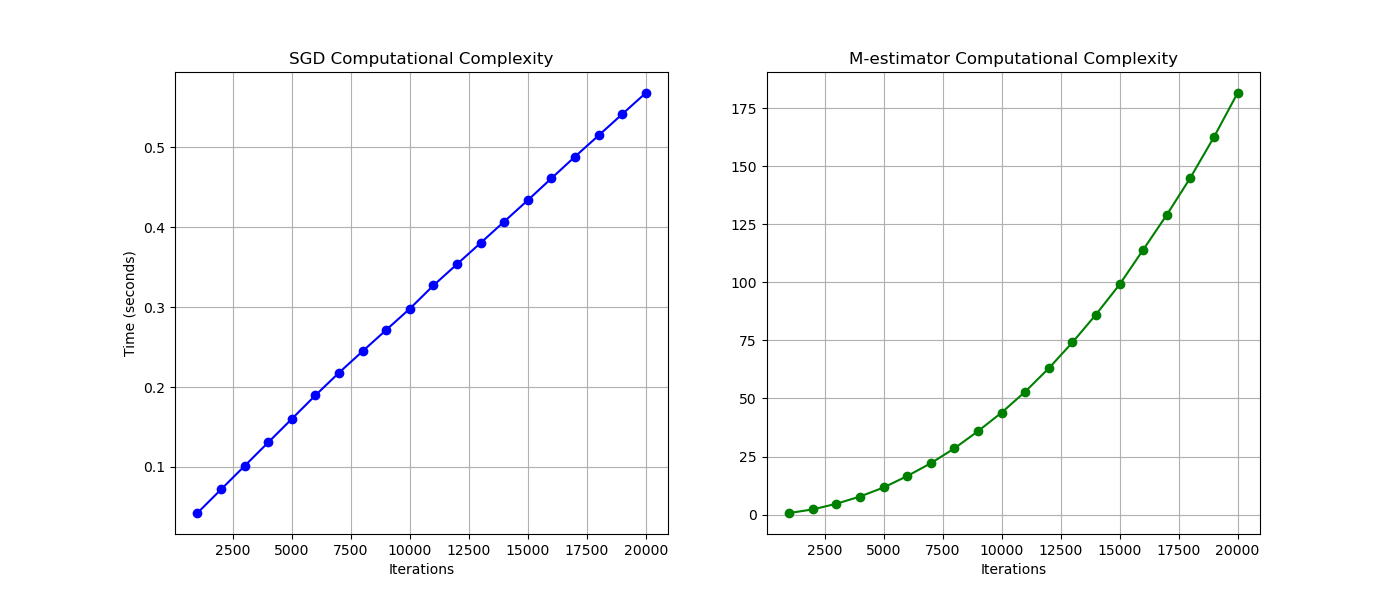}
    \caption{Comparison of running time between SGD and M-estimators in the linear regression settings. The parameter dimension is fixed at 10, with time recorded every 1000 iterations up to a total of 20,000 iterations.}
    \label{fig:Time}
\end{figure}

\begin{figure}[H]
    \centering
    \includegraphics[width=0.85\linewidth]{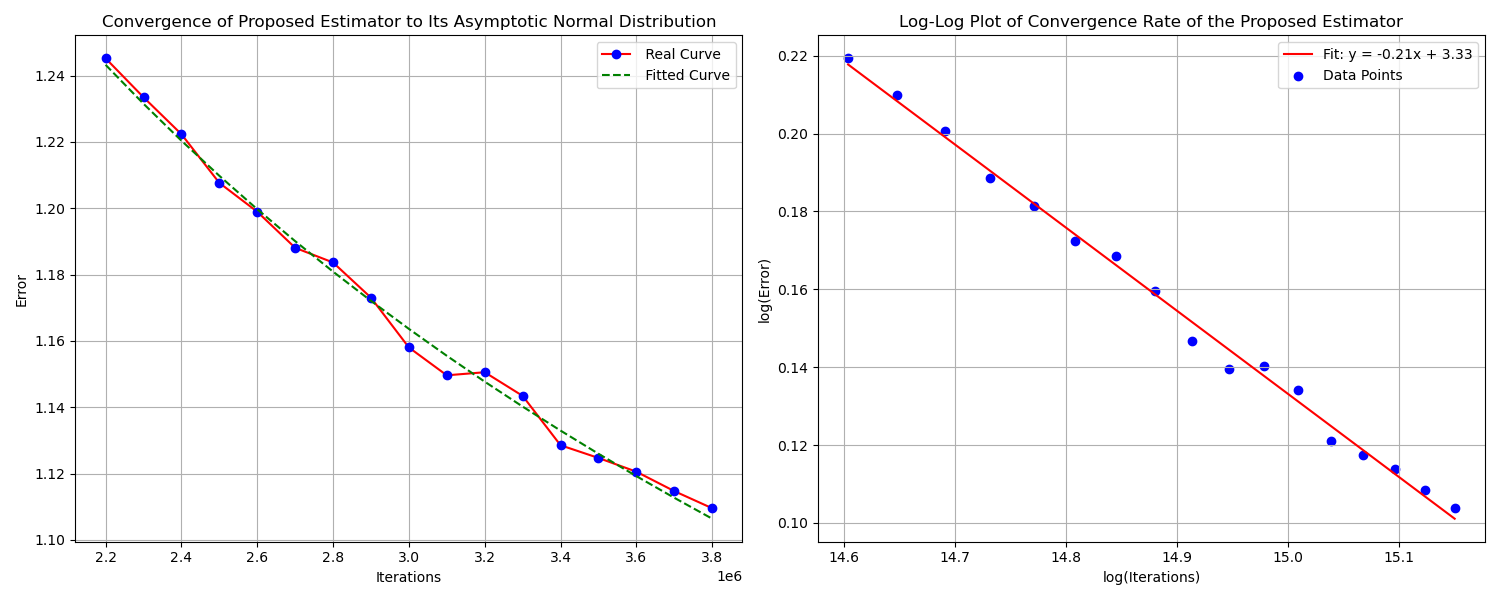}
    \caption{(Left) The average absolute error $ | \sqrt{t} \Sigma^{-1/2} (\bar{\theta}_t - \theta^*) - W |$, plotted on a log-log scale, based on 3000 Monte Carlo simulations with the where $t$ ranges from $2.2$ to $3.8$ million. (Right) A linear regression fit applied to the data presented in the left panel, resulting in $y=-0.21x+3.33$. }

    \label{fig:-0.2}
\end{figure}

\begin{figure}[H]
    \centering
    \includegraphics[width=0.6\linewidth]{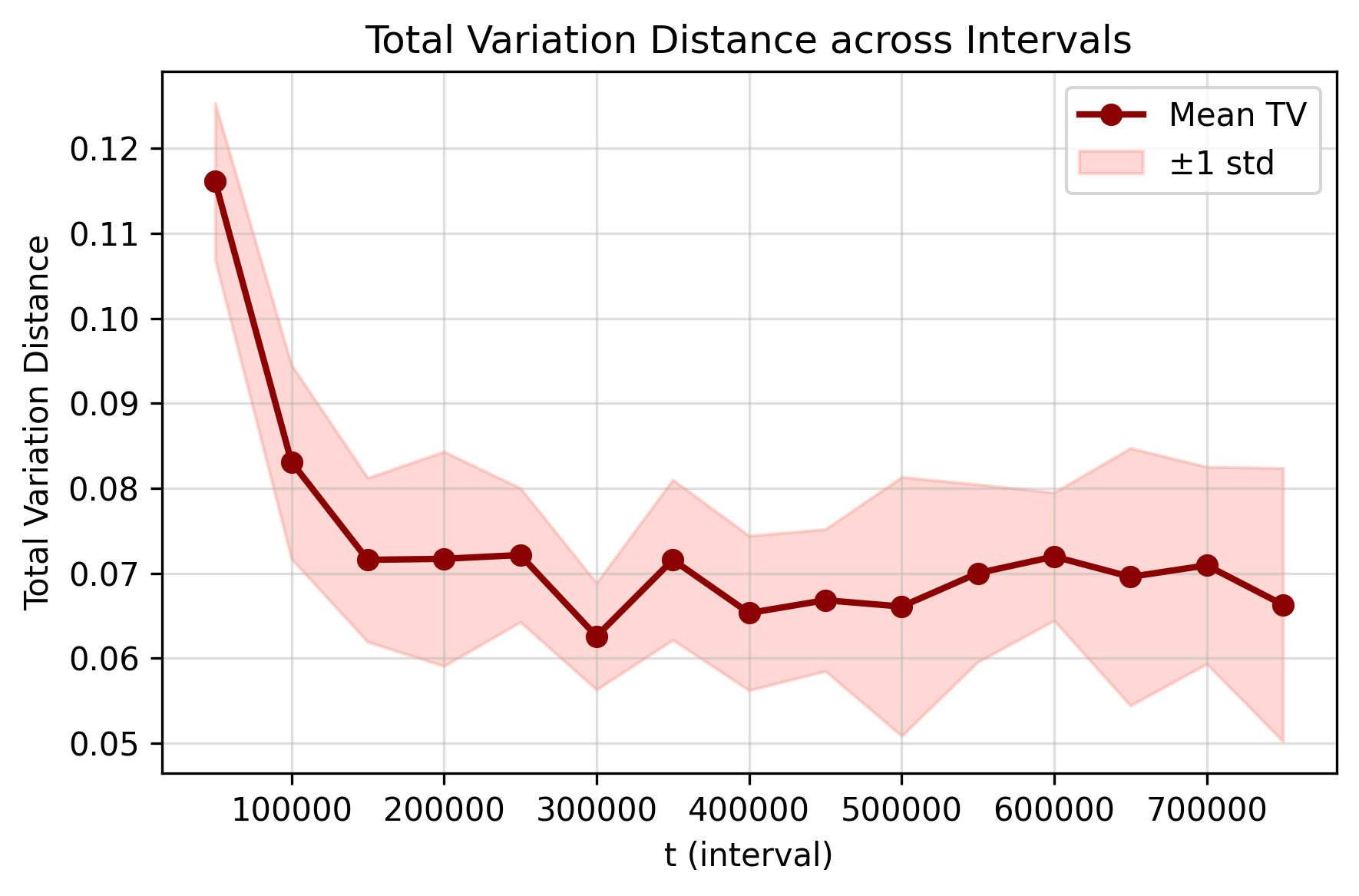}
    \caption{Evolution of total variation distance across time intervals for linear regression with modified $\varepsilon$-greedy in the degenerate model.}

    \label{fig:avg tv}
\end{figure}

}

\subsection{Additional tables}
\begin{table}[H]
  \centering
  \small
  \begin{tabular}{c|ccccc}
    \toprule
    \multirow{2}{*}{Weight \& Arm} & Sample size & Plug-in  & Oracle  & Plug-in & Oracle \\ 
     & Size & Coverage  & Coverage  & Lengths & Lengths \\ \hline
    & $1\times 10^5$ & 0.916 (0.090) & 0.907 (0.094) & 0.570 (0.007) & 0.554 \\
    \multirow{-2}{*}{\vanilla, Arm 0} & $2\times 10^5$ & 0.934 (0.076) & 0.930 (0.080) & 0.563 (0.004) & 0.554 \\
    & $1\times 10^5$ & 0.916 (0.088) & 0.907 (0.093) & 0.571 (0.007) & 0.554 \\
    \multirow{-2}{*}{\vanilla, Arm 1} & $2\times 10^5$ & 0.929 (0.080) & 0.926 (0.087) & 0.563 (0.004) & 0.554 \\ \hline
    & $1\times 10^5$ & 0.910 (0.091) & 0.900 (0.097) & 0.741 (0.029) & 0.716 \\
    \multirow{-2}{*}{\sipw, Arm 0} & $2\times 10^5$ & 0.929 (0.084) & 0.923 (0.084) & 0.729 (0.017) & 0.716 \\
    
    & $1\times 10^5$ & 0.917 (0.087) & 0.906 (0.091) & 0.744 (0.041) & 0.716 \\
    
    \multirow{-2}{*}{\sipw, Arm 1} & $2\times 10^5$ & 0.926 (0.086) & 0.922 (0.086) & 0.730 (0.020) & 0.716 \\ \hline
    & $1\times 10^5$ & 0.867 (0.131) & 0.654 (0.318) & 36.898 (529.276) & 2.786 \\
    \multirow{-2}{*}{\ipw, Arm 0} & $2\times 10^5$ & 0.881 (0.122) & 0.696 (0.315) & 14.047 (51.389) & 2.786 \\
    & $1\times 10^5$ & 0.878 (0.129) & 0.674 (0.322) & 26.785 (169.067) & 2.786 \\
    \multirow{-2}{*}{\ipw, Arm 1} & $2\times 10^5$ & 0.882 (0.131) & 0.718 (0.314) & 21.004 (145.044) & 2.786 \\
    \bottomrule
  \end{tabular}
  \caption{{Inference results of non-degenerate linear regression with modified $\varepsilon$-greedy and different weighting schemes.} Averaged coverage rates and average lengths of the confidence intervals are reported for plug-in estimator and oracle estimator {with 95\% confidence level}. We also include standard error in the parentheses.}
  \label{table-cov-linear}
\end{table}

\begin{table}[H]
  \centering
  \small
  \begin{tabular}{c|ccccc}
    \toprule
    \multirow{2}{*}{Weight \& Arm} & Sample size & Plug-in  & Oracle  & Plug-in & Oracle \\ 
     & Size & Coverage  & Coverage  & Lengths & Lengths \\ \hline
    & $1\times 10^5$ & 0.915 (0.093) & 0.906 (0.097) & 0.571 (0.007) & 0.554 \\
    \multirow{-2}{*}{\vanilla, Arm 0} & $2\times 10^5$ & 0.928 (0.082) & 0.924 (0.084) & 0.563 (0.004) & 0.554 \\
    & $1\times 10^5$ & 0.916 (0.088) & 0.905 (0.095) & 0.571 (0.007) & 0.554 \\
    \multirow{-2}{*}{\vanilla, Arm 1} & $2\times 10^5$ & 0.931 (0.081) & 0.926 (0.085) & 0.563 (0.004) & 0.554 \\ \hline
    & $1\times 10^5$ & 0.895 (0.102) & 0.883 (0.107) & 0.575 (0.028) & 0.554 \\
    \multirow{-2}{*}{\sipw, Arm 0} & $2\times 10^5$ & 0.918 (0.084) & 0.912 (0.087) & 0.565 (0.016) & 0.554 \\
    
    & $1\times 10^5$ & 0.894 (0.101) & 0.880 (0.107) & 0.576 (0.031) & 0.554 \\
    
    \multirow{-2}{*}{\sipw, Arm 1} & $2\times 10^5$ & 0.920 (0.087) & 0.913 (0.092) & 0.567 (0.018) & 0.554 \\ \hline
    & $1\times 10^5$ & 0.789 (0.147) & 0.445 (0.340) & 184.703 (4857.167) & 0.554 \\
    \multirow{-2}{*}{\ipw, Arm 0} & $2\times 10^5$ & 0.810 (0.148) & 0.511 (0.348) & 39.979 (503.013) & 0.554 \\
    & $1\times 10^5$ & 0.785 (0.152) & 0.445 (0.340) & 46.074 (708.325) & 0.554 \\
    \multirow{-2}{*}{\ipw, Arm 1} & $2\times 10^5$ & 0.812 (0.147) & 0.502 (0.352) & 76.530 (1700.863) & 0.554 \\
    \bottomrule
  \end{tabular}
  \caption{{Inference results of degenerate linear regression with modified $\varepsilon$-greedy and different weighting schemes.} Averaged coverage rates and average lengths of the confidence intervals are reported for plug-in estimator and oracle estimator {with 95\% confidence level}. We also include standard error in the parentheses.}
  \label{table-cov-linear1}
\end{table}

\begin{table}[H]
  \centering
  \small
  {\begin{tabular}{c|c|cc|cc}
    \toprule
    \multirow{2}{*}{Weight \& Arm} & \multirow{2}{*}{Sample size} &
    \multicolumn{2}{c|}{\textbf{Non-degenerate}} &
    \multicolumn{2}{c}{\textbf{Degenerate}} \\ 
     &  & Coverage & Lengths & Coverage & Lengths \\ \hline
    & $1\times 10^5$ & 0.934 (0.079) & 0.563 (0.004) & 0.932 (0.081) & 0.563 (0.004) \\
    \multirow{-2}{*}{\vanilla, Arm 0} & $2\times 10^5$ & 0.931 (0.079) & 0.563 (0.004) & 0.930 (0.079) & 0.563 (0.004) \\
    & $1\times 10^5$ & 0.927 (0.083) & 0.563 (0.004) & 0.931 (0.081) & 0.563 (0.004) \\
    \multirow{-2}{*}{\vanilla, Arm 1} & $2\times 10^5$ & 0.935 (0.079) & 0.563 (0.004) & 0.933 (0.079) & 0.563 (0.004) \\ \hline
    & $1\times 10^5$ & 0.929 (0.082) & 0.611 (0.007) & 0.930 (0.082) & 0.563 (0.015) \\
    \multirow{-2}{*}{\sipw, Arm 0} & $2\times 10^5$ & 0.928 (0.078) & 0.612 (0.010) & 0.928 (0.082) & 0.563 (0.005) \\
    & $1\times 10^5$ & 0.928 (0.080) & 0.612 (0.008) & 0.929 (0.081) & 0.563 (0.005) \\
    \multirow{-2}{*}{\sipw, Arm 1} & $2\times 10^5$ & 0.932 (0.081) & 0.612 (0.011) & 0.931 (0.079) & 0.563 (0.007) \\ \hline
    & $1\times 10^5$ & 0.927 (0.094) & 1.342 (0.932) & 0.931 (0.083) & 0.577 (0.237) \\
    \multirow{-2}{*}{\ipw, Arm 0} & $2\times 10^5$ & 0.933 (0.093) & 1.324 (1.109) & 0.930 (0.080) & 0.573 (0.050) \\
    & $1\times 10^5$ & 0.927 (0.089) & 1.328 (0.936) & 0.927 (0.083) & 0.577 (0.118) \\
    \multirow{-2}{*}{\ipw, Arm 1} & $2\times 10^5$ & 0.937 (0.086) & 1.296 (0.440) & 0.933 (0.078) & 0.576 (0.093) \\
    \bottomrule
  \end{tabular}}
  \caption{{Inference results for non-degenerate and degenerate linear regression with exponential policy under different weighting schemes.} 
  Averaged coverage rates and average lengths of confidence intervals are reported for plug-in estimator {with 95\% confidence level}. Standard errors are shown in parentheses.}
  \label{table-cov-linear-combined}
\end{table}

\begin{table}[H]
  \centering
  \small
  {\begin{tabular}{c|c|cc|cc}
    \toprule
    \multirow{2}{*}{Weight \& Arm} & \multirow{2}{*}{Sample size} &
    \multicolumn{2}{c|}{\textbf{Non-degenerate}} &
    \multicolumn{2}{c}{\textbf{Degenerate}} \\ 
     &  & Coverage & Lengths & Coverage & Lengths \\ \hline
    & $1\times 10^6$ & 0.933 (0.079) & 0.755 & 0.938 (0.077) & 0.755 \\
    \multirow{-2}{*}{\vanilla, Arm 0} & $2\times 10^6$ & 0.938 (0.082) & 0.755 & 0.941 (0.072) & 0.755 \\
    & $1\times 10^6$ & 0.936 (0.079) & 0.755 & 0.934 (0.077) & 0.755 \\
    \multirow{-2}{*}{\vanilla, Arm 1} & $2\times 10^6$ & 0.940 (0.077) & 0.755 & 0.941 (0.072) & 0.755 \\ \hline
    & $1\times 10^6$ & 0.929 (0.080) & 0.976 & 0.922 (0.088) & 0.755 \\
    \multirow{-2}{*}{\sipw, Arm 0} & $2\times 10^6$ & 0.938 (0.075) & 0.976 & 0.940 (0.075) & 0.755 \\
    & $1\times 10^6$ & 0.932 (0.081) & 0.976 & 0.923 (0.085) & 0.755 \\
    \multirow{-2}{*}{\sipw, Arm 1} & $2\times 10^6$ & 0.940 (0.076) & 0.976 & 0.933 (0.080) & 0.755 \\ \hline
    & $1\times 10^6$ & 0.679 (0.222) & 3.796 & 0.206 (0.168) & 0.755 \\
    \multirow{-2}{*}{\ipw, Arm 0} & $2\times 10^6$ & 0.781 (0.185) & 3.796 & 0.273 (0.202) & 0.755 \\
    & $1\times 10^6$ & 0.678 (0.221) & 3.796 & 0.201 (0.167) & 0.755 \\
    \multirow{-2}{*}{\ipw, Arm 1} & $2\times 10^6$ & 0.760 (0.187) & 3.796 & 0.273 (0.197) & 0.755 \\
    \bottomrule
  \end{tabular}}
  \caption{{Inference results for non-degenerate and degenerate quantile regression with modified $\varepsilon$-greedy policy under different weighting schemes.} Averaged coverage rates under 
 oracle confidence intervals are reported(95\% confidence level). Standard errors are shown in parentheses.}
  \label{table-cov-linear-combined1}
\end{table}

\spacingset{1.5}
\begin{table}[H]
  \centering
  \small
  \begin{tabular}{cccccccc}
    \toprule
    Weight \& Arm & Parameter & Estimate & S.E. & 95\% LB & 95\% UB & $t$-value & $p$-value \\ \hline
    & $\theta_{1}$ & -2.56 & 0.04 & -2.64 & -2.48 & -65.52 & 0.00 \\
    & $\theta_{2}$ & -0.26 & 0.08 & -0.43 & -0.10 & -3.11 & 0.00 \\
    & $\theta_{3}$ & -0.48 & 0.07 & -0.62 & -0.34 & -6.80 & 0.00 \\
    & $\theta_{4}$ & -0.23 & 0.06 & -0.34 & -0.12 & -4.09 & 0.00 \\
    \multirow{-5}{*}{\vanilla, Arm 0} & $\theta_{5}$ & -0.90 & 0.07 & -1.03 & -0.77 & -13.65 & 0.00 \\ \hline
    & $\theta_{6}$ & -2.55 & 0.05 & -2.65 & -2.44 & -47.77 & 0.00 \\
    & $\theta_{7}$ & -0.24 & 0.08 & -0.40 & -0.09 & -3.06 & 0.00 \\
    & $\theta_{8}$ & -0.45 & 0.07 & -0.58 & -0.32 & -6.76 & 0.00 \\
    & $\theta_{9}$ & -0.41 & 0.11 & -0.62 & -0.19 & -3.71 & 0.00 \\
    \multirow{-5}{*}{\vanilla, Arm 1} & $\theta_{10}$ & -0.91 & 0.07 & -1.05 & -0.77 & -12.31 & 0.00 \\ \hline
    & $\theta_{1}$ & -2.52 & 0.05 & -2.62 & -2.43 & -52.85 & 0.00 \\
    & $\theta_{2}$ & -0.30 & 0.11 & -0.51 & -0.09 & -2.79 & 0.01 \\
    & $\theta_{3}$ & -0.49 & 0.09 & -0.66 & -0.31 & -5.56 & 0.00 \\
    & $\theta_{4}$ & -0.28 & 0.07 & -0.4 & -0.15 & -4.25 & 0.00 \\
    \multirow{-5}{*}{\sipw, Arm 0} & $\theta_{5}$ & -0.80 & 0.09 & -0.97 & -0.63 & -9.33 & 0.00 \\ \hline
    & $\theta_{6}$ & -2.51 & 0.05 & -2.61 & -2.41 & -49.35 & 0.00 \\
    & $\theta_{7}$ & -0.28 & 0.08 & -0.43 & -0.13 & -3.60 & 0.00 \\
    & $\theta_{8}$ & -0.45 & 0.06 & -0.58 & -0.33 & -7.10 & 0.00 \\
    & $\theta_{9}$ & -0.42 & 0.11 & -0.63 & -0.20 & -3.83 & 0.00 \\
    \multirow{-5}{*}{\sipw, Arm 1} & $\theta_{10}$ & -0.81 & 0.07 & -0.94 & -0.68 & -12.02 & 0.00 \\ \hline
    & $\theta_{1}$ & -2.64 & 0.10 & -2.85 & -2.44 & -25.54 & 0.00 \\
    & $\theta_{2}$ & -0.28 & 0.19 & -0.64 & 0.08 & -1.51 & 0.13 \\
    & $\theta_{3}$ & -0.51 & 0.15 & -0.80 & -0.23 & -3.49 & 0.00 \\
    & $\theta_{4}$ & -0.24 & 0.16 & -0.55 & 0.07 & -1.54 & 0.12 \\
    \multirow{-5}{*}{\ipw, Arm 0} & $\theta_{5}$ & -0.91 & 0.16 & -1.23 & -0.59 & -5.64 & 0.00 \\ \hline
    & $\theta_{6}$ & -2.47 & 0.03 & -2.53 & -2.40 & -76.6 & 0.00 \\
    & $\theta_{7}$ & -0.22 & 0.06 & -0.33 & -0.11 & -3.83 & 0.00 \\
    & $\theta_{8}$ & -0.51 & 0.05 & -0.60 & -0.42 & -11.08 & 0.00 \\
    & $\theta_{9}$ & -0.37 & 0.05 & -0.47 & -0.27 & -7.40 & 0.00 \\
    \multirow{-5}{*}{\ipw, Arm 1} & $\theta_{10}$ & -0.88 & 0.05 & -0.98 & -0.78 & -17.67 & 0.00 \\
    \bottomrule
  \end{tabular}
  \caption{Real data analysis with online inference using $\varepsilon$-greedy algorithm with $\varepsilon = 0.2$.}
  \label{table-data-1}
\end{table}

\begin{table}[H]
  \centering
  \small
  \begin{tabular}{cccccccc}
    \toprule
    Weight \& Arm & Parameter & Estimate & S.E. & 95\% LB & 95\% UB & $t$-value & $p$-value \\ \hline
    & $\theta_{1}$ & -2.55 & 0.04 & -2.63 & -2.48 & -68.62 & 0.00 \\
    & $\theta_{2}$ & -0.31 & 0.09 & -0.47 & -0.14 & -3.61 & 0.00 \\
    & $\theta_{3}$ & -0.45 & 0.07 & -0.6 & -0.31 & -6.18 & 0.00 \\
    & $\theta_{4}$ & -0.23 & 0.05 & -0.33 & -0.12 & -4.29 & 0.00 \\
    \multirow{-5}{*}{\vanilla, Arm 0} & $\theta_{5}$ & -0.88 & 0.07 & -1.01 & -0.75 & -13.45 & 0.00 \\ \hline
    & $\theta_{6}$ & -2.54 & 0.06 & -2.66 & -2.42 & -41.76 & 0.00 \\
    & $\theta_{7}$ & -0.29 & 0.09 & -0.45 & -0.12 & -3.36 & 0.00 \\
    & $\theta_{8}$ & -0.42 & 0.07 & -0.57 & -0.28 & -5.88 & 0.00 \\
    & $\theta_{9}$ & -0.42 & 0.19 & -0.79 & -0.04 & -2.18 & 0.03 \\
    \multirow{-5}{*}{\vanilla, Arm 1} & $\theta_{10}$ & -0.89 & 0.08 & -1.04 & -0.73 & -11.25 & 0.00 \\ \hline
    & $\theta_{1}$ & -2.49 & 0.05 & -2.58 & -2.40 & -54.74 & 0.00 \\
    & $\theta_{2}$ & -0.31 & 0.13 & -0.57 & -0.05 & -2.37 & 0.02 \\
    & $\theta_{3}$ & -0.45 & 0.12 & -0.68 & -0.21 & -3.74 & 0.00 \\
    & $\theta_{4}$ & -0.29 & 0.06 & -0.41 & -0.17 & -4.78 & 0.00 \\
    \multirow{-5}{*}{\sipw, Arm 0} & $\theta_{5}$ & -0.82 & 0.08 & -0.98 & -0.66 & -9.80 & 0.00 \\ \hline
    & $\theta_{6}$ & -2.48 & 0.08 & -2.64 & -2.33 & -31.13 & 0.00 \\
    & $\theta_{7}$ & -0.29 & 0.10 & -0.50 & -0.09 & -2.84 & 0.00 \\
    & $\theta_{8}$ & -0.42 & 0.09 & -0.60 & -0.25 & -4.69 & 0.00 \\
    & $\theta_{9}$ & -0.4 & 0.25 & -0.90 & 0.09 & -1.60 & 0.11 \\
    \multirow{-5}{*}{\sipw, Arm 1} & $\theta_{10}$ & -0.82 & 0.10 & -1.01 & -0.63 & -8.49 & 0.00 \\ \hline
    & $\theta_{1}$ & -2.75 & 0.33 & -3.40 & -2.11 & -8.37 & 0.00 \\
    & $\theta_{2}$ & -0.22 & 0.57 & -1.35 & 0.90 & -0.39 & 0.70 \\
    & $\theta_{3}$ & -0.80 & 0.50 & -1.78 & 0.18 & -1.59 & 0.11 \\
    & $\theta_{4}$ & 0.11 & 0.39 & -0.65 & 0.87 & 0.28 & 0.78 \\
    \multirow{-5}{*}{\ipw, Arm 0} & $\theta_{5}$ & -0.90 & 0.51 & -1.89 & 0.09 & -1.78 & 0.08 \\ \hline
    & $\theta_{6}$ & -2.40 & 0.09 & -2.57 & -2.23 & -27.81 & 0.00 \\
    & $\theta_{7}$ & -0.33 & 0.14 & -0.60 & -0.07 & -2.46 & 0.01 \\
    & $\theta_{8}$ & -0.33 & 0.08 & -0.48 & -0.17 & -4.17 & 0.00 \\
    & $\theta_{9}$ & -0.55 & 0.30 & -1.14 & 0.05 & -1.81 & 0.07 \\
    \multirow{-5}{*}{\ipw, Arm 1} & $\theta_{10}$ & -1.14 & 0.20 & -1.53 & -0.76 & -5.81 & 0.00 \\
    \bottomrule
  \end{tabular}
  \caption{Real data analysis with online inference using $\varepsilon$-greedy algorithm with $\varepsilon = 0.02$.}
  \label{table-data-2}
\end{table}

\spacingset{1.8}
\section{Proof of the general asymptotic normality result}
\label{sec:app-clt}

\subsubsection*{Proof of Theorem~\ref{thm:smooth-clt}}

\begin{proof}
By definition, the loss function can be written as
\begin{align*}
\L_{\theta^\prime}(\theta)& = \E[w(\theta^\prime; X, A^\prime)\ell(\theta; X, A^\prime, Y)]\\
& = \E_{\P_X, \pi(X, \theta^\prime)} \left\{\E_{\P_{Y \mid X,A'}}[w(\theta^\prime; X, A^\prime)\ell(\theta; X, A^\prime,Y)]\mid X, A^\prime \right\}.
\end{align*}
By Equation~\eqref{eq:oracle}, $\theta^{*}$ is the minimizer, i.e., $\theta^{*} \in \underset{\theta \in \R^d}{\operatorname{argmin}} \, \L_{\theta^\prime}(\theta)$. Because $\L$ is differentiable, we have $\nabla \L_{\theta^\prime}(\theta^*) = 0$. 
Moreover, we have
\begin{align} \label{eq:bound_grad}
\|\nabla \L_{\theta_{t-1}}(\theta_{t-1}) - H(\theta_{t-1} - \theta^*)\| & = \left\|\nabla \L_{\theta_{t-1}}(\theta_{t-1}) - \nabla \L_{\theta_{t-1}}(\theta^*) - H(\theta_{t-1} - \theta^*) \right\| \notag\\
& = \left\|\int_0^1 \left(\nabla^2 \L_{\theta_{t-1}}(\theta^* + s(\theta_{t-1} - \theta^*)) - H\right)(\theta_{t-1} - \theta^*) \d s \right\| \notag\\
& \leq K \|\theta_{t-1} - \theta^*\|^2,
\end{align}
{for large enough $t$ and $\|\theta_{t-1}-\theta^*\|\leq\delta$ as stated in Assumption \ref{assum:hessian}. Note \eqref{eq:bound_grad} ensures the asymptotic equivalence of two SGD processes, where the gradients for update are  $\nabla \L_{\theta_{t-1}}(\theta_{t-1})$ and $H(\theta_{t-1} - \theta^*)$, respectively. This part verifies Assumption 3.2 in \cite{polyak1992acceleration}.}

By Equation~\eqref{eq:xi}, we have $w(\theta^\prime; X, A)\nabla \ell(\theta; \zeta) = \xi_{\theta^\prime}(\theta; \zeta)  + \nabla \L_{\theta^\prime}(\theta)$. Notice that we have the following inequality by Assumption~\ref{assum:gram},
\begin{align*}
\E[\|w(\theta; X, A)\nabla \ell(\theta; \zeta)\|^2] \leq \overline{w}^2 \E[\|\nabla \ell(\theta; \zeta)\|^2] \leq \overline{w}^2 \kappa (1+\|\theta - \theta^*\|^2),
\end{align*}
{where we let $\kappa=\E [\phi(X)]<\infty$}.
Therefore, by the fact that $\E[\xi_{\theta}(\theta; \zeta)] = 0$, the two terms can be bounded by
\begin{align*}
\left\| \nabla \L_{\theta}(\theta)\right\|^2 &\leq \overline{w}^2 \kappa (1+\|\theta - \theta^*\|^2), \\
\E \left[ \|\xi_{\theta}(\theta; \zeta)\|^2 \right] &\leq \overline{w}^2 \kappa (1+\|\theta - \theta^*\|^2).
\end{align*}

The above bounds can already guarantee the almost surely convergence of $\theta_t$ by Theorem 2 of \cite{polyak1992acceleration}. {Note this property is not influenced by whether it is the degenerate model or not.} Now we need to quantify the difference between $\xi_{\theta_{t-1}}(\theta_{t-1}; \zeta_t)$ and $\xi_{\theta^*}(\theta^*; \zeta_t^*)$ where $\zeta_t=(X_t,A_t,Y_t)$, $\zeta_t^*=(X_t,A_t^*,Y_t^*)$. {Here $Y_t$ and $Y_t^*$ depends on $A_t$ and $A_t^*$ respectively as well as the same $X_t$, especially when $A_t^*=A_t$, $Y_t^*$ is exactly $Y_t$.} For example, in linear contextual bandits, $Y_t=X_t^\top\theta_{A_t}^*+\epsilon_t$, $Y_t^*=X_t^\top\theta_{A_t^*}^*+\epsilon_t$. 
Using the coupling we defined in Equation~\eqref{eq:coupling} and given $\theta_{t-1}$, it can be bounded by
\begin{align*}
&\E\|\xi_{\theta_{t-1}}(\theta_{t-1}; \zeta_t) - \xi_{\theta^\star}(\theta^*; \zeta_t^*)\|^2 \\
\leq \; & 2\|\nabla \L_{\theta_{t-1}}(\theta_{t-1}) - \nabla \L_{\theta^*}(\theta^*)\|^2  + 2 \E\left[\|w(\theta_{t-1}; X_t, A_t)\nabla \ell(\theta_{t-1}; \zeta_t) - w(\theta^*; X_t, A_t^*)\nabla \ell(\theta^*; X_t, A_t^*, Y_t^*)\|^2 \right]\\
:= & 2M_1 + 2M_2.
\end{align*}
From \eqref{eq:bound_grad}, we have the following bound for $M_1$ for large enough $t$,
\begin{align}
\label{eq:bound_M1}
M_1 \leq 2 K^2 \| \theta_{t-1} - \theta^* \|^4 + 2 \| H \|^2 \| \theta_{t-1} - \theta^* \|^2.
\end{align}
Therefore, as $\theta_{t-1}$ converges to $\theta^*$, $M_1$ converges to $0$.

The second term has the following inequality,
\begin{align} \label{eq:bound_M2}
M_2 & \leq \E\left[\Delta(X_t, \theta_{t-1})M_3 + (1-\Delta(X_t, \theta_{t-1}))M_4\right],
\end{align}
where
\begin{align*}
M_3 &:= \E\left[ \left\|w(\theta_{t-1}; X_t, A_t)\nabla \ell(\theta_{t-1}; X_t, A_t, Y_t) - w(\theta^*; X_t, A^*_t)\nabla \ell(\theta^*; X_t, A^*_t, Y_t^*)\right\|^2\mid X_t, A_t \neq A^*_t \right], \\
M_4 &:= \E \left[ \left\|w(\theta_{t-1}; X_t, A_t)\nabla \ell(\theta_{t-1}; X_t, A_t, Y_t) - w(\theta^*; X_t, A^*_t)\nabla \ell(\theta^*; X_t, A^*_t, Y_t)\right\|^2\mid X_t, A_t=A^*_t \right]
\end{align*}
The third term $M_3$ can be bounded as,
\begin{align} \label{eq:bound_M3}
M_3 & \leq 2 \overline{w}^2 \E \left[\|\nabla \ell(\theta_{t-1}; X_t, A_t, Y_t)\|^2 +\|\nabla \ell(\theta^*; X_t, A^*_t, Y^*_t)\|^2\mid X_t, A_t \neq A^*_t \right]\notag\\
&\leq 2 \overline{w}^2\E\|\nabla \ell(\theta_{t-1}; X_t, A_t, Y_t)\|^2+2 \overline{w}^2\E\|\nabla \ell(\theta^*; X_t, A^*_t, Y^*_t)\|^2\notag\\
& \leq 4\overline{w}^2 (1 + \|\theta_{t-1} - \theta^*\|^2)\phi(X_t).
\end{align}
Finally, we have
\begin{align} \label{eq:bound_M4}
M_4 & \leq \max_{A\in \A} \E\left[\|w(\theta_{t-1}; X_t, A)\nabla \ell(\theta_{t-1}; X_t, A, Y_t) - w(\theta^*; X_t, A)\nabla \ell(\theta^*; X_t, A, Y_t)\|^2\mid X_t, A \right] \notag\\
& \leq 2\max_{A\in \A} \E\left[\|w(\theta_{t-1}; X_t, A)\nabla \ell(\theta_{t-1}; X_t, A, Y_t) - w(\theta_{t-1}; X_t, A)\nabla \ell(\theta^*; X_t, A, Y_t)\|^2\mid X_t, A \right] \notag\\
& \quad +2\max_{A\in \A} \E \left[\|w(\theta_{t-1}; X_t, A)\nabla \ell(\theta^*; X_t, A, Y_t) - w(\theta^*; X_t, A)\nabla \ell(\theta^*; X_t, A, Y_t)\|^2\mid X_t, A \right] \notag\\
& \leq 2 \overline{w}^2 \max_{A\in \A} \E \left[\|\nabla \ell(\theta_{t-1}; X_t, A, Y_t) - \nabla \ell(\theta^*; X_t, A, Y_t)\|^2\mid X_t, A \right] \notag\\
& \quad +2\max_{A\in \A}|w(\theta_{t-1}; X_t, A) - w(\theta^*; X_t, A)|^2 \phi(X_t)
\end{align}
Combining \eqref{eq:bound_M3} and \eqref{eq:bound_M4} into \eqref{eq:bound_M2}, we have
\begin{align}
\label{eq:bound_M2b}
M_2 & \leq \E\left[\Delta(X_t, \theta_{t-1})4\overline{w}^2 (1 + \|\theta_{t-1} - \theta^*\|^2)\phi(X_t)\right] \notag \\
&\quad + \E\left[ 2 \overline{w}^2 \max_{A\in \A} \E [\|\nabla \ell(\theta_{t-1}; X_t, A, Y_t) - \nabla \ell(\theta^*; X_t, A, Y_t)\|^2\mid X_t, A] \right] \notag \\
&\quad + \E\left[ 2\max_{A\in \A} |w(\theta_{t-1}; X_t, A) - w(\theta^*; X_t, A)|^2 \phi(X_t)\right].
\end{align}
Using Assumption~\ref{assum:tv}, when $\theta_{t-1} \rightarrow \theta^*$, we have $M_2$ converges to $0$.

We can now conclude from our above results that
\begin{align*}
\lim_{\theta_{t-1} \rightarrow \theta^*}\E\|\xi_{\theta_{t-1}}(\theta_{t-1}; \zeta_t) - \xi_{\theta^*}(\theta^*; \zeta^*_t)\|^2 = 0,
\end{align*}
{which implies Assumption 3.3 in \cite{polyak1992acceleration} is satisfied. Particularly,}
{\begin{align*}
    \E\left(\xi_{\theta_{t-1}}(\theta_{t-1};\zeta_t)\xi_{\theta_{t-1}}(\theta_{t-1};\zeta_t)^\top\mid \mathcal{F}_{t-1}\right)\overset{p}{\rightarrow}S,\quad t\rightarrow\infty.
\end{align*}}
{Note that all three conditions in Theorem 2 of \cite{polyak1992acceleration} are verified under our assumptions, we can conclude that the asymptotic normality result in Theorem~\ref{thm:smooth-clt} holds, where the asymptotic covariance matrix is given by $H^{-1} S H^{-1}$, where $H = \nabla^2 \L_{\theta^*}(\theta^*)$ corresponds to $G$ in Theorem 2 of \cite{polyak1992acceleration}, and $S = \E[\xi_{\theta^*}(\theta^*; \zeta) \xi_{\theta^*}(\theta^*; \zeta)^\top]$ corresponds to $S$ in Theorem 2 of \cite{polyak1992acceleration}.}

\end{proof}

\section{Proof of Results in Linear Regression}\label{sec:supp-example}
\subsection{Proof of Corollary~\ref{corr:ls-reg}}\label{appendix:proof of cor ls}

\begin{proof}
Now let's compute $\L_{\theta_{t-1}}(\theta)$, under modified $\varepsilon$-greedy policy defined in \eqref{eq:modified eps-greedy},
\begin{align} \label{eq:modified loss-lr}
\L_{\theta_{t-1}}(\theta) &= \frac{1}{2} \E \left\{ \E_{\P_{Y\mid A,X}}\E_{\pi(X, \theta_{t-1})} \left[\varphi(\Pr(A \mid X, \theta_{t-1} \,)) \left((1-A)(Y - X^\top \theta_{0})^2 + A (Y - X^\top \theta_{1})^2\right) \mid X \right] \right\} \notag \\
&= \frac{1}{2}(1-\frac{\varepsilon}{2}) \varphi(1-\frac{\varepsilon}{2}) \E \left[ \ID \{\| \theta_{0,t-1} -  \theta_{1,t-1}\|> t^{-\frac{\alpha}{4}},X^\top \theta_{0,t-1}>  X^\top \theta_{1,t-1}\} \left(X^\top \theta^{*}_{0} - X^\top \theta_{0}\right)^2\right] \notag \\
&\quad + \frac{\varepsilon}{4} \varphi(\frac{\varepsilon}{2}) \E\left[ \ID \{\| \theta_{0,t-1} -  \theta_{1,t-1}\|> t^{-\frac{\alpha}{4}},X^\top \theta_{0,t-1} < X^\top \theta_{1,t-1}\} \left(X^\top \theta^{*}_{0} - X^\top \theta_{0}\right)^2 \right] \notag \\
&\quad + \frac{1}{2}(1-\frac{\varepsilon}{2}) \varphi(1-\frac{\varepsilon}{2}) \E\left[ \ID \{\| \theta_{0,t-1} -  \theta_{1,t-1}\|> t^{-\frac{\alpha}{4}},X^\top \theta_{0,t-1} < X^\top \theta_{1,t-1}\} \left(X^\top \theta^{*}_{1} - X^\top \theta_{1}\right)^2 \right] \notag \\
&\quad + \frac{\varepsilon}{4} \varphi(\frac{\varepsilon}{2}) \E \left[ \ID \{\| \theta_{0,t-1} -  \theta_{1,t-1}\|> t^{-\frac{\alpha}{4}},X^\top \theta_{0,t-1}>  X^\top \theta_{1,t-1}\} \left(X^\top \theta^{*}_{1} - X^\top \theta_{1}\right)^2 \right] \notag \\
&\quad +\frac{1}{4}\varphi(\frac{1}{2})\E\left[ \ID \{\| \theta_{0,t-1} -  \theta_{1,t-1}\|\leq t^{-\frac{\alpha}{4}}\} \left(X^\top \theta^{*}_{0} - X^\top \theta_{0}\right)^2 \right] \notag \\
&\quad +\frac{1}{4}\varphi(\frac{1}{2})\E\left[ \ID \{\| \theta_{0,t-1} -  \theta_{1,t-1}\|\leq t^{-\frac{\alpha}{4}}\} \left(X^\top \theta^{*}_{1} - X^\top \theta_{1}\right)^2 \right] \notag \\
&\quad + \sigma^2 \left[\frac{1}{2}(1-\frac{\varepsilon}{2}) \varphi(1-\frac{\varepsilon}{2}) + \frac{\varepsilon}{4} \varphi(\frac{\varepsilon}{2})+\frac{1}{4}\varphi(\frac{1}{2})\right].
\end{align}
{Obviously, replacing $\theta_{t-1}$ in the above equation with $\theta^\prime$ and repeating the same procedure, the first part of Assumption~\ref{assum:loss} is satisfied under this form of loss function $\L_{\theta^\prime}(\theta)$.}

Also, we can calculate the gradient of $\L$ with respect to $\theta$ as follows,
\begin{align*}
\nabla_{\theta_0} \L_{\theta_{t-1}}(\theta)
&= (1-\frac{\varepsilon}{2}) \varphi(1-\frac{\varepsilon}{2}) \E \left[ \ID \{\| \theta_{0,t-1} -  \theta_{1,t-1}\|> t^{-\frac{\alpha}{4}},X^\top \theta_{0,t-1}>  X^\top \theta_{1,t-1}\}  X X^\top \left(\theta_{0} - \theta^{*}_{0}\right)\right] \\
&\quad + \frac{\varepsilon}{2} \varphi(\frac{\varepsilon}{2}) \E\left[ \ID \{\| \theta_{0,t-1} -  \theta_{1,t-1}\|> t^{-\frac{\alpha}{4}},X^\top \theta_{0,t-1}< X^\top \theta_{1,t-1}\}  X X^\top \left(\theta_{0} - \theta^{*}_{0}\right) \right] \\
&\quad + \frac{1}{2} \varphi(\frac{1}{2}) \E\left[ \ID \{\| \theta_{0,t-1} -  \theta_{1,t-1}\|\leq t^{-\frac{\alpha}{4}}\}  X X^\top \left(\theta_{0} - \theta^{*}_{0}\right) \right] \\
\nabla_{\theta_1} \L_{\theta_{t-1}}(\theta)
&=  \frac{\varepsilon}{2} \varphi(\frac{\varepsilon}{2}) \E\left[ \ID \{\| \theta_{0,t-1} -  \theta_{1,t-1}\|> t^{-\frac{\alpha}{4}},X^\top \theta_{0,t-1}>  X^\top \theta_{1,t-1}\}  X X^\top \left(\theta_{1} - \theta^{*}_{1}\right) \right] \\
&\quad + (1-\frac{\varepsilon}{2}) \varphi(1-\frac{\varepsilon}{2}) \E \left[ \ID \{\| \theta_{0,t-1} -  \theta_{1,t-1}\|> t^{-\frac{\alpha}{4}},X^\top \theta_{0,t-1}< X^\top \theta_{1,t-1}\}  X X^\top \left(\theta_{1} - \theta^{*}_{1}\right)\right] \\
&\quad + \frac{1}{2} \varphi(\frac{1}{2}) \E\left[ \ID \{\| \theta_{0,t-1} -  \theta_{1,t-1}\|\leq t^{-\frac{\alpha}{4}}\}  X X^\top \left(\theta_{1} - \theta^{*}_{1}\right) \right].
\end{align*}
Therefore, the second part of Assumption~\ref{assum:loss} is naturally satisfied since replacing $\theta_{t-1}$ with $\theta$, we have the following,
{\begin{align*}
\langle \nabla \L_{\theta}(\theta), \theta - \theta^* \rangle \geq \min\left\{(1-\frac{\varepsilon}{2}) \varphi(1-\frac{\varepsilon}{2}), \frac{\varepsilon}{2} \varphi(\frac{\varepsilon}{2}),\frac{1}{2} \varphi(\frac{1}{2}) \right\} \, \E [X X^\top] \left[\left\|\theta_{0} - \theta^*_{0}\right\|^2 + \left\|\theta_{1} - \theta^*_{1}\right\|^2\right].
\end{align*}}
Obviously, from the definition of $\L_{\theta^\prime}(\theta)$, the Hessian matrix exists for all $(\theta, \theta^\prime) \in \R^d \times \R^d$, and the Hessian matrix at $(\theta^*, \theta^*)$ is positive definite since $\lambda_{\min} \E [X X^\top] > 0$, which verifies the first part of Assumption \ref{assum:hessian}.
We now consider the Hessian matrix under degenerate and non-degenerate models separately to validate \eqref{eq:hessian lipschitz} of Assumption \ref{assum:hessian}.

{\noindent\textbf{Non-degenerate model:}
     When $t$ is large enough,
    according to Lemma \ref{lem:non-degenerate-a.s.}, 
\begin{align*}
\nabla^2 \L_{\theta_{t-1}}(\theta) = \begin{bmatrix} H_{0,t-1} & 0 \\
0 & H_{1,t-1}
\end{bmatrix},
\end{align*}
where
\begin{align*}
H_{0,t-1} &= (1-\frac{\varepsilon}{2}) \varphi(1-\frac{\varepsilon}{2}) \E \left[ \ID \{X^\top \theta_{0,t-1}>  X^\top \theta_{1,t-1}\} X X^\top \right] \\
&\quad + \frac{\varepsilon}{2} \varphi(\frac{\varepsilon}{2}) \E\left[ \ID \{X^\top \theta_{0,t-1} < X^\top \theta_{1,t-1}\} X X^\top \right] \\
H_{1,t-1} &= \frac{\varepsilon}{2} \varphi(\frac{\varepsilon}{2}) \E \left[ \ID \{X^\top \theta_{0,t-1}>  X^\top \theta_{1,t-1}\} X X^\top \right] \\
&\quad + (1-\frac{\varepsilon}{2}) \varphi(1-\frac{\varepsilon}{2}) \E\left[ \ID \{X^\top \theta_{0,t-1} < X^\top \theta_{1,t-1}\} X X^\top \right].
\end{align*}
 We now check \eqref{eq:hessian lipschitz} in Assumption \ref{assum:hessian}, that is,
\begin{align*}
\left\| \nabla^2 \L_{\theta_{t-1}}(\theta) - \nabla^2 \L_{\theta^*}(\theta^*)\right\| \leq K \| \theta - \theta^* \| + K \|\theta_{t-1} - \theta^* \|,
\end{align*}
where $\L_{\theta^*}(\theta)$ is obtained by replacing $\theta_{a,t-1}$ in $H_{a,t-1}$ with $\theta_a^*$, $a=0,1$.
Note $\nabla^2 \L_{\theta_{t-1}}(\theta)$ is a constant function with respect to $\theta$, so we only need to consider its Lipschitz continuity with respect to $\theta_{t-1}$. 
For a smooth and rapidly decreasing integrable function $p(x)$, define the function $J(\theta) = \int \ID(\theta^\top x > 0) p(x) \d x$. It has been proven in Lemma \ref{lem:hessian derivative} that
\begin{align*}
\nabla J(\theta) = \int_{\theta^\top x = 0} p(x) x \d x.
\end{align*}

Apply this formula to $H_0$ and $H_1$, we get
\begin{align}\label{eq:thirdorder}
& \quad \frac{\partial }{\partial \theta_{0,t-1}} H_0 = - \frac{\partial }{\partial \theta_{1,t-1}} H_0 = - \frac{\partial }{\partial \theta_{0,t-1}}  H_1 = \frac{\partial }{\partial \theta_{1,t-1}}  H_1 \\
& = \left((1-\frac{\varepsilon}{2}) p(1-\frac{\varepsilon}{2})- \frac{\varepsilon}{2} p(\frac{\varepsilon}{2})\right) \int_{x^\top \theta_{0,t-1}= x^\top \theta_{1,t-1}} x \otimes x \otimes x p(x) \d x.
\end{align}
So equation \eqref{eq:hessian lipschitz} holds as long as $\int_{x^\top \theta^*_{0}= x^\top \theta^*_{1}} x \otimes x \otimes x p(x) \d x$ exists.

\noindent\textbf{Degenerate model:} When $t$ is large enough, according to Lemma \ref{lem:a.s.}, 
\begin{align*}
\nabla^2 \L_{\theta_{t-1}}(\theta) = \begin{bmatrix} \frac{1}{2}\varphi(\frac{1}{2})\E X X^\top & 0 \\
0 & \frac{1}{2}\varphi(\frac{1}{2})\E X X^\top
\end{bmatrix}.
\end{align*}
Its Lipschitz continuity clearly holds for both $\theta_{t-1}$ and $\theta$, which means \eqref{eq:hessian lipschitz} holds. In fact, when $t$ is large enough, 
\begin{align*}
    \|\nabla^2\L_{\theta_{t-1}}(\theta)-\nabla^2\L_{\theta^*}(\theta^*)\|=0\leq K \| \theta - \theta^* \| + K \|\theta_{t-1} - \theta^* \|.
\end{align*}

Combining the above results, we verified Assumption~\ref{assum:hessian} for both cases.}

For any $A, X$, we can bound
\begin{align*}
\E(\|\nabla \ell(\theta; \zeta)\|^2 \mid X, A) & \leq \|X\|^2\sigma^2 + \|X X^\top X X^\top\| \|\theta - \theta^*\|^2.
\end{align*}
So the first part of Assumption~\ref{assum:gram} is satisfied with $\phi(X) = \|X\|^2\sigma^2 + \|X X^\top \|^2$ for both cases.

We now consider the Gram matrix $S$ under degenerate and non-degenerate models separately to validate  Assumption \ref{assum:gram} and further Assumption \ref{assum:tv}.

{\noindent\textbf{Non-degenerate model:}
The matrix $S$ can be computed by
\begin{align*}
S = \E[w^2(\theta^*;X,A^*)\nabla \ell(\theta^*; \zeta^*) \nabla \ell(\theta^*; \zeta^*)^\top]
:= \begin{bmatrix} S_0 & 0 \\
0 & S_1
\end{bmatrix},
\end{align*}
where
\begin{align*}
S_0 &= (1-\frac{\varepsilon}{2}) \varphi^2(1-\frac{\varepsilon}{2})\sigma^2 \E \left[ \ID \{X^\top \theta^*_{0}>  X^\top \theta^*_{1}\}  X X^\top \right]\\
& \quad + \frac{\varepsilon}{2} \varphi^2(\frac{\varepsilon}{2}) \sigma^2 \E \left[ \ID \{X^\top \theta^*_{0}< X^\top \theta^*_{1}\}  X X^\top \right],\\
S_1 &= (1-\frac{\varepsilon}{2}) \varphi^2(1-\frac{\varepsilon}{2})\sigma^2 \E \left[ \ID \{X^\top \theta^*_{0}< X^\top \theta^*_{1}\}  X X^\top \right]\\
& \quad + \frac{\varepsilon}{2} \varphi^2(\frac{\varepsilon}{2}) \sigma^2 \E \left[ \ID \{X^\top \theta^*_{0}>  X^\top \theta^*_{1}\}  X X^\top \right].
\end{align*}
Assumption \ref{assum:gram} is satisfied now.
By definition
\begin{align}\label{eq:eps-non-deg-TV}
\Delta(X, \theta) = (1-\varepsilon)\left| \ID \left(X^\top \theta^*_{0}> X^\top \theta^*_{1}\right) - \ID \left(X^\top \theta_{0}> X^\top \theta_{1}\right) \right|.
\end{align}
Take any convergent sequence $\theta_t \rightarrow \theta^*$. It is clear that $\Delta(X, \theta_t)\phi(X)$ converges to 0 almost surely. Furthermore, $|\Delta(X, \theta_t)\phi(X)| \leq c\|X\|^4$ for some constant $c>0$ and $\E \|X\|^4 < \infty$. By dominated convergence theorem, $\lim_{t \rightarrow \infty} \E[\Delta(X, \theta_t)\phi(X)] = 0$. 
\begin{align}\label{eq:weight-diff-eps-non-deg}
&\quad|w(\theta_{t-1}; X_t, A) - w(\theta^*; X_t, A)|^2\notag \\
&=|\varphi(\Pr(A|X_t; \theta_{t-1})) - \varphi(\Pr(A|X_t; \theta^*))|^2\notag\\
&= |\varphi(1-\frac{\varepsilon}{2}) - \varphi(\frac{\varepsilon}{2})|^2\left| \ID \left(X^\top \theta^*_{0}> X^\top \theta^*_{1}\right) - \ID \left(X^\top \theta_{0,t-1}>  X^\top \theta_{1,t-1}\right) \right|.
\end{align}
Using the same argument as above, we can derive that
\begin{align*}
\lim_{\theta \rightarrow \theta^*} \E\left[|w(\theta; X, A) - w(\theta^*; X, A)|^2 \phi(X) \mid A \right] = 0.
\end{align*}
\noindent\textbf{Degenerate model:}
The matrix $S$ can be computed by
\begin{align*}
S = \E[w^2(\theta^*;X,A^*)\nabla \ell(\theta^*; \zeta^*) \nabla \ell(\theta^*; \zeta^*)^\top]
:= \begin{bmatrix} S_0 & 0 \\
0 & S_1
\end{bmatrix},
\end{align*}
where
\begin{align*}
S_0 &= \frac{1}{2} \varphi^2(\frac{1}{2}) \sigma^2 \E X X^\top,\\
S_1 &= \frac{1}{2} \varphi^2(\frac{1}{2}) \sigma^2 \E X X^\top.
\end{align*}
Assumption~\ref{assum:gram} is satisfied now. By definition,
\begin{align}\label{eq:eps-de-TV}
\Delta(X, \theta_{t-1}) = \frac{1-\varepsilon}{2} \ID \left(\|\theta_{0,t-1}-\theta_{1,t-1}\|>t^{-\frac{\alpha}{4}}\right),
\end{align}
and
\begin{align}\label{eq:weight-diff-eps-de}
|w(\theta_{t-1}; X_t, A) - w(\theta^*; X_t, A)|^2
\leq \left(|\varphi(1-\frac{\varepsilon}{2})-\varphi(\frac{1}{2})|+|\varphi(\frac{\varepsilon}{2})-\varphi(\frac{1}{2})|\right)^2\ID \left(\|\theta_{0,t-1}-\theta_{1,t-1}\|>t^{-\frac{\alpha}{4}}\right).
\end{align}
Take any convergent sequence $\theta_{t-1} \rightarrow \theta^*$,
since $\E\|X\|^{16}<\infty$, the condition in Lemma \ref{lem:8th rate} holds, that is, 
\begin{align*}
    \E(\|\nabla \ell(\theta;\zeta)\|^8\mid X,A)\leq C(1+\|\theta-\theta^*\|^8),
\end{align*}
which leads to the conclusion in Lemma \ref{lem:a.s.} that 
\begin{align*}
    \lim_{t\rightarrow\infty}\ID \left(\|\theta_{0,t-1}-\theta_{1,t-1}\|>t^{-\frac{\alpha}{4}}\right)=0,\  \text{a.s.}.
\end{align*}
  Using the same argument in the non-degenerate model, it is clear that $\Delta(X, \theta_{t-1})\phi(X)$ and $|w(\theta_{t-1}; X_t, A) - w(\theta^*; X_t, A)|^2\phi(X)$ converge to $0$ in expectation.
}

Finally, for both cases, we have the following inequality,
\begin{align*}
\E\left[\|\nabla \ell(\theta; \zeta) - \nabla \ell(\theta^*; \zeta)\|^2 \mid X,A \right] \leq \left\| X X^\top\right\|\cdot \left\|\theta - \theta^* \right\|^2\leq \left\|X\right\|^4\cdot \left\|\theta - \theta^* \right\|^2.
\end{align*}
Therefore, Assumption~\ref{assum:tv} is satisfied for both cases since $\E\|X\|^4 < \infty$.
\end{proof}

\subsection{Proof of Theorem \ref{prop:ls-reg}}\label{appendix:proof of prop ls-reg}
Theorem \ref{prop:ls-reg} follows directly from Remark \ref{rmk:smooth-clt} and Corollary \ref{corr:ls-reg}. 
\subsection{Explicit forms of the eigenvalues of  $H^{-1}SH^{-1}$ in \eqref{eq:cov-poly-main}.}{\label{appendix:eigenvalues of HSH}
When $\mu=\lambda_0\nu^*$, the rank of $\mu\mu^\top+\nu^*\nu^{*\top}$ is $1$. 
We can write $H^{-1}SH^{-1}$ as follows,
\begin{align} \label{eq:cov-poly}
H^{-1}SH^{-1} = \sigma^2 \begin{bmatrix}
c_1 I + c_2 \nu^*\nu^{*\top}& 0 \\
0 & c_3 I + c_4 \nu^*\nu^{*\top}
\end{bmatrix},
\end{align}
where
\begin{align*}
c_1 &= \frac{(1-\frac{\varepsilon}{2})^{1+2\gamma}\Phi(a) + (\frac{\varepsilon}{2})^{1+2\gamma}(1-\Phi(a))}{\left((1-\frac{\varepsilon}{2})^{1+\gamma}\Phi(a) + (\frac{\varepsilon}{2})^{1+\gamma}(1-\Phi(a))\right)^2}, \\
c_2 &= \frac{(1-\frac{\varepsilon}{2})^{1+2\gamma}\left(\Phi(a)(1+\lambda_0^2) +  \frac{1}{\sqrt{2\pi}}a e^{-\frac{a^2}{2}}\right) + (\frac{\varepsilon}{2})^{1+2\gamma}\left((1-\Phi(a))(1+\lambda_0^2)- \frac{1}{\sqrt{2\pi}}a e^{-\frac{a^2}{2}}\right)}{\left((1-\frac{\varepsilon}{2})^{1+\gamma}\left(\Phi(a)(1+\lambda_0^2) +  \frac{1}{\sqrt{2\pi}}a e^{-\frac{a^2}{2}}\right) + (\frac{\varepsilon}{2})^{1+\gamma}\left((1-\Phi(a))(1+\lambda_0^2)- \frac{1}{\sqrt{2\pi}}a e^{-\frac{a^2}{2}}\right)\right)^2}-c_1,\\
c_3 &= \frac{(1-\frac{\varepsilon}{2})^{1+2\gamma}(1-\Phi(a)) + (\frac{\varepsilon}{2})^{1+2\gamma}\Phi(a)}{\left((1-\frac{\varepsilon}{2})^{1+\gamma}(1-\Phi(a)) + (\frac{\varepsilon}{2})^{1+\gamma}\Phi(a)\right)^2}, \\
c_4 &= \frac{(1-\frac{\varepsilon}{2})^{1+2\gamma}\left((1-\Phi(a))(1+\lambda_0^2) +  \frac{1}{\sqrt{2\pi}}a e^{-\frac{a^2}{2}}\right) + (\frac{\varepsilon}{2})^{1+2\gamma}\left(\Phi(a)(1+\lambda_0^2)- \frac{1}{\sqrt{2\pi}}a e^{-\frac{a^2}{2}}\right)}{\left((1-\frac{\varepsilon}{2})^{1+\gamma}\left((1-\Phi(a))(1+\lambda_0^2) +  \frac{1}{\sqrt{2\pi}}a e^{-\frac{a^2}{2}}\right) + (\frac{\varepsilon}{2})^{1+\gamma}\left(\Phi(a)(1+\lambda_0^2)- \frac{1}{\sqrt{2\pi}}a e^{-\frac{a^2}{2}}\right)\right)^2}-c_3.
\end{align*}

The eigenvalues of the asymptotic covariance matrix are $c_1, c_1 + c_2, c_3, c_3 + c_4$ in the above equations. 

When $\mu$ and $\nu^*$ are linearly independent, the rank of $\mu\mu^\top+\nu^*\nu^{*\top}$ is $2$. 
We can write $H^{-1}SH^{-1}$ as follows,
\begin{align} \label{eq:cov-poly}
H^{-1}SH^{-1} = \sigma^2 \begin{bmatrix}
d_1 I + d_2 \mu\mu^\top + d_3 \nu^*\nu^{*\top}& 0 \\
0 & d_4 I + d_5 \mu\mu^\top + d_6 \nu^*\nu^{*\top}
\end{bmatrix},
\end{align}
where
\begin{align*}
d_1 &= \frac{(1-\frac{\varepsilon}{2})^{1+2\gamma}\Phi(a) + (\frac{\varepsilon}{2})^{1+2\gamma}(1-\Phi(a))}{\left((1-\frac{\varepsilon}{2})^{1+\gamma}\Phi(a) + (\frac{\varepsilon}{2})^{1+\gamma}(1-\Phi(a))\right)^2}, \\
d_4 &= \frac{(1-\frac{\varepsilon}{2})^{1+2\gamma}(1-\Phi(a)) + (\frac{\varepsilon}{2})^{1+2\gamma}\Phi(a)}{((1-\frac{\varepsilon}{2})^{1+\gamma}(1-\Phi(a)) + (\frac{\varepsilon}{2})^{1+\gamma}\Phi(a))^2}, \\
d_2 &= \frac{(1-\frac{\varepsilon}{2})^{1+2\gamma}\Phi(a)(1+\|\mu\|^2) + (\frac{\varepsilon}{2})^{1+2\gamma}(1-\Phi(a))(1+\|\mu\|^2)}{\left(((1-\frac{\varepsilon}{2})^{1+\gamma}\Phi(a)(1+\|\mu\|^2)  + (\frac{\varepsilon}{2})^{1+\gamma}(1-\Phi(a))(1+\|\mu\|^2) \right)^2}-d_1,\\
d_5 &= \frac{(1-\frac{\varepsilon}{2})^{1+2\gamma}(1-\Phi(a))(1+\|\mu\|^2)  + (\frac{\varepsilon}{2})^{1+2\gamma}\Phi(a)(1+\|\mu\|^2)}{\left((1-\frac{\varepsilon}{2})^{1+\gamma}(1-\Phi(a))(1+\|\mu\|^2) + (\frac{\varepsilon}{2})^{1+\gamma}\Phi(a)(1+\|\mu\|^2)\right)^2}-d_4\\
d_3 &= \frac{(1-\frac{\varepsilon}{2})^{1+2\gamma}(\Phi(a) +  \frac{1}{\sqrt{2\pi}}a e^{-\frac{a^2}{2}}) + (\frac{\varepsilon}{2})^{1+2\gamma}(1-\Phi(a)- \frac{1}{\sqrt{2\pi}}a e^{-\frac{a^2}{2}})}{\left((1-\frac{\varepsilon}{2})^{1+\gamma}(\Phi(a) + \frac{1}{\sqrt{2\pi}}a e^{-\frac{a^2}{2}}) + (\frac{\varepsilon}{2})^{1+\gamma}(1-\Phi(a)- \frac{1}{\sqrt{2\pi}}a e^{-\frac{a^2}{2}})\right)^2}-d_1,\\
d_6 &= \frac{(1-\frac{\varepsilon}{2})^{1+2\gamma}(1-\Phi(a)- \frac{1}{\sqrt{2\pi}}a e^{-\frac{a^2}{2}}) + (\frac{\varepsilon}{2})^{1+2\gamma}(\Phi(a) +  \frac{1}{\sqrt{2\pi}}a e^{-\frac{a^2}{2}})}{\left((1-\frac{\varepsilon}{2})^{1+\gamma}(1-\Phi(a)- \frac{1}{\sqrt{2\pi}}a e^{-\frac{a^2}{2}}) + (\frac{\varepsilon}{2})^{1+\gamma}(\Phi(a) +  \frac{1}{\sqrt{2\pi}}a e^{-\frac{a^2}{2}})\right)^2}-d_4.
\end{align*}

The eigenvalues of the asymptotic covariance matrix are $d_1, d_1 + d_2, d_1+d_3, d_4, d_4 + d_5, d_4 + d_6$ in the above equations.}
\subsection{Proof of Corollary~\ref{prop:vanilla-cov-mat}}\label{appendix:proof of prop:vanilla-cov-mat}
\begin{proof}
Our central limit theorem gives the covariance of the form
\begin{align*}
\Sigma = H^{-1} S H^{-1},
\end{align*}
where
\begin{align*}
H = \nabla^2 \L_{\theta^*}(\theta^*) = \E [w(\theta^*;X,A^*)\nabla^2 \ell(\theta^*;\zeta^*)],
\end{align*}
and
\begin{align*}
S =  \E[\xi_{\theta^*}(\theta^*; \zeta^*) \xi_{\theta^*}(\theta^*; \zeta^*)^\top] = \E [(w(\theta^*;X,A^*)^2 \nabla \ell(\theta^*;\zeta^*)\nabla \ell(\theta^*;\zeta^*)^\top].
\end{align*}
One special property of linear regression is that
\begin{align*}
\E\nabla_{\theta_i}^2 \ell(\theta^*;\zeta_t^*) = E_i \otimes \E(X_t X_t^\top),
\end{align*}
where $E_k$ is the matrix with 1 on the $(k, k)$ entry and other entries to be 0, $i$ means we choose the $i$-th bandit, and
\begin{align*}
\E(\nabla_{\theta_i} \ell(\theta^*;\zeta_t^*))(\nabla_{\theta_i} \ell(\theta^*;\zeta_t^*))^\top =\sigma^2E_i \otimes \E(X_t X_t^\top) .
\end{align*}
So they differ only by a constant factor $\sigma^2$ and this also holds for stochastic weight setting. If $w$ is constant instead of stochastic, we denote the corresponding $H, S, \Sigma$ as $H_c, S_c, \Sigma_c$. We claim that $\Sigma \succeq \Sigma_c$, which means equal weight is optimal for linear regression. This is equivalent to show
\begin{small}
    \begin{align*}
(\E [w(\theta^*;X,A^*)\nabla^2 \ell(\theta^*;\zeta^*)])^{-1}\E [(w(\theta^*;X,A^*)^2 \nabla^2 \ell(\theta^*;\zeta^*)](\E [w(\theta^*;X,A^*)\nabla^2 \ell(\theta^*;\zeta^*)])^{-1}
\succeq (\E [\nabla^2 \ell(\theta^*;\zeta^*)])^{-1},
\end{align*}
\end{small}
which is the same as
\begin{small}
    \begin{align*}
\E [(w(\theta^*;X,A^*)^2 \nabla^2 \ell(\theta^*;\zeta^*)]
-(\E [w(\theta^*;X,A^*)\nabla^2 \ell(\theta^*;\zeta^*)])(\E [\nabla^2 \ell(\theta^*;\zeta^*)])^{-1}(\E [w(\theta^*;X,A^*)\nabla^2 \ell(\theta^*;\zeta^*)]) \succeq 0.
\end{align*}
\end{small}
By Schur complement, this is equivalent to
\begin{align*}
\begin{bmatrix}
\E [(w(\theta^*;X,A^*)^2 \nabla^2 \ell(\theta^*;\zeta^*)] & \E [w(\theta^*;X,A^*)\nabla^2 \ell(\theta^*;\zeta^*)] \\
\E [w(\theta^*;X,A^*)\nabla^2 \ell(\theta^*;\zeta^*)] & \E [\nabla^2 \ell(\theta^*;\zeta^*)]
\end{bmatrix} \succeq 0,
\end{align*}
since $\E [\nabla^2 \ell(\theta^*;\zeta)]$ is invertible.
Note that
\begin{align*}
\begin{bmatrix}
w(\theta^*;X,A^*)^2\nabla^2 \ell(\theta^*;\zeta^*)&  w(\theta^*;X,A^*)\nabla^2 \ell(\theta^*;\zeta^*)\\
w(\theta^*;X,A^*)\nabla^2 \ell(\theta^*;\zeta^*) & \nabla^2 \ell(\theta^*;\zeta^*)
\end{bmatrix} \succeq 0.
\end{align*}
Therefore, our conclusion holds.
\end{proof}

\subsection{Proof of Corollary~\ref{corr:exp3 ls-reg}}\label{sec:pf exp3 ls-reg}
{\begin{proof}
    For simplicity, denote $\P_a=\Pr(A=a\mid X,\theta^\prime)$, $a=0,1$.
By definition,
\begin{align*}
    \L_{\theta^\prime}(\theta)
    &=\frac{1}{2}\P_0\varphi(\P_0)\E\left(X^\top\theta^\prime_0-X^\top\theta^*_0\right)^2
    +\frac{1}{2}\P_1\varphi(\P_1)\E\left(X^\top\theta^\prime_1-X^\top\theta^*_1\right)^2\\
    &\quad +\sigma^2\left(\frac{1}{2}\P_0\varphi(\P_0)+\frac{1}{2}\P_1\varphi(\P_1)\right).
\end{align*}
Accordingly, 
\begin{align*}
    \nabla_{\theta_a}\L_{\theta^\prime}(\theta)&=\P_a\varphi(\P_a)\E XX^\top(\theta_a-\theta^*_a), \ a=0,1.\\
    \nabla^2_{\theta_a}\L_{\theta^\prime}(\theta)&=\P_a\varphi(\P_a)\E XX^\top, \ a=0,1.
\end{align*}
    It is easy to check Assumptions \ref{assum:loss}-\ref{assum:gram} hold since $\varphi(\P_a)$ and $\varphi^\prime(\P_a)$ are bounded and $\P_a$ is smooth with respect to $\theta^\prime_a$, $a=0,1$. Particularly, \eqref{eq:hessian lipschitz} in Assumption \ref{assum:hessian} can be satisfied for all $(\theta,\theta^\prime)$ when $\theta$ and $\theta^\prime$ are both within a small neighborhood of $\theta^*$. 

    For Assumption \ref{assum:tv}, take a convergent sequence $\theta_t\rightarrow\theta^*$. First we want to verify
    \begin{align*}
    \lim_{t\rightarrow\infty}\E[\Delta(X,\theta_{t-1})\phi(X)]=0,
    \end{align*}
    where $\E\phi(X)\leq\E\|X\|^4<\infty$.
    
    Since $\left\|\nabla_{\theta_a}\frac{e^{\lambda X^\top\theta_{a}}}{\sum_{a^\prime\in\A}e^{\lambda X^\top\theta_{a^\prime}}}\right\|\leq C\|X\|$. $a=0,1$, we have
    \begin{align*}
        \left|\frac{e^{\lambda X^\top\theta_0}}{\sum_{a^\prime\in\A}e^{\lambda X^\top\theta_{a^\prime}}}-\frac{e^{\lambda X^\top\theta^*_0}}{\sum_{a^\prime\in\A}e^{\lambda X^\top\theta^*_{a^\prime}}}\right| +\left|\frac{e^{\lambda X^\top\theta_1}}{\sum_{a^\prime\in\A}e^{\lambda X^\top\theta_{a^\prime}}}-\frac{e^{\lambda X^\top\theta^*_1}}{\sum_{a^\prime\in\A}e^{\lambda X^\top\theta^*_{a^\prime}}}\right|
        &\leq C\|X\|\cdot|X^\top(\theta-\theta^*)|\\
        &\leq C\|X\|^2\|\theta-\theta^*\|.
    \end{align*}
Therefore, $2\E[\Delta(X,\theta_{t-1})\phi(X)]$ equals to
\begin{align}\label{eq:exp3-TV}
&\quad \E\left[\left(\left|\frac{e^{\lambda X^\top\theta_{0,t-1}}}{\sum_{a^\prime\in\A}e^{\lambda X^\top\theta_{a^\prime,t-1}}}-\frac{e^{\lambda X^\top\theta^*_0}}{\sum_{a^\prime\in\A}e^{\lambda X^\top\theta^*_{a^\prime}}}\right| +\left|\frac{e^{\lambda X^\top\theta_{1,t-1}}}{\sum_{a^\prime\in\A}e^{\lambda X^\top\theta_{a^\prime,t-1}}}-\frac{e^{\lambda X^\top\theta^*_1}}{\sum_{a^\prime\in\A}e^{\lambda X^\top\theta^*_{a^\prime}}}\right|\right)  \phi(X) \right]\notag\\
&\leq \sqrt{\E\left(\left|\frac{e^{\lambda X^\top\theta_{0,t-1}}}{\sum_{a^\prime\in\A}e^{\lambda X^\top\theta_{a^\prime,t-1}}}-\frac{e^{\lambda X^\top\theta^*_0}}{\sum_{a^\prime\in\A}e^{\lambda X^\top\theta^*_{a^\prime}}}\right| +\left|\frac{e^{\lambda X^\top\theta_{1,t-1}}}{\sum_{a^\prime\in\A}e^{\lambda X^\top\theta_{a^\prime,t-1}}}-\frac{e^{\lambda X^\top\theta^*_1}}{\sum_{a^\prime\in\A}e^{\lambda X^\top\theta^*_{a^\prime}}}\right|\right)^2\cdot 
\E\phi^2(X)}\notag\\
&\leq C\sqrt{\E \|X\|^4\E\|X\|^8}\|\theta_{t-1}-\theta^*\|.
\end{align}
Because $\varphi(\epsilon)$ is Lipschitz on the interval $[\delta_0,1)$, 
\begin{align*}
    |w(\theta_{t-1}; X, A) - w(\theta^*; X, A)|^2\leq C\Delta^2(X,\theta_{t-1}).
\end{align*}
Hence, repeating the procedure in \eqref{eq:exp3-TV},
\begin{align}\label{eq:weight-diff-exp3}
    \E\left[|w(\theta_{t-1}; X, A) - w(\theta^*; X, A)|^2 \phi(X) \mid A \right]
    \leq C\E\left[\Delta^2(X,\theta_{t-1}) \phi(X)\right]
    \leq C\E\|X\|^8\|\theta_{t-1}-\theta^*\|^2.
\end{align}
Additionally,
\begin{align*}
\E\left[\|\nabla \ell(\theta; \zeta) - \nabla \ell(\theta^*; \zeta)\|^2 \mid X,A \right] \leq \left\| X X^\top\right\|\cdot \left\|\theta - \theta^* \right\|^2\leq \left\|X\right\|^4\cdot \left\|\theta - \theta^* \right\|^2.
\end{align*}
Using the dominated convergence theorem to all the above results, we can verify Assumption \ref{assum:tv}. {Additionally, here we have $\beta_1=\alpha/2,\beta_2=\alpha$ as specified in the Bahadur representation (Theorem \ref{thm:ls-fs}).}
\end{proof}}

\section{{Theoretical results and proofs for quantile regression}}
\label{sec:pf quant-reg}
In addition to linear regression, our framework can also allow much broader settings than the class of smooth individual loss functions. Under our assumptions, the individual loss function $\ell(\theta; \zeta)$ can be non-smooth. We justify this argument in  quantile regression below.

{\begin{corollary} \label{corr:quant-reg}
Under the stated conditions in Corollary~\ref{corr:ls-reg} with the modified $\varepsilon$-greedy policy in \eqref{eq:modified eps-greedy} applied to quantile regression example we used in Example~\ref{eg:quant-reg}, we further assume
the p.d.f. of $\cE$, denoted as $q(x)$, is smooth, rapidly decreasing, and bounded. We also assume $q(0)> 0$ and $q^\prime(x)$ is bounded.
Under above conditions, Assumptions~\ref{assum:bound}--\ref{assum:tv} are satisfied and Theorem~\ref{thm:smooth-clt} holds.
\end{corollary}}
{Besides the modified $\varepsilon$-greedy, the asymptotic normality also holds for the exponential policy in \eqref{eq:exp3}.}
{\begin{corollary}\label{corr:exp3 quan-reg}
Under the stated conditions in Corollary~\ref{corr:quant-reg} with the exponential  policy in \eqref{eq:exp3} applied to quantile regression example we used in Example~\ref{eg:quant-reg}, we further assume the weight $
  w_t(\theta_{t-1};X_t, A_t) = \varphi(\Pr(A_t \mid X_t, \theta_{t-1}))$ where the function $\varphi(\cdot): (0,1) \mapsto \R^+$ is  differentiable, and $\varphi^\prime(\epsilon)$ is bounded for $\epsilon\in[\delta_0,1)$ where $\delta_0$ is the clipping parameter applied to $\text{clip}_{\delta_0}$.
Under above conditions, the Assumptions~\ref{assum:bound}--\ref{assum:tv} are satisfied and Theorem~\ref{thm:smooth-clt} holds. 
\end{corollary}}
Corollaries~\ref{corr:quant-reg} and \ref{corr:exp3 quan-reg} both state that we can also obtain the limiting distribution for some non-smooth loss functions like a quantile loss. 
Note that although the quantile loss $\rho_\tau(u)= u (\tau-\ID(u<0))$ is not smooth, under the further assumption in Corollaries~\ref{corr:quant-reg} and \ref{corr:exp3 quan-reg} which guarantees certain smoothness of the noise $\cE$, the first and second derivatives of $\E\rho_\tau(u + \cE)$ exists almost surely, which indicates the first and second derivatives of population objective $\L_{\theta^\prime}(\theta)$ exist just as Assumptions~\ref{assum:loss} and \ref{assum:hessian} 
require. Moreover, the population loss $\L_\theta(\theta)$ is still strongly convex under these assumptions when the parameter $\theta$ is sufficiently close to $\theta^*$ while the individual loss is just convex. We further note that the notation of gradient $\nabla \ell$ in Assumptions~\ref{assum:gram} and \ref{assum:tv} can be extended to certain subgradient in this setting, which suggests the possibility of expanding the asymptotic normality analysis to non-smooth models.  

\subsection{Proof of Corollary~\ref{corr:quant-reg}}\label{sec:proof of qtl with eps}
\begin{proof}
We first define $\psi_\tau(u)$ as follows,
\begin{align*}
\psi_\tau(u) = \E\rho_\tau(u + \cE)= \int_{-\infty}^{-u} (u+x)(\tau -1) q(x)\d x + \int_{-u}^{\infty} (u+x)\tau q(x)\d x,
\end{align*}
where $q(\cdot)$ is the p.d.f. for $\cE$ which satisfies $\P(\cE\leq 0)=\tau$.
The first and second order derivative of $\psi_\tau(u)$ can be computed as
\begin{align*}
\psi^\prime_\tau(u) &= \int_{-\infty}^{-u} (\tau -1) q(x)\d x + \int_{-u}^{\infty} \tau q(x)\d x, \\
\psi^{\prime\prime}_\tau(u) &= -(\tau -1)q(-u) + \tau q(-u) = q(-u).
\end{align*}
Because $\psi^\prime_\tau(0) = 0, \psi^{\prime\prime}_\tau(0) = q(0) > 0$, there exists $\delta > 0$ such that for all $|u| < \delta$,
\begin{align*}
\psi^\prime_\tau(u)u \geq \frac{1}{2}q(0)u^2.
\end{align*}
Also, observe that $\psi^{\prime\prime}_\tau(u)>0$ for every $u$, we know $u\psi^\prime_\tau(u)\geq0$ for every $u$.

Now let's compute $\nabla \L_{\theta^\prime}(\theta)$, under $\varepsilon$-greedy policy defined in \eqref{eq:eps-greedy}.
\begin{align*}
\nabla_{\theta_0} \L_{\theta_{t-1}}(\theta)
&= (1-\frac{\varepsilon}{2}) \varphi(1-\frac{\varepsilon}{2}) \E \left[ \ID \{\| \theta_{0,t-1} -  \theta_{1,t-1}\|> t^{-\frac{\alpha}{4}},X^\top \theta_{0,t-1}>  X^\top \theta_{1,t-1}\} X \psi^\prime_\tau\left( X^\top \left(\theta^*_{0} - \theta_{0}\right)\right)\right] \\
&\quad + \frac{\varepsilon}{2} \varphi(\frac{\varepsilon}{2}) \E\left[ \ID \{\| \theta_{0,t-1} -  \theta_{1,t-1}\|> t^{-\frac{\alpha}{4}},X^\top \theta_{0,t-1}< X^\top \theta_{1,t-1}\} X \psi^\prime_\tau\left( X^\top \left(\theta^*_{0} - \theta_{0}\right)\right) \right] \\
&\quad + \frac{1}{2} \varphi(\frac{1}{2}) \E\left[ \ID \{\| \theta_{0,t-1} -  \theta_{1,t-1}\|\leq t^{-\frac{\alpha}{4}}\} X \psi^\prime_\tau\left( X^\top \left(\theta^*_{0} - \theta_{0}\right)\right) \right]\\
\nabla_{\theta_1} \L_{\theta_{t-1}}(\theta)
&= \frac{\varepsilon}{2} \varphi(\frac{\varepsilon}{2}) \E \left[\ID \{\| \theta_{0,t-1} -  \theta_{1,t-1}\|> t^{-\frac{\alpha}{4}},X^\top \theta_{0,t-1}>  X^\top \theta_{1,t-1}\} X \psi^\prime_\tau\left( X^\top \left(\theta^*_{1} - \theta_{1}\right)\right) \right] \\
&\quad + (1-\frac{\varepsilon}{2}) \varphi(1-\frac{\varepsilon}{2}) \E\left[ \ID \{\| \theta_{0,t-1} -  \theta_{1,t-1}\|> t^{-\frac{\alpha}{4}},X^\top \theta_{0,t-1}< X^\top \theta_{1,t-1}\} X \psi^\prime_\tau\left( X^\top \left(\theta^*_{1} - \theta_{1}\right)\right) \right]\\
&\quad + \frac{1}{2} \varphi(\frac{1}{2}) \E\left[ \ID \{\| \theta_{0,t-1} -  \theta_{1,t-1}\|\leq t^{-\frac{\alpha}{4}}\} X \psi^\prime_\tau\left( X^\top \left(\theta^*_{1} - \theta_{1}\right)\right) \right].
\end{align*}
Because $\E[X X^\top]$ is positive definite, there exists a constant $C > 0$ such that $\E[\ID \{ \|X\| < C\}X X^\top]$ is positive definite. For any $\|\theta_{0} - \theta^*_{0}\| \leq \delta/C$, we have the following,
\begin{small}
    \begin{align*}
&\quad \langle \nabla_{\theta_0} \L_{\theta}(\theta), \theta_{0} - \theta^*_{0} \rangle \\
&\geq \frac{1}{2}q(0)\min\left\{(1-\frac{\varepsilon}{2}) \varphi(1-\frac{\varepsilon}{2}), \frac{\varepsilon}{2} \varphi(\frac{\varepsilon}{2}), \frac{1}{2} \varphi(\frac{1}{2}) \right\} \E \left[ \ID \{|X^\top \theta_{0}- X^\top \theta^*_{0}| < \delta\}  \left|X^\top \left(\theta_{0} - \theta^*_{0}\right)\right|^2\right]\\
&\quad+\min\left\{(1-\frac{\varepsilon}{2}) \varphi(1-\frac{\varepsilon}{2}), \frac{\varepsilon}{2} \varphi(\frac{\varepsilon}{2}), \frac{1}{2} \varphi(\frac{1}{2}) \right\}\E \left[ \ID \{|X^\top \theta_{0}- X^\top \theta^*_{0}| \geq \delta\}  X^\top \left(\theta_{0} - \theta^*_{0}\right)\psi^\prime_\tau\left( X^\top \left(\theta_{0} - \theta^{*}_{0}\right)\right)\right]\\
&\geq \frac{1}{2}q(0)\min\left\{(1-\frac{\varepsilon}{2}) \varphi(1-\frac{\varepsilon}{2}), \frac{\varepsilon}{2} \varphi(\frac{\varepsilon}{2}),\frac{1}{2} \varphi(\frac{1}{2}) \right\}   \left(\theta_{0} - \theta^*_{0}\right)^\top\E\left(XX^\top\ID \{\left\|X\right\| < C\}\right)\left(\theta_{0} - \theta^*_{0}\right)\\
&\geq  C^\prime \left\|\theta_{0} - \theta^*_{0}\right\|^2,
\end{align*}
\end{small} 
for some constant $C^\prime > 0$. So the second part of Assumption~\ref{assum:loss} is satisfied.

We now consider the Hessian matrix, we have
\begin{align*}
\nabla^2 \L_{\theta_{t-1}}(\theta) = \begin{bmatrix} H_{0,t-1} & 0 \\
0 & H_{1,t-1}
\end{bmatrix},
\end{align*}
where
\begin{align*}
H_{0,t-1} &= (1-\frac{\varepsilon}{2}) \varphi(1-\frac{\varepsilon}{2}) \E \left\{ \ID \{\| \theta_{0,t-1} -  \theta_{1,t-1}\|> t^{-\frac{\alpha}{4}},X^\top \theta_{0,t-1}>  X^\top \theta_{1,t-1}\} X X^\top q\left[ X^\top \left(\theta_{0} - \theta^{*}_{0}\right)\right]\right\} \\
&\quad + \frac{\varepsilon}{2} \varphi(\frac{\varepsilon}{2}) \E\left\{ \ID \{\| \theta_{0,t-1} -  \theta_{1,t-1}\|> t^{-\frac{\alpha}{4}},X^\top \theta_{0,t-1}< X^\top \theta_{1,t-1}\} X X^\top q\left[ X^\top \left(\theta_{0} - \theta^{*}_{0}\right)\right]\right\} \\
&\quad + \frac{1}{2} \varphi(\frac{1}{2}) \E\left\{ \ID \{\| \theta_{0,t-1} -  \theta_{1,t-1}\|\leq t^{-\frac{\alpha}{4}}\} X X^\top q\left[ X^\top \left(\theta_{0} - \theta^{*}_{0}\right)\right]\right\} \\
H_{1,t-1} &= \frac{\varepsilon}{2} \varphi(\frac{\varepsilon}{2}) \E \left\{ \ID \{\| \theta_{0,t-1} -  \theta_{1,t-1}\|> t^{-\frac{\alpha}{4}},X^\top \theta_{0,t-1}>  X^\top \theta_{1,t-1}\} X X^\top q\left[ X^\top \left(\theta_{1} - \theta^{*}_{1}\right)\right]\right\} \\
&\quad + (1-\frac{\varepsilon}{2}) \varphi(1-\frac{\varepsilon}{2}) \E\left\{ \ID \{\| \theta_{0,t-1} -  \theta_{1,t-1}\|> t^{-\frac{\alpha}{4}},X^\top \theta_{0,t-1}< X^\top \theta_{1,t-1}\} X X^\top q\left[ X^\top \left(\theta_{1} - \theta^{*}_{1}\right)\right]\right\}\\
&\quad + \frac{1}{2} \varphi(\frac{1}{2}) \E\left\{ \ID \{\| \theta_{0,t-1} -  \theta_{1,t-1}\|\leq t^{-\frac{\alpha}{4}}\} X X^\top q\left[ X^\top \left(\theta_{1} - \theta^{*}_{1}\right)\right]\right\}.
\end{align*}
{Obviously, the Hessian matrix exists for all $(\theta, \theta^\prime) \in \R^d \times \R^d$, and the Hessian matrix at $(\theta^*; \theta^*)$ is positive definite since $\lambda_{\min} \E [X X^\top] > 0$. For the non-degenerate model, we can check the Lipschitz continuity of $\nabla^2 \L_{\theta_{t-1}}(\theta)$ at $(\theta^*, \theta^*)$. Its Lipschitz continuity with respect to $\theta_{t-1}$ can be checked by the same argument as in the linear case. It is clearly differentiable with respect to $\theta$, so it is also Lipschitz continuous with respect to $\theta$, which means \eqref{eq:hessian lipschitz} holds. For the degenerate model, when $t$ is large enough, 
\begin{align*}
\nabla^2 \L_{\theta_{t-1}}(\theta) = \begin{bmatrix} \frac{1}{2} \varphi(\frac{1}{2}) \E\left\{ X X^\top q\left[ X^\top \left(\theta_{0} - \theta^{*}_{0}\right)\right]\right\} & 0 \\
0 & \frac{1}{2} \varphi(\frac{1}{2}) \E\left\{ X X^\top q\left[ X^\top \left(\theta_{1} - \theta^{*}_{1}\right)\right]\right\}
\end{bmatrix}.
\end{align*}
Its Lipschitz continuity only depends on $\theta$, and since it is clearly differentiable with respect to $\theta$, $\nabla^2 \L_{\theta_{t-1}}(\theta)$ also has the same Lipschitz continuity at $(\theta^*,\theta^*)$ as the non-degenerate model, that is, \eqref{eq:hessian lipschitz} holds. Therefore, we verified Assumption \ref{assum:hessian} for both cases.}

For any $A, X$, we can bound
\begin{align*}
\E(\|\nabla \ell(\theta; \zeta)\|^2 \mid X, A) &\leq \|X\|^2 .
\end{align*}
So the first part of Assumption~\ref{assum:gram} is satisfied with $\phi(X) = \|X\|^2$.

The matrix $S$ can be computed by
\begin{align*}
S = \E[w^2(\theta^*;X,A^*)\nabla \ell(\theta^*; \zeta^*) \nabla \ell(\theta^*; \zeta^*)^\top]
:= \begin{bmatrix} S_0 & 0 \\
0 & S_1
\end{bmatrix},
\end{align*}
where
{\begin{itemize}
    \item Non-degenerate model:
    \begin{align*}
S_0 &= (1-\frac{\varepsilon}{2}) \varphi^2(1-\frac{\varepsilon}{2})
\tau(1-\tau) \E \left[ \ID \{X^\top \theta^*_{0}>  X^\top \theta^*_{1}\}  X X^\top \right]\\
& \quad + \frac{\varepsilon}{2} \varphi^2(\frac{\varepsilon}{2}) \tau(1-\tau) \E \left[ \ID \{X^\top \theta^*_{0}< X^\top \theta^*_{1}\}  X X^\top \right],\\
S_1 &= (1-\frac{\varepsilon}{2}) \varphi^2(1-\frac{\varepsilon}{2})\tau(1-\tau) \E \left[ \ID \{X^\top \theta^*_{0}< X^\top \theta^*_{1}\}  X X^\top \right]\\
& \quad + \frac{\varepsilon}{2} \varphi^2(\frac{\varepsilon}{2}) \tau(1-\tau) \E \left[ \ID \{X^\top \theta^*_{0}>  X^\top \theta^*_{1}\}  X X^\top \right].
\end{align*}
\item Degenerate model:
\begin{align*}
    S_0=S_1=\frac{1}{2} \varphi^2(\frac{1}{2}) \tau(1-\tau) \E  X X^\top.
\end{align*}
\end{itemize}}
Therefore, Assumption \ref{assum:gram} can be verified for both cases.

Using the same argument in the linear case, we can derive that
\begin{align*}
&\lim_{\theta \rightarrow \theta^*} \E[\Delta(X, \theta)\phi(X)] = 0, \\
&\lim_{\theta \rightarrow \theta^*} \E\left[|w(\theta; X, A) - w(\theta^*; X, A)|^2 \phi(X) \mid A \right] = 0.
\end{align*}
Finally, we have the following inequality,
\begin{align*}
\E \left(\ID \{u + \cE<0\} - \ID \{\cE<0\}\right)^2
\leq \Pr (|\cE| < |u|).
\end{align*}
\begin{align*}
    \E\left[\|\nabla \ell(\theta; \zeta) - \nabla \ell(\theta^*; \zeta)\|^2 \mid X,A \right]
    &\leq \|X\|^2 \left(\E\left(\ID\{\cE<X^\top(\theta-\theta^*)-\ID\{\cE<0\}\right)^2 \mid X,A\right)\\
    &\leq \|X\|^2 \Pr (|\cE| < |X^\top(\theta_A - \theta_A^*)|).
\end{align*}
Again, we can use dominated convergence theorem to prove this term converges to 0 as $\theta \to \theta^*$. Therefore, Assumption~\ref{assum:tv} is satisfied.

\end{proof}

\subsection{Proof of Corollary~\ref{corr:exp3 quan-reg}}
{\begin{proof}
    Repeat the same procedure in Sections \ref{sec:proof of qtl with eps} and \ref{sec:pf exp3 ls-reg}, we can finish the proof. Note that in the quantile regression setting, $\E\phi^2(X)\leq \E\|X\|^4<\infty$.
\end{proof}}

\section{Proofs of the Bahadur representation}
\label{sec:app-remainder}
We first restate Theorem \ref{thm:ls-fs} with a detailed decomposition. 
\begin{lemmax} 
{For any policy and weighting scheme satisfying the conditions in Theorem \ref{thm:smooth-clt}, 
we further assume $\E \left(\|\nabla \ell(\theta_{t-1}; \zeta)-\nabla \ell(\theta^*; \zeta)\|^2 \mid X, A \right) \leq C\|\theta_{t-1} - \theta^*\|^2$ in Assumption \ref{assum:tv}, and 
\begin{enumerate}[(a)]
  \item \label{a}Given $\theta^*$, the following inequality holds for some constant $\beta_1,\beta_2>0$,
\begin{align*}
&\E\left[ \Delta(X,\theta_{t-1})\phi(X) \mid \right] \leq C t^{-\beta_1},\\
&\E\left[|w(\theta_{t-1}; X, A) - w(\theta^*; X, A)|^2 \phi(X) \mid A \right] \leq C t^{-\beta_2},
\end{align*}
where $\phi$ is defined in Assumption~\ref{assum:gram};
\item \label{b}For any action $A \in \A$ and covariate $X$, assume $\|\nabla \ell(\theta; \zeta)\|^4$ exists almost surely under $\P_{Y\mid X,A}$, and $\E \left(\|\nabla \ell(\theta; \zeta)\|^4 \mid X, A \right) \leq C(1+\|\theta - \theta^*\|^4 )$, where $C$ is some positive constant,
\end{enumerate}}
We have the following decomposition
\begin{align} \label{eq:bahadur-rep2}
\sqrt{t}\Sigma^{-1/2}(\bar{\theta}_t - \theta^*) &= \underbrace{\frac{1}{\sqrt{t}} \sum_{i=1}^{t-1} \Sigma_t^{-1/2} Q_i^t \xi_{\theta^*}(\theta^*; \zeta_i^*)}_\text{$W$} + \underbrace{\frac{1}{\sqrt{t}}\sum_{i=1}^{t-1} \Sigma^{-1/2} Q_i^t (\xi_{\theta_{i-1}}(\theta_{i-1}; \zeta_i)-\xi_{\theta^*}(\theta^*; \zeta_i^*))}_\text{$R_1$} \notag \\
& \quad + \underbrace{\frac{1}{\sqrt{t}\eta_0}\Sigma^{-1/2} Q_0^t(\theta_0 - \theta^*)}_\text{$R_2$} +\underbrace{\frac{1}{\sqrt{t}}\sum_{i=1}^{t-1} \Sigma^{-1/2} Q_i^t (\nabla\L_{\theta_i}(\theta_i) - H(\theta_i - \theta^*))}_\text{$R_3$} \notag \\
& \quad + \underbrace{\frac{1}{\sqrt{t}}\sum_{i=1}^{t-1} (\Sigma^{-1/2}-\Sigma_t^{-1/2})Q_i^t \xi_{\theta^*}(\theta^*; \zeta_i^*) }_\text{$R_4$} \notag\\
&= W + R_1 + R_2 + R_3 + R_4,
\end{align}
where $\E [W] = 0, \E [W W^\top] = I_d$, $\Sigma_t= \frac{1}{t}\sum_{i=1}^{t-1} Q_i^t S Q_i^t$, $Q_i^t = \eta_i \sum_{j = i}^{t-1} \prod_{k = i+1}^j (I_d - \eta_k H)$ for $t>0$ and $i>0$, Furthermore, we have,
\begin{align*}
{\E \|R_1\|^2 \lesssim t^{-\min\{\alpha,\beta_1,\beta_2\}},}
\; \;\E \|R_2\|^2 \lesssim t^{-1}, \; \; \E \|R_3\| \lesssim t^{-\alpha + \frac{1}{2}}, \; \; \E \|R_4\|^2 \lesssim t^{2\alpha-2}.
\end{align*}
\end{lemmax}

\subsubsection*{Proof of Theorem~\ref{thm:ls-fs}}
\begin{proof}
To address the randomness in the adaptive policy $A_t$, it is necessary to define a coupling for all categorical distributions with $|\A|$ categories simultaneously. Previously in the proof of Theorem~\ref{thm:smooth-clt}, we used the total variation distance to bound $\P(A_t \neq A^*)$. Here we need a generalized coupling defined as follows.

Consider the $(|\A|-1)$-simplex $S = \{(x_1, \dots, x_{|\A|}) \mid x_i \geq 0, \sum x_i = 1\}$. It has $|\A|$ vertices given by $V_i = (0, \dots, 0, 1, 0, \dots, 0)$ where $1$ is in the $i$-th coordinate. Take a point $P$ uniformly from $S$. For any categorical distribution with probability $(p_1, \dots, p_{|\A|})$, define $K = (p_1, \dots, p_{|\A|})$. The probability that $P$ lies in the sub-simplex with vertices $\{V_1, \dots, V_{i-1},V_{i+1}, \dots , V_{|\A|}, K\}$ ($V_i$ is deleted) is exactly $p_i$. Thus, $K$ gives a partition of $S$ that has the required categorical distribution and we can use this to define the action $A$. Furthermore, given two different distribution $K, K^\prime$, it is easy to see that the quantity $\P(A \neq A^\prime)$ is bounded by $C d_{\rm TV}(K, K^\prime)$, where $C$ is some positive constant which only depends on $|\A|$. So all previous bounds still holds up to a constant.

In conclusion, the probability space we have used for stochastic gradient descent can be redefined using \emph{i.i.d.} random variables $(X_t, Y_t, P_t), t\geq 1$, where $P_t$ obeys a uniform distribution on a $(|\A|-1)$-simplex. We also redefine $\zeta_t = (X_t, Y_t, P_t)$.

We would like to note that in this proof and the proofs thereafter, with a slight abuse of notation, we will use $C$ to represent different positive constants.

We can now give a decomposition of $\sqrt{t}\Sigma^{-1/2}(\bar{\theta}_t - \theta^*)$ as follows,
\begin{align*}
\sqrt{t}\Sigma^{-1/2}(\bar{\theta}_t - \theta^*) &= \underbrace{\frac{1}{\sqrt{t}} \sum_{i=1}^{t-1} \Sigma_t^{-1/2} Q_i^t \xi_{\theta^*}(\theta^*; \zeta_i^*)}_\text{$W$} + \underbrace{\frac{1}{\sqrt{t}}\sum_{i=1}^{t-1} \Sigma^{-1/2} Q_i^t (\xi_{\theta_{i-1}}(\theta_{i-1}; \zeta_i)-\xi_{\theta^*}(\theta^*; \zeta_i^*))}_\text{$R_1$}\\
& \quad + \underbrace{\frac{1}{\sqrt{t}\eta_0}\Sigma^{-1/2} Q_0^t(\theta_0 - \theta^*)}_\text{$R_2$} +\underbrace{\frac{1}{\sqrt{t}}\sum_{i=1}^{t-1} \Sigma^{-1/2} Q_i^t (\nabla\L_{\theta_i}(\theta_i) - H(\theta_i - \theta^*))}_\text{$R_3$}\\
& \quad + \underbrace{\frac{1}{\sqrt{t}}\sum_{i=1}^{t-1} (\Sigma^{-1/2}-\Sigma_t^{-1/2})Q_i^t \xi_{\theta^*}(\theta^*; \zeta_i^*) }_\text{$R_4$}\\
&= W + R_1 + R_2 + R_3 + R_4,
\end{align*}
From \eqref{eq:bound_grad}, we can estimate
\begin{align}\label{eq:intermediate}
    \|\L_{\theta_{t-1}}(\theta_{t-1})-H(\theta_{t-1}-\theta^*)\|\leq C\|\theta_{t-1}-\theta^*\|^2.
\end{align}
{From Assumption (\ref{b}), we also have
\begin{align*}
&\quad \E \left[\|\xi_{\theta_{t-1}}(\theta_{t-1}; \zeta_t)  + \nabla \L_{\theta_{t-1}}(\theta_{t-1})\|^4  \mid \mathcal{F}_{t-1} \right]\\
&= \E \left[\|w(\theta_{t-1}; \zeta_t) \nabla \ell(\theta_{t-1}; \zeta_t)\|^4  \mid \mathcal{F}_{t-1} \right] \\
&\leq \bar{w}^4 \E\left[\|\nabla \ell(\theta_{t-1}; \zeta_t)\|^4 \mid \mathcal{F}_{t-1} \right]\\
& \leq C (1+\|\theta_{t-1} - \theta^*\|^4).
\end{align*}
Therefore, 
\begin{align*}
\E \left[\|\xi_{\theta_{t-1}}(\theta_{t-1}; \zeta_t)\|^4 \mid\mathcal{F}_{t-1} \right] \leq C (1+\|\theta_{t-1} - \theta^*\|^4),
\end{align*}
and $\E\|\xi_{\theta^*}(\theta^*; \zeta^*)\|^4 \leq C$,
which implies the following bounds from Lemma 3.2 in \cite{chen2016statistical},}
\begin{align*}
\E\|\theta_t - \theta^*\|^2 &\leq C t^{-\alpha}, \\
\E\|\theta_t - \theta^*\|^4 &\leq C t^{-2\alpha}.
\end{align*}
These two inequalities above are also derived as Lemma 5.12 and 5.14 in \cite{shao2022berry}.

From \cite{polyak1992acceleration}, we know that $\|Q_i^t\| \leq C$. Moreover, $\|\Sigma_t^{-1/2} Q_i^t\| \leq C$ is guaranteed in \cite{shao2022berry}. In the proof of Lemma 1 of \cite{polyak1992acceleration}, it can be seen that,
\begin{align*}
H^{-1} - Q_i^t &= H^{-1} - \eta_i \sum_{j = i}^{t-1} \prod_{k = i+1}^j (I_d - \eta_k H)\\
&= \sum_{j = i}^{t-1} (\eta_j - \eta_i) \prod_{k = i+1}^j (I_d - \eta_k H) + H^{-1} \prod_{k = i+1}^t (I_d - \eta_k H),
\end{align*}
and the first term is $O(i^{\alpha-1})$, {which is because 
\begin{align*}
    \left\|\sum_{j = i}^{t-1} (\eta_j - \eta_i) \prod_{k = i+1}^j (I_d - \eta_k H)\right\|
    &=\left\|\sum_{j = i+1}^{t-1}\sum_{s=i}^{j-1} (\eta_{s+1} - \eta_s) \prod_{k = i+1}^j (I_d - \eta_k H)\right\|\\
    &\leq C_1^\prime\sum_{j = i+1}^{t-1}\sum_{s=i}^{j-1}\alpha s^{-\alpha-1}\left\|\prod_{k = i+1}^j (I_d - \eta_k H)\right\|\\
    &\leq i^{-1}C_1^\prime\sum_{j=i+1}^{t-1}m_i^je^{-C_2^\prime m_{i}^{j}}\\
    &\leq  i^{-1}C_1^\prime\sum_{j=i+1}^{t-1}\frac{m_{i}^{j}e^{-C_2^\prime m_{i}^{j}}(m_{i}^{j+1}-m_{i}^{j})}{\eta_j}\\
    & \leq\frac{C_1^\prime i^{-1}}{\eta_i}\sum_{j=i+1}^{t-1}m_{i}^{j}e^{-C_2^\prime m_{i}^{j}}(m_{i}^{j+1}-m_{i}^{j}),
\end{align*}
where $C_1^\prime,C_2^\prime$ are some constants and $m_i^j=\sum_{l=i}^{j-1}\eta_l$. The last inequality holds because we have $i\eta_i\leq C^\prime j\eta_j$ for large enough $j$, and $m_i^j\geq K^\prime \ln(j/i)$ for arbitrarily large $K^\prime$ when $j$ is large enough. Hence,
\begin{align*}
    \frac{1}{\eta_j}\leq C^\prime\frac{j}{i\eta_i}\leq\frac{C^\prime}{\eta_i}e^{\frac{m_i^j}{K^\prime}}.
\end{align*}
Let $f(x)=xe^{-C_2^\prime x}$ and we know $f(x)$ is increasing at $(0,x^*)$ and decreasing at $(x^*,\infty)$, $x^*=1/C_2^\prime$. Define $S_1^{i,t}=\{j\in[i+1,t-1]: m_{i}^{j}\leq x^*\}$, and $S_2^{i,t}=\{j\in[i+1,t-1]: m_{i}^{j}> x^*\}$. Especially, we can check $f(m_{i}^j)<Kf(m_i^{j+1})$ for some constant $K$ where $j\in S_2^{i,t}$. Then,
\begin{align*}
    \left\|\sum_{j = i}^{t-1} (\eta_j - \eta_i) \prod_{k = i+1}^j (I_d - \eta_k H)\right\|
    &\leq \frac{C_1^\prime i^{-1}}{\eta_i}\left(\sum_{j\in S^{i,t}_1}+\sum_{j\in S^{i,t}_2}\right)m_{i}^{j}e^{-C_2^\prime m_{i}^{j}}(m_{i}^{j+1}-m_{i}^{j})\\
    &\leq \frac{C_1^\prime i^{-1}}{\eta_i}\left(\int_{0}^{x^*}me^{-C_2^\prime m}\d m+K\int_{x^*}^\infty me^{-C_2^\prime m}\d m\right)\\
    &\leq C_1^\prime i^{\alpha-1}.
\end{align*}
}
In Lemma D.2 of \cite{chen2016statistical}, it is proved that
\begin{align*}
\left\|\prod_{k = i+1}^t (I_d - \eta_k H)\right\| \leq e^{-C(t-i)\eta_t}.
\end{align*}
So we have
\begin{align} \label{eq:cov-diff}
\|\Sigma_t - \Sigma\| & \leq  \frac{C}{t}\sum_{i=1}^t (i^{\alpha-1} + e^{-C(t-i)\eta_t}) \notag \\
& \leq   Ct^{\alpha-1} + \frac{e^{-C\eta_t}}{t(1-e^{-C\eta_t})} \notag\\
& \leq  Ct^{\alpha-1} + Ct^{-\alpha-1} \notag\\
& \leq Ct^{\alpha-1}.
\end{align}

Now we proceed to the main part of the proof. Similar to \eqref{eq:bound_M2b}, we have
\begin{align*}
&\quad \E\|\xi_{\theta_{t-1}}(\theta_{t-1}; \zeta_t)  - \xi_{\theta^*}(\theta^*; \zeta_t^*) \|^2 \\
&\leq C\E[(\Delta(X, \theta_{t-1})+\max_{A\in \A}|w(\theta_{t-1};X,A)-w(\theta^*;X,A)|^2)(1+\|\theta_{t-1} - \theta^*\|^2)\phi(X)]+C\|\theta_{t-1} - \theta^*\|^2.
\end{align*}
{Because of Assumption (\ref{a}),
the whole term can be estimated by
\begin{align*}
&\quad \E[(\Delta(X, \theta_{t-1})+\max_{A\in \A}|w(\theta_{t-1};X,A)-w(\theta^*;X,A)|^2)(1+\|\theta_{t-1} - \theta^*\|^2)\phi(X)]+C\E\|\theta_{t-1} - \theta^*\|^2\\
&\leq C t^{-\beta_1}+Ct^{-\beta_2}+Ct^{-\alpha}\\
&\leq C t^{-\min\{\alpha,\beta_1,\beta_2\}}.
\end{align*}
Combining the results above, we obtain that
\begin{align} \label{eq:grad-diff}
\E\|\xi_{\theta_{t-1}}(\theta_{t-1}; \zeta_t)  - \xi_{\theta^*}(\theta^*; \zeta_t^*) \|^2 \leq C t^{-\min\{\alpha,\beta_1,\beta_2\}}.
\end{align}}

With all these intermediate results in hand, we can proceed to the conclusion as follows. First of all, by inequality~\eqref{eq:grad-diff}, we have the following bound for $R_1$,
{\begin{align*}
\E \|R_1\|^2 &\leq Ct^{-1}\E \| \sum_{i = 1}^t\xi_{\theta_{i-1}}(\theta_{i-1}; \zeta_i)-\xi_{\theta^*}(\theta^*; \zeta_i^*)\|^2\\
& \leq Ct^{-1} \sum_{i = 1}^t\E \| \xi_{\theta_{i-1}}(\theta_{i-1}; \zeta_i)-\xi_{\theta^*}(\theta^*; \zeta_i^*)\|^2 \\
& \leq Ct^{-1} \sum_{i = 1}^t i^{-\min\{\alpha,\beta_1,\beta_2\}} \leq C t^{-\min\{\alpha,\beta_1,\beta_2\}}.
\end{align*}}
Also, it is easy to derive that
\begin{align*}
\E \|R_2\|^2 \leq Ct^{-1}.
\end{align*}
Using the above intermediate result \eqref{eq:intermediate}, the $R_3$ term has the convergence rate below,
\begin{align*}
\E \|R_3\| &\leq t^{-1/2}\sum_{i=0}^{t-1} \E\|\nabla \L_{\theta_i}(\theta_i) - H(\theta_i - \theta^*)\|\\
&\leq Ct^{-1/2} \sum_{i=0}^{t-1} \E\|\theta_i-\theta^*\|^2\\
&\leq Ct^{-1/2} \sum_{i=0}^{t-1} i^{-\alpha} \leq Ct^{-\alpha + \frac{1}{2}}.
\end{align*}
Finally, we can bound $R_4$ using our result in \eqref{eq:cov-diff},
\begin{align*}
\E \|R_4\|^2 \leq \frac{C}{t} \left\|\Sigma^{-1/2}-\Sigma_t^{-1/2} \right\|^2\sum_{i=1}^{t-1}\E\|\xi_{\theta^*}(\theta^*; \zeta_i^*)\|^2 \leq C t^{2\alpha -2}.
\end{align*}
\end{proof}

\subsection{Proof of Corollary \ref{corr:bahadur eps}}\label{appendix:lower bound on R1}

\textbf{Non-degenerate model:}
Under the modified $\varepsilon$-greedy policy in \eqref{eq:modified eps-greedy}, 
for a non-degenerated normal variable $X$, equation (\ref{eq:hessian lipschitz}) in Assumption~\ref{assum:hessian} clearly holds. Through equation \eqref{eq:eps-non-deg-TV} and \eqref{eq:weight-diff-eps-non-deg}, Assumption (\ref{a}) can also be transformed into a (stronger) differentiability condition. Denote the common part of the left hand side of Assumption (\ref{a}) as $F(\theta)$,
\begin{align*}
F(\theta) &= \E\left[ \left| \ID \left(X^\top \theta^*_{0}>  X^\top \theta^*_{1}\right) - \ID \left(X^\top \theta_{0}> X^\top \theta_{1}\right) \right| \phi(X) \right]\\
&=\E\left[ \left| \ID \left(\Tilde{X}^\top \Tilde{\theta}^*> 0\right) - \ID \left(\Tilde{X}^\top \Tilde{\theta}> 0\right) \right| \phi(X) \right],
\end{align*}
where $\Tilde{X}^\top=[X^\top,X^\top]$, $\Tilde{\theta}^\top=[\theta^\top_{0},-\theta^\top_{1}]$ for any $\theta$.
It is not differentiable at $\theta^*$. In fact, we have for some constants $\underline{C}, \overline{C}$
\begin{align*}
0 \leq \underline{C} \leq \frac{\partial F(\theta)}{\partial v} \big|_{\theta = \theta^*}\leq \overline{C},
\end{align*}
for any $v$ orthogonal to $\Tilde{\theta}^*$, and the directional derivatives paralleled $\Tilde{\theta}^*$ is 0. The proof is similar to the proof of Lemma \ref{lem:hessian derivative} 
 {under one additional condition that the p.d.f. of $X$, $p(\cdot)$, is greater than some positive constant when $\|X\|$ is near 0.} Furthermore, we can also deduce that 
\begin{align*}
\underline{C} |P(\theta - \theta^*)|/2 \leq F(\theta) \leq 2\overline{C} |P(\theta - \theta^*)|,
\end{align*}
where $P$ is the projection to the orthogonal complement of $\Tilde{\theta}^*$ and $\|\theta - \theta^*\|$ is sufficiently small. 
{This implies Assumption (\ref{a}) with $\beta_1=\beta_2=\alpha/2$.} 
So Theorem~\ref{thm:ls-fs} holds. By same argument, we can also prove that
\begin{align*}
0 \leq \underline{C} \leq \frac{\partial \E\left[\Delta(X, \theta)\right]}{\partial v} \big|_{\theta = \theta^*}\leq \overline{C},
\end{align*}
where $\Delta(X, \theta) = (1-\varepsilon)\left|\ID \left(X^\top \theta^*_{0}>  X^\top \theta^*_{1}\right) - \ID \left(X^\top \theta_{0}> X^\top \theta_{1}\right)\right|$. So it can be bounded by
\begin{align*}
\underline{C} |P(\theta - \theta^*)| \leq \E\left[\Delta(X, \theta)\right] \leq \overline{C} |P(\theta - \theta^*)|.
\end{align*}

For $R_1$, we first decompose the term $\xi_{\theta_{t-1}}(\theta_{t-1}; \zeta_t) - \xi_{\theta^*}(\theta^*; \zeta_t^*)$ as follows
\begin{align*}
&\quad \E[\|\xi_{\theta_{t-1}}(\theta_{t-1}; \zeta_t) - \xi_{\theta^*}(\theta^*; \zeta_t^*)\|^2] \\
&=\E\left[\|w(\theta_{t-1}; X_t, A_t)\nabla \ell(\theta_{t-1}; \zeta_t) - w(\theta^*; X_t, A^*)\nabla \ell(\theta^*; X_t, A^*, Y_t)\|^2 \right] - \|\nabla \L_{\theta_{t-1}}(\theta_{t-1}) - \nabla \L_{\theta^*}(\theta^*)\|^2\\
&= M_2 - M_1,
\end{align*}
where $M_1, M_2, M_3, M_4$ has been defined in the proof of  Theorem~\ref{thm:smooth-clt}. From previous estimates, $M_1 \leq C t^{-\alpha}$. Previous decomposition can also provide lower bounds,
\begin{align*}
M_2 & \geq C \E [\Delta(X_t, \theta_{t-1})M_3],\\
M_3 & \geq C \E \left[\|\nabla \ell(\theta_{t-1}; X_t, A_t, Y_t)\|^2 +\|\nabla \ell(\theta^*; X_t, A^*, Y_t^*)\|^2\mid X_t, A_t \neq A^* \right] \geq C^\prime.
\end{align*}
Combining all inequalities together plus $\E\|\theta_t-\theta^*\|\leq Ct^{-\frac{\alpha}{2}}$, we have $\E\|\xi_{\theta_{t-1}}(\theta_{t-1}; \zeta_t) - \xi_{\theta^*}(\theta^*; \zeta_t^*)\|^2 \geq C \E [\Delta(X_t, \theta_{t-1})]$. The proof of Theorem~\ref{thm:ls-fs} implies that $\Sigma_t^{-1/2}$ and $Q_i^t$ are bounded from below for sufficiently large $i, t > i_0$. So
\begin{align*}
\E \|R_1\|^2 & = \frac{1}{t}\sum_{i=1}^{t-1} \|\Sigma_t^{-1/2} Q_i^t (\xi_{\theta_{t-1}}(\theta_{t-1}; \zeta_t) - \xi_{\theta^*}(\theta^*; \zeta_t^*))\|^2\\
& \geq \frac{C}{t} \sum_{i > i_0}^{t-1} \E \|\xi_{\theta_{t-1}}(\theta_{t-1}; \zeta_t) - \xi_{\theta^*}(\theta^*; \zeta_t^*)\|^2\\
& \geq \frac{C}{t} \sum_{i > i_0}^{t-1} \E [\Delta(X_i, \theta_{i-1})]\\
& \geq \frac{C}{t} \sum_{i > i_0}^{t-1} \E|P(\theta_{i-1} - \theta^*)|.
\end{align*}
Theorem~\ref{thm:ls-fs} implies that 
\begin{align*}
\frac{1}{t}\sum_{i=1}^t \E |P(\theta_i - \theta^*)| \geq \E |P(\bar{\theta}_t - \theta^*)| \geq C t^{-1/2}.
\end{align*}
Therefore, we can come to the conclusion that
\begin{align*}
\E \|R_1\|^2 \geq C t ^{-1/2}.
\end{align*}

{\textbf{Degenerate model:}
Since $X$ is sub-Gaussian, Lemma \ref{lem:a.s.} holds. Through equation \eqref{eq:eps-de-TV} and \eqref{eq:weight-diff-eps-de}, by Lemma \ref{lem:a.s.}, we have the common part of the left hand side of Assumption (\ref{a}) is 
\begin{align*}
    \E\ID\left(\|\theta_{0,t-1}-\theta_{1,t-1}\|>t^{-\frac{\alpha}{4}}\right)=\Pr\left(\|\theta_{0,t-1}-\theta_{1,t-1}\|>t^{-\frac{\alpha}{4}}\right)\leq Ct^{-2\alpha}.
\end{align*}
{Therefore, $\beta_1=\beta_2=2\alpha$ here}.}

\subsection{Proof of Corollary \ref{corr:bahadur exp3}}\label{appendix:bahadur exp3}
\begin{proof}
    The rate of the left hand side of Assumption (\ref{a}) is shown in \eqref{eq:exp3-TV} and \eqref{eq:weight-diff-exp3}, which is bounded by $\|\theta_{t-1}-\theta^*\|$ and its square respectively. {Therefore, $\beta_1=\alpha/2,\beta_2=\alpha$}.
\end{proof}

\section{Asymptotic normality under $\varepsilon_t$-greedy policy with varying $\varepsilon_t$}
\label{sec:app-eps-inf}

We work under the linear regression setting with assumptions in Corollary~\ref{corr:ls-reg}. We relax the $\varepsilon$-greedy policy by the $\varepsilon_t$-greedy policy is used, the new policy $A_t \sim \pi_t$ is defined by
\begin{align*}
\Pr(A_t = 0 \mid X_t, \theta_{t-1}) = (1-\varepsilon_t)\ID \{X_t^\top \theta_{0,t-1}> X_t^\top \theta_{1,t-1}\} + \frac{\varepsilon_t}{2},
\end{align*}
{and the modified $\varepsilon$-greedy policy in \eqref{eq:modified eps-greedy} is updated accordingly, which could be called the modified $\varepsilon_t$-greedy policy.} The weight $w_t$ is again defined as some functions of $\Pr(A_t = 0 \mid X_t, \theta_{t-1})$. Assume $\lim_{t \to \infty}\varepsilon_t = \varepsilon_{\infty}$, for some constant $\varepsilon_\infty \in (0,1)$. Notice that $\varepsilon_t$ is a deterministic sequence, meaning it does not change with respect to $X, A, Y$ and $\theta, \theta^\prime$.

The definition of $\L_{\theta^\prime}(\theta)$ should be change accordingly, i.e.,
\begin{align*}
\L_{t, \theta^\prime}(\theta) &= \E_{\P} \left[ \E_{\pi_t(X, \theta^\prime)} \left(w_t(\theta^\prime; X, A)\ell(\theta; X, A, Y) \mid X \right) \right],\\
\L_{\infty, \theta^\prime}(\theta) &= \E_{\P} \left[ \E_{\pi_{\infty}(X, \theta^\prime)} \left(w_{\infty}(\theta^\prime; X, A)\ell(\theta; X, A, Y) \mid X \right) \right].
\end{align*}
Furthermore, the matrix $H, S$ should be defined with respect to $\L_{\infty}$.

\begin{theorem} \label{thm:clt-eps-inf}
{Under the modified $\varepsilon_t$-greedy policy we discussed above,}
with the same conditions as Corollary~\ref{corr:ls-reg}, the asymptotic normality also holds for averaged SGD estimator $\bar{\theta}_t$, i.e.,
\begin{align*}
\sqrt{t}(\bar{\theta}_t - \theta^*) \rightarrow N(0, H^{-1}SH^{-1}).
\end{align*}
\end{theorem}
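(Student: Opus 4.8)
The plan is to reduce to Theorem~\ref{thm:smooth-clt} (equivalently, to the proof of Theorem~\ref{thm:ls-fs}) by re-running that argument with the time-varying population loss $\L_{t,\theta^\prime}(\theta)$ in place of $\L_{\theta^\prime}(\theta)$, exploiting that $\L_{t,\theta^\prime}\to\L_{\infty,\theta^\prime}$ as $t\to\infty$. First I would record that the relevant constants are uniform in $t$: since $\varepsilon_t\in(0,1)$ and $\varepsilon_t\to\varepsilon_\infty\in(0,1)$, the sequence is eventually bounded away from $0$ and $1$, so the verification in Corollary~\ref{corr:ls-reg} goes through for every $\L_{t,\cdot}(\cdot)$ with constants that can be taken uniform for all large $t$ (the constants there depend on $\varepsilon$ only through quantities such as $\min\{(1-\tfrac{\varepsilon}{2})\varphi(1-\tfrac{\varepsilon}{2}),\tfrac{\varepsilon}{2}\varphi(\tfrac{\varepsilon}{2})\}$ and $\E_{\P_X}[XX^\top]$, all uniformly controlled). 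This yields $\theta_t\to\theta^*$ almost surely and the moment bounds $\E\|\theta_t-\theta^*\|^2\lesssim t^{-\alpha}$, $\E\|\theta_t-\theta^*\|^4\lesssim t^{-2\alpha}$, exactly as before. Second, by the explicit formulas in Remark~\ref{rmk:smooth-clt} and continuity of $\varphi$, the limiting objects converge: $H_t:=\nabla^2\L_{t,\theta^*}(\theta^*)\to H:=\nabla^2\L_{\infty,\theta^*}(\theta^*)$, and the conditional second-moment arrays of the (martingale-difference) noise converge to $S$; I set $\rho_t:=\|H_t-H\|\to 0$.

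Next I would linearize the recursion as $\theta_t-\theta^*=(I_d-\eta_t H)(\theta_{t-1}-\theta^*)-\eta_t\,\xi_{t,\theta^*}(\theta^*;\zeta_t)-\eta_t\,(e_t+r_t+b_t)$, where $e_t:=\xi_{t,\theta_{t-1}}(\theta_{t-1};\zeta_t)-\xi_{t,\theta^*}(\theta^*;\zeta_t)$ satisfies $\E\|e_t\|^2\lesssim t^{-\alpha/2}$ by the coupling argument of Theorem~\ref{thm:smooth-clt}; $r_t:=\nabla\L_{t,\theta_{t-1}}(\theta_{t-1})-H_t(\theta_{t-1}-\theta^*)$ satisfies $\|r_t\|\lesssim\|\theta_{t-1}-\theta^*\|^2$ by the uniform Hessian-Lipschitz bound (as in \eqref{eq:R3-mid}); and $b_t:=(H_t-H)(\theta_{t-1}-\theta^*)$, with $\|b_t\|\le\rho_t\|\theta_{t-1}-\theta^*\|$, is the genuinely new term. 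Feeding this into the Polyak--Juditsky resolvent expansion (the decomposition in the proof of Theorem~\ref{thm:ls-fs}, with $Q_i^t=\eta_i\sum_{j=i}^{t-1}\prod_{k=i+1}^j(I_d-\eta_k H)$) gives $\sqrt{t}(\bar\theta_t-\theta^*)=W_t+o_p(1)+\tfrac1{\sqrt t}\sum_{i=1}^{t-1}Q_i^t b_i$, where $W_t=\tfrac1{\sqrt t}\sum_{i=1}^{t-1}Q_i^t\,\xi_{i,\theta^*}(\theta^*;\zeta_i)$ and the ``$o_p(1)$'' collects the initial-condition term, the $e_t$-term ($O_p(t^{-\alpha/4})$), the $r_t$-term ($O_p(t^{-\alpha+1/2})$) and the $\Sigma_t$-versus-$\Sigma$ correction ($O_p(t^{\alpha-1})$), all estimated verbatim as in that proof since those bounds only use the moment estimates and the uniform-in-$t$ assumptions. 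A martingale CLT then yields $W_t\Rightarrow\mathcal N(0,H^{-1}SH^{-1})$; the only new point relative to Theorem~\ref{thm:smooth-clt} is that the conditional covariances now form a non-stationary array, but they still converge to $H^{-1}SH^{-1}$ (Ces\`aro, using convergence of the noise second moments and boundedness of $Q_i^t$) and the uniform fourth-moment bound supplies the Lindeberg condition.

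The crux, and the main obstacle, is to show $\tfrac1{\sqrt t}\sum_{i=1}^{t-1}Q_i^t b_i=o_p(1)$: because $\rho_i\to 0$ only (with no a priori rate), the naive estimate $\tfrac1{\sqrt t}\sum_i\|b_i\|\lesssim\tfrac1{\sqrt t}\sum_i\rho_i\,i^{-\alpha/2}$ need not vanish when $\alpha<1$. I would instead split $Q_i^t=H^{-1}+(Q_i^t-H^{-1})$ and use $\|Q_i^t-H^{-1}\|\lesssim i^{\alpha-1}+e^{-c(t-i)\eta_t}$ (as in \eqref{eq:cov-diff}) to dispose of the $(Q_i^t-H^{-1})$ contribution; for the remaining $H^{-1}\tfrac1{\sqrt t}\sum_i(H_i-H)(\theta_{i-1}-\theta^*)$, I would expand $\theta_{i-1}-\theta^*$ through the linearized recursion into (i) an initial-condition part that is super-exponentially small; (ii) a part involving the $r_j$'s that, after swapping the $i,j$ summations and using $\E\|r_j\|\lesssim j^{-\alpha}$ and $\alpha>\tfrac12$, is $O(t^{1/2-\alpha})\to 0$; and (iii) the noise part $\sum_{j<i}[\cdots]\eta_j\,\xi_{j,\theta^*}(\theta^*;\zeta_j)$. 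For (iii) the key observation is that $H_i-H$ carries \emph{no randomness}, so after swapping sums this becomes a martingale sum $\sum_j B_j^t\,\xi_{j,\theta^*}(\theta^*;\zeta_j)$ with \emph{deterministic} coefficients obeying $\|B_j^t\|\lesssim\sup_{i\ge j}\rho_i=:\bar\rho_j\to 0$, so its $L^2$ norm is $\lesssim(\sum_{j\le t}\bar\rho_j^2)^{1/2}=o(\sqrt t)$ by Ces\`aro and the $1/\sqrt t$ prefactor closes the bound. Combining everything, $\sqrt t(\bar\theta_t-\theta^*)=W_t+o_p(1)\Rightarrow\mathcal N(0,H^{-1}SH^{-1})$. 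I expect the bookkeeping of step (iii) --- converting the problematic sum into a martingale with vanishing deterministic gains by exploiting that the time-variation $H_i-H$ is deterministic --- to be the delicate part; everything else is a routine re-tracing of the fixed-$\varepsilon$ argument.
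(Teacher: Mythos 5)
Your route is genuinely different from the paper's. The paper never introduces the time-varying Hessian $H_t$: it keeps the fixed limiting mean field $\nabla\L_{\infty,\theta}(\theta)$ as the drift, lumps the entire discrepancy caused by $\varepsilon_t\neq\varepsilon_\infty$ (together with $\theta_{t-1}\neq\theta^*$) into the perturbation $\xi_t(\theta_{t-1})=\xi_t-\xi_t(0)$, bounds its second moment via the simplex coupling with an extra $C|\varepsilon_t-\varepsilon_\infty|$ term on top of the fixed-$\varepsilon$ bounds, and then repeats the verification behind Theorem~\ref{thm:smooth-clt}. Your proposal instead re-runs the resolvent/Bahadur decomposition of Theorem~\ref{thm:ls-fs} with the time-$t$ population loss, isolates the deterministic mismatch $b_t=(H_t-H)(\theta_{t-1}-\theta^*)$, correctly observes that the crude bound $t^{-1/2}\sum_i\rho_i\,i^{-\alpha/2}$ does not vanish when $\alpha<1$ and $\rho_i\to0$ has no rate, and kills this term by swapping summations and exploiting that $H_i-H$ is deterministic, so that the troublesome sum becomes a martingale with vanishing deterministic gains. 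This is heavier, but it engages explicitly with the one point the paper's short verification passes over: once $\varepsilon_t\neq\varepsilon_\infty$, the noise centered at $\L_\infty$ acquires a nonzero conditional mean of order $|\varepsilon_t-\varepsilon_\infty|\,\|\theta_{t-1}-\theta^*\|$, and washing that out at the $\sqrt t$ scale without a rate requires exactly the kind of argument you sketch (consistent with the paper only claiming the Theorem~\ref{thm:ls-fs} rates under $|\varepsilon_t-\varepsilon_\infty|=\mathcal{O}(t^{-\alpha/2})$).

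There is, however, a concrete hole in your crux step. The recursion you linearize is $\theta_j-\theta^*=(I_d-\eta_j H)(\theta_{j-1}-\theta^*)-\eta_j(\xi_{j,\theta^*}+e_j+r_j+b_j)$, so when you expand $\theta_{i-1}-\theta^*$ inside $t^{-1/2}\sum_i(H_i-H)(\theta_{i-1}-\theta^*)$ you obtain not only the initial-condition, $r_j$ and $\xi_{j,\theta^*}$ pieces you list, but also the $e_j$'s and, crucially, the $b_j$'s themselves. The $e_j$ piece is harmless: it is conditionally centered, so the same deterministic-coefficient martingale bound gives an $L^2$ contribution of order $t^{-1}\sum_j\rho_j^2\,j^{-\alpha/2}\to0$. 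The $b_j$ piece is self-referential: after one pass its contribution is only bounded by $t^{-1/2}\sum_j\rho_j^2\,j^{-\alpha/2}$ in $L^1$, which is again of the problematic form $t^{-1/2}\sum_j o(1)\,j^{-\alpha/2}$, and naively repeating the one-step expansion does not by itself remove the $t^{1/2-\alpha/2}$ envelope. The clean repair is to absorb the mismatch into the homogeneous dynamics: write $\theta_j-\theta^*=(I_d-\eta_j H_j)(\theta_{j-1}-\theta^*)-\eta_j(\xi_{j,\theta^*}+e_j+r_j)$ and expand with the inhomogeneous products $\prod_k(I_d-\eta_k H_k)$, which are deterministic and retain the exponential-decay estimates used for \eqref{eq:cov-diff} because $H_j\to H\succ0$ and the $H_j$ are uniformly positive definite for large $j$; then no $b_j$ appears inside the expansion, the coefficients multiplying $\xi_{j,\theta^*}$ and $e_j$ are deterministic and $\mathcal{O}(\sup_{i\ge j}\rho_i)$, and your Ces\`aro/martingale argument closes. (Alternatively, iterate your expansion with a cutoff index $j_0$ chosen so that $C\sup_{j\ge j_0}\rho_j<1/2$ and let the depth grow with $t$.) With that modification, together with the routine martingale CLT for the independent but non-identically distributed array $\xi_{i,\theta^*}$ that you already flag, the proposal yields a complete proof of Theorem~\ref{thm:clt-eps-inf}.
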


\begin{proof}
We will follow the steps in the proof of Theorem~\ref{thm:smooth-clt} in Section~\ref{sec:app-clt} of the supplement. To simplify the notation, we denote $R(\theta)$ as $\nabla \L_{\infty, \theta}(\theta)$, {$\xi_t$ as $w_t(\theta_{t-1}; X, A)\nabla \ell(\theta_{t-1}; \zeta) - \nabla \L_{t, \theta_{t-1}}(\theta_{t-1})$}, $\xi_t(0)$ as $w_{\infty} (\theta^*; X, A^*)\nabla \ell(\theta^*; \zeta^*) - \nabla \L_{\infty, \theta^*}(\theta^*)$, and $\xi_t(\theta_{t-1})$ as $\xi_t - \xi_t(0)$.

Here we check the Assumption 3.3 in \cite{polyak1992acceleration}. Because $\varepsilon_t \to \varepsilon_{\infty}$, $\varepsilon_t$ is uniformly bounded away from $0$ and $1$ for sufficiently large $t$. So we still have the following inequality,
\begin{align*}
\E [\|\xi_t\|^2 \mid \mathcal{F}_{t-1}] + \|R(\theta_{t-1})\|^2 \leq K_2(1+\|\theta_{t-1}-\theta^*\|^2).
\end{align*}
The only thing that remains unproved is
\begin{align*}
\E [\|\xi_t(\theta_{t-1})\|^2 \mid \mathcal{F}_{t-1}] \leq \delta(\theta_{t-1}),
\end{align*}
with $\lim_{\theta\to \theta^*} \delta(\theta) = 0$.

Similarly, $\{\xi_t(0)\}$ are \emph{i.i.d.}, and $\xi_t(0)$ can be coupled with $\xi_t$ so that the distance between them can be measured in TV distance between $\pi_t$ and $\pi_{\infty}$.
\begin{align*}
&\quad \E[\|\xi_t(\theta_{t-1})\|^2 \mid \mathcal{F}_{t-1}] \\
&= \E[\|w_t(\theta_{t-1}; X, A)\nabla \ell(\theta_{t-1}; \zeta) - \nabla \L_{t, \theta_{t-1}}(\theta_{t-1})- w_{\infty} (\theta^*; X, A^*)\nabla \ell(\theta^*; \zeta^*) + \nabla \L_{\infty, \theta^*}(\theta^*)\|^2 \mid \theta_{t-1}]\\
&\leq C\E[\|w_t(\theta_{t-1}; X, A)\nabla \ell(\theta_{t-1}; \zeta) - w_{\infty} (\theta^*; X, A^*)\nabla \ell(\theta^*; \zeta^*)\|^2 \mid \theta_{t-1}]\\
&\quad + C\E[\| \nabla \L_{t, \theta_{t-1}}(\theta_{t-1})- \nabla \L_{\infty, \theta^*}(\theta^*)\|^2 \mid \theta_{t-1}]\\
&\leq C\E[\|w_t(\theta_{t-1}; X, A)\nabla \ell(\theta_{t-1}; \zeta) - w_{\infty} (\theta^*; X, A^*)\nabla \ell(\theta^*; \zeta^*)\|^2 \mid \theta_{t-1}]\\
&\quad {+ C\| \nabla \L_{t, \theta_{t-1}}(\theta_{t-1})-\nabla \L_{\infty, \theta_{t-1}}(\theta_{t-1})\|^2}\\
&\quad +C\| \nabla \L_{\infty, \theta_{t-1}}(\theta_{t-1})-\nabla \L_{\infty, \theta^*}(\theta^*)\|^2 
\end{align*}
The third term have been bounded by \eqref{eq:bound_M1}. {Since the upper bound for the first two terms can be obtained similarly, we only show the first one.} The first term can be decomposed as
\begin{align*}
&\quad \E[\|w_t(\theta_{t-1}; X, A)\nabla \ell(\theta_{t-1}; \zeta) - w_{\infty} (\theta^*; X, A^*)\nabla \ell(\theta^*; \zeta^*)\|^2 \mid \theta_{t-1}]\\
&\leq C \E\left[\delta_t(X_t, \theta_{t-1}) (1 + \|\theta_{t-1} - \theta^*\|^2)\phi(X_t)\right]  \\
&\quad+C \E\left[ \max_{A\in \A} \E [\|\nabla \ell(\theta_{t-1}; X, A, Y) - \nabla \ell(\theta^*; X, A, Y)\|^2\mid X, A] \right]  \\
&\quad + C\E\left[ \max_{A\in \A} |w_t(\theta_{t-1}; X_t, A) - w_{\infty}(\theta^*; X_t, A)|^2 (1 + \|\theta_{t-1} - \theta^*\|^2) \phi(X_t)\right],
\end{align*}
where $\delta_t(X, \theta) = d_{\rm TV}(\pi_t(X, \theta), \pi_\infty(X, \theta^*))$. This decomposition is similar to \eqref{eq:bound_M2b}. We now have
\begin{align*}
d_{\rm TV}(\pi_t(X, \theta), \pi_\infty(X, \theta^*)) \leq C d_{\rm TV}(\pi_\infty(X, \theta), \pi_\infty(X, \theta^*)) + C|\varepsilon_t - \varepsilon_\infty|
\end{align*}
Similarly, we have the following upper bound as well,
\begin{align*}
&\quad |w_t(\theta_{t-1}; X_t, A) - w_{\infty}(\theta^*; X_t, A)|\\
&\leq C |\Pr_{\pi_{t}}(A|X_t; \theta_{t-1}) - \Pr_{\pi_{\infty}}(A|X_t; \theta^*)|\\
&\leq C |\varepsilon_t - \varepsilon_\infty| + d_{\rm TV}(\pi_\infty(X, \theta), \pi_\infty(X, \theta^*)),
\end{align*}
{where $d_{\rm TV}(\pi_\infty(X, \theta), \pi_\infty(X, \theta^*))$ has been shown in \eqref{eq:eps-non-deg-TV} and \eqref{eq:eps-de-TV} for the non-degenerate and degenerate models, respectively.}
Combining these bounds above and Theorem~\ref{thm:smooth-clt}, it is sufficient to guarantee the validity of the central limit theorem result for both the degenerate and non-degenerate models.

Also notice that, from the above proof, it is easy to see that as long as $|\varepsilon_t - \varepsilon_\infty| = \mathcal{O}(t^{-\alpha/2})$,  we can still obtain the same result as Corollary \ref{corr:bahadur eps}.
\end{proof}

\section{Consistency of plug-in estimators} \label{sec:app-plugin}

\subsubsection*{Proof of Proposition~\ref{thm:plugin}}
\begin{proof}
Recall the definition of our estimators
\begin{align*}
\hat S_n=\frac1n\sum_{t=1}^nw_t^2 \nabla \ell(\theta_{t-1}; \zeta_t)\nabla \ell(\theta_{t-1}; \zeta_t)^\top,\quad \hat H_n=\frac1n\sum_{t=1}^nw_t\nabla^2 \ell(\theta_{t-1}; \zeta_t).
\end{align*}
In the proof of Theorem~\ref{thm:smooth-clt}, the bound on $M_2$ (Equation~\eqref{eq:bound_M2b}) implies the following convergence in $L^2$,
\begin{align*}
w(\theta_{t-1};X_t, A) \nabla \ell(\theta_{t-1}; \zeta_t) \to w(\theta^*;X_t, A^*) \nabla \ell(\theta^*; \zeta_t^*).
\end{align*}
Therefore we have the following convergence of $\hat S_n$ in $L^1$,
\begin{align*}
\hat S_n - \frac1n\sum_{t=1}^n w(\theta^*;X_t, A^*)^2 \nabla \ell(\theta^*; \zeta_t^*)\nabla \ell(\theta^*; \zeta_t^*)^\top \to 0
\end{align*}
Notice that by Law of Large Numbers,
\begin{align*}
\frac1n\sum_{t=1}^n w(\theta^*;X_t, A^*)^2 \nabla \ell(\theta^*; \zeta_t^*)\nabla \ell(\theta^*; \zeta_t^*)^\top \rightarrow S,
\end{align*}
in probability. Thus, combining our findings above, we can easily see that our plug-in estimator for gram matrix $\hat S_n \to S$ in probability.

Now we come to the consistency proof of $\hat H_n$. Notice that our Assumption~\ref{assum:plugin} is simply a repetition of Assumption~\ref{assum:gram} and Assumption~\ref{assum:tv} in Theorem~\ref{thm:smooth-clt} with $\phi$ replaced by $\psi$ and with gradient replaced by Hessian. So our proof of Theorem~\ref{thm:smooth-clt} from bound~\eqref{eq:bound_M2} to bound~\eqref{eq:bound_M2b} can be adapted here to prove the following convergence in $L^2$,
\begin{align*}
w(\theta_{t-1};X_t, A) \nabla^2 \ell(\theta_{t-1}; \zeta_t) \to w(\theta^*;X_t, A) \nabla^2 \ell(\theta^*; \zeta_t^*).
\end{align*}
{Similarly, we have $\hat H_n \to H$ in probability. By construction, $\|\hat{H}_n-\widetilde{H}_n\|\leq\|\hat{H}_n-H\|$, hence we also have $\hat H_n \to H$ in probability.}
\end{proof}

{\section{Uniform Asymptotic Normality}\label{appendix:uniform asymp}
We first introduce the definition for ``Uniform Convergence in Distribution''. See (\cite{van2013weak, zhang2022statistical}).
\begin{definition}\label{def:uniform}
Let $Z(\mathcal{P}) \in \mathbb{R}^{d}$ and $\{ Z_T(\mathcal{P}) \}_{T \ge 1} \subseteq \mathbb{R}^{d}$ 
be a sequence of random variables whose distributions are determined by some 
$\mathcal{P} \in \mathbf{P}$. 
We say that 
\[
Z_T(\mathcal{P}) \xrightarrow{d} Z(\mathcal{P})
\quad \text{uniformly over } \mathcal{P} \in \mathbf{P} \text{ as } T \to \infty
\]
if for any $\epsilon>0$, there exists $T_0(\epsilon)>0$ that does not depend on $\mathcal{P}$, such that for all $\mathcal{P} \in \mathbf{P}$,
\begin{equation}
\sup_{f \in BL_1}
\left| 
\mathbb{E}_{\mathcal{P}}\!\left[ f(Z_T(\mathcal{P})) \right] 
- 
\mathbb{E}_{\mathcal{P}}\!\left[ f(Z(\mathcal{\mathcal{P}})) \right]
\right|
\leq \epsilon,  \quad\text{for all } T>T_0(\epsilon),
\label{eq:uniform_conv_dist}
\end{equation}
where $BL_1$ denotes the set of functions 
$f : \mathbb{R}^{d_z} \to \mathbb{R}$ satisfying 
$\|f\|_\infty \le 1$ and 
$|f(z) - f(z')| \le \| z - z' \|$ for all $z,z' \in \mathbb{R}^{d_z}$.
\end{definition}

In our setting, $Z(\mathcal{P})$ denotes the Gaussian random variable $\Sigma^{1/2}W$, where
$W\sim \mathcal{N}(0,I_d)$ and $\Sigma=H^{-1}SH^{-1}$,
and $Z_T(\mathcal{P})$ represents the proposed estimator 
$\sqrt{T}\,(\bar\theta_T-\theta^*)$. Actually, the left side of \eqref{eq:uniform_conv_dist} can be upper bounded by $\E\|\sqrt{T}(\bar\theta_T-\theta^*)-\Sigma^{1/2}W\|$, which is exactly the approximation error of our Bahadur representation shown in Theorem \ref{thm:ls-fs}.

\subsection{Uniform Convergence for Modified $\varepsilon$-Greedy}\label{sec:uniform for eps}

The modified $\varepsilon$-greedy policy does not exhibit uniform convergence due to the behavior of the Hodges estimator, which depends on the indicator condition 
$\ID\!\left\{\|\theta_{0,t-1}-\theta_{1,t-1}\|\leq t^{-\frac{\alpha}{4}}\right\}$  to decide whether $\theta_0^*=\theta_1^*$. 
This condition ensures only \emph{pointwise} convergence, i.e., convergence for a specific contextual bandit environment $\P\in\mathbf{P}$, as $T$ becomes sufficiently large to determine whether $\theta_0^*=\theta_1^*$ holds. 
However, it does not guarantee that \eqref{eq:uniform_conv_dist} holds uniformly with a common threshold $T_0(\epsilon)$ such that the convergence error remains below any fixed $\epsilon>0$ across all $\P\in\mathbf{P}$. 

Figure~\ref{fig:heat-map-uniform} illustrates this failure of uniform convergence. 
In the non-degenerate model considered in Theorem~\ref{prop:ls-reg}, the confidence interval (CI) length remains constant, as shown in the left panel for the $\varepsilon$-greedy policy. 
In the right panel, the upper-right yellow region indicates cases where the model converges to the non-degenerate regime, while the remaining area represents the transition to the non-degenerate model. 
As $T$ increases, more rows (each corresponding to a specific $\P$) turn yellow, implying convergence toward the non-degenerate model. 
Nevertheless, there is no uniform time point $T$ along the $x$-axis such that all cubes become yellow simultaneously, demonstrating the lack of a uniform $T_0(\epsilon)$.

\begin{figure}[!t]
    \centering
    \includegraphics[width=0.7\textwidth]{./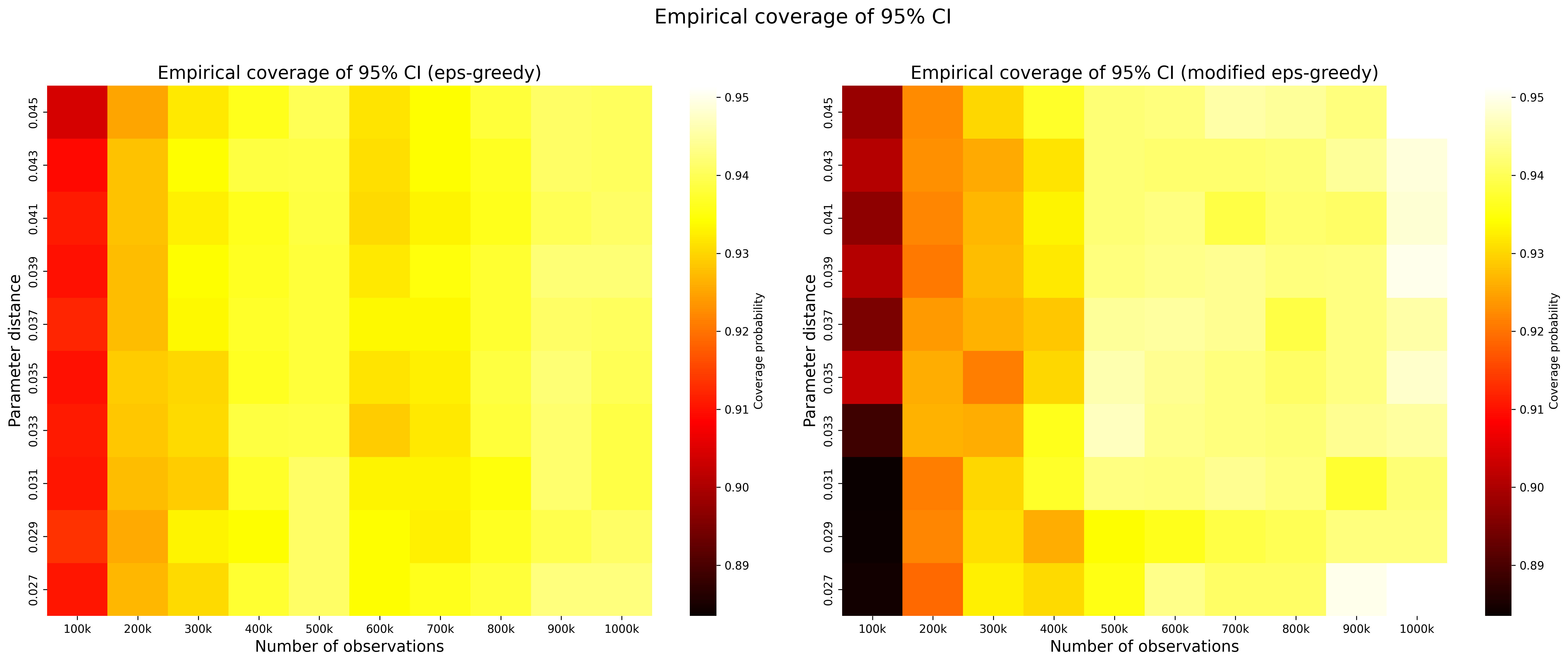}
    \includegraphics[width=0.7\textwidth]{./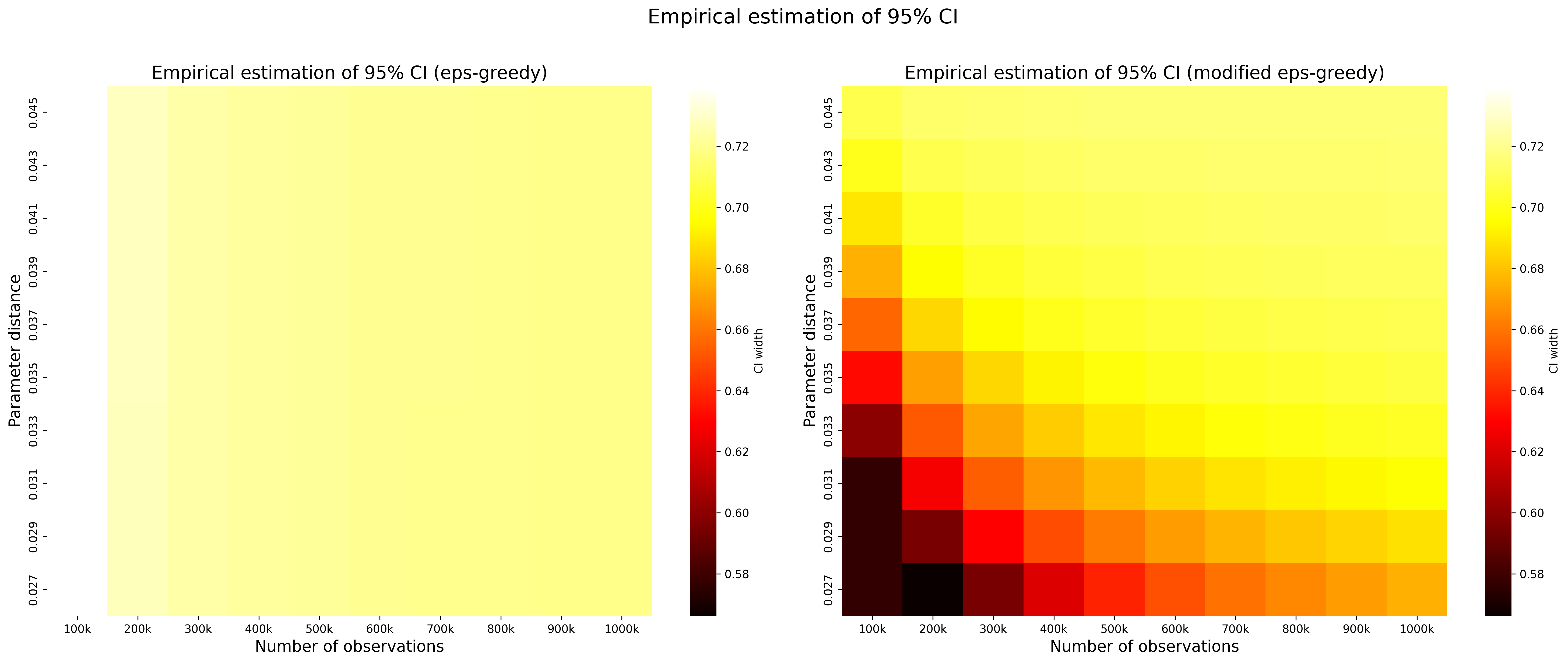}
    \caption{SGD on linear regression with \sipw \ in the non-degenerate model. We report the empirical coverage rate and its corresponding 95\% CI length.}
    \label{fig:heat-map-uniform}
\end{figure}

\subsection{Uniform Convergence for Exponential Policy}\label{sec:uniform for exp3}

The exponential policy admits uniform convergence because the Bahadur error 
$\E\|\sqrt{T}(\bar{\theta}_T - \theta^*) - \Sigma^{1/2}W\|$ 
can be controlled by 1) the convergence of $\theta_T$ to $\theta^*$, and 2) the discrepancy between the arm distributions induced by $\theta_T$ and $\theta^*$. 
Both terms can be bounded by a constant $C$ independent of $T$, multiplied by a decaying factor of order $T^{-\beta}$ for some $\beta > 0$. In particular, the constant $C$ in the latter term arises from the Lipschitz continuity of the softmax function. Specifically,
\begin{align*}
\sum_{a\in\A}\left|
\frac{e^{\lambda X^\top\theta_a}}{\sum_{a'\in\A} e^{\lambda X^\top\theta_{a'}}}
-
\frac{e^{\lambda X^\top\theta^*_a}}{\sum_{a'\in\A} e^{\lambda X^\top\theta^*_{a'}}}
\right|
\le C\|X\|\cdot |X^\top(\theta - \theta^*)|
\le C\|X\|^2 \|\theta - \theta^*\|,
\end{align*}
since
\[
\left\|\nabla_{\theta_a} 
\frac{e^{\lambda X^\top \theta_a}}{\sum_{a'\in\A} e^{\lambda X^\top \theta_{a'}}}
\right\|
= \lambda p_a(1-p_a)\|X\|
\le C\|X\|,
\]
where $p_a$ denotes the arm-selection probability for $a = 0, 1$.

}

\section{Auxiliary Lemmas}\label{appendix:Auxiliary Lemmas}
{We first introduce a special function class called Schwartz space 
${\mathcal {S}}$, which is the function space of all functions whose derivatives are rapidly decreasing. In detail, let  $\mathbb{N}$  be the set of non-negative integers, and for any  $n \in \mathbb{N}$ , let  $\mathbb{N}^{n}:=\underbrace{\mathbb{N} \times \cdots \times \mathbb{N}}_{n \text { times }}$  be the  $n$ -fold Cartesian product. The Schwartz space or space of rapidly decreasing functions on  $\mathbb{R}^{n}$  is the function space
\begin{align*}
    S\left(\mathbb{R}^{n}, \mathbb{C}\right):=\left\{f \in C^{\infty}\left(\mathbb{R}^{n}, \mathbb{C}\right) \mid \forall \alpha, \beta \in \mathbb{N}^{n},\|f\|_{\alpha, \beta}<\infty\right\},
\end{align*}
where  $C^{\infty}\left(\mathbb{R}^{n}, \mathbb{C}\right)$  is the function space of smooth functions from  $\mathbb{R}^{n}$  into  $\mathbb{C} $, and
$$\|f\|_{\alpha, \beta}:=\sup _{x \in \mathbb{R}^{n}}\left|x^{\alpha}\left(D^{\beta} f\right)(x)\right|,$$
where $x^\alpha:=x_1^{\alpha_1}\cdots x_n^{\alpha_n}$ and $D^\beta:=\partial_1^{\beta_1}\cdots\partial_n^{\beta_n}$.
}
\begin{lemma}\label{lem:hessian derivative}
For a integral function $p(x)$ defined over $\mathbb{R}^p$, define the function $J(\theta) = \int \ID(\theta^\top x > 0) p(x) \d x$. For $\theta \neq 0$, assume $\int_{\theta^\top x = 0} p(x) \|x\| \d x < \infty$, {$p(x)$ bounded and $\|\nabla p(x)\| \leq C\|x\|^a$ where $a < -p-1$ and $C>0$.} The gradient of $J(\theta)$ is given by
\begin{align*}
\nabla J(\theta) = \int_{\theta^\top x = 0} p(x) x \d x.
\end{align*}
\end{lemma}
\begin{proof}
Without loss of generality, we can assume $\theta = e_1 = (1, 0, \dots, 0)$. Define function $q(x_1, x_2, \dots, x_p) = p(0, x_2, \dots, x_p)$. Pick any $\Delta \theta$ such that $\Delta \theta^\top \e_1 = 0$. Then $\Delta \theta^\top x$ does not depend on $x_1$.
\begin{align*}
&\quad\int \ID((\theta+\Delta\theta)^\top x > 0) q(x) \d x - \int \ID(\theta^\top x > 0) q(x) \d x\\
&= \int (\int(\ID(x_1>-\Delta\theta^\top x)-\ID(x_1>0)) q(x) \d x_1) \d x_2 \dots \d x_p\\
&= \int (\int_{-\Delta\theta^\top x}^0 q(x) \d x_1) \d x_2 \dots \d x_p\\
&= \int \Delta\theta^\top x p(0, x_2, \dots, x_p) \d x_2 \dots \d x_p\\
&= \Delta\theta^\top  \int_{\theta^\top x = 0} p(x) x \d x.
\end{align*}

By the above equality, we can deduce that
\begin{align*}
&\quad |J(\theta + \Delta \theta) - J(\theta) - \Delta\theta^\top  \int_{\theta^\top x = 0} p(x) x \d x|\\
&= |\int \ID((\theta+\Delta\theta)^\top x > 0) p(x) \d x - \int \ID(\theta^\top x > 0) p(x) \d x - \Delta\theta^\top  \int_{\theta^\top x = 0} p(x) x \d x |\\
&= \int |\ID((\theta+\Delta\theta)^\top x > 0) - \ID(\theta^\top x > 0)) (p(x)-q(x))| \d x\\
&\leq \int \ID(|x_1| \leq |\Delta\theta^\top x|) |p(x)-q(x)| \d x\\
&\leq \int (\int_{x_1^2 \leq \|\Delta \theta\|^2(x_2^2+\dots + x_p^2)} |p(x)-q(x)| \d x_1) \d x_2 \dots \d x_p\\
&\leq C\int (\int_{x_1^2 \leq \|\Delta \theta\|^2(x_2^2+\dots + x_p^2)}|x_1|(x_2^2 + \cdots + x_p^2)^{a/2} \d x_1) \d x_2 \dots \d x_p\\
&\leq C\|\Delta \theta\|^2 \int (x_2^2 + \cdots + x_p^2)^{a/2+1} \d x_2 \dots \d x_p\\
&\leq C\|\Delta \theta\|^2.
\end{align*}

For a general $\Delta \theta$ that is not necessarily orthogonal to $\theta$, and $\|\Delta \theta\|<1/2$, notice that $J(\theta) = J(c\theta)$ for any $c>0$. So
\[
J(\theta + \Delta \theta) = J((\theta + \Delta \theta)/(1+\Delta \theta_1)).
\]
and $((\theta + \Delta \theta)/(1+\Delta \theta_1)-\theta)^\top \theta = 0$. It is easy to verify that $\|(\theta + \Delta \theta)/(1+\Delta \theta_1)-\theta\| \leq \|\Delta \theta\|$ and $\theta^\top \int_{\theta^\top x = 0} p(x) x \d x = 0$. Hence we can use previous results and get
\begin{align*}
&\quad |J(\theta + \Delta \theta) - J(\theta) - \Delta \theta \int_{\theta^\top x = 0} p(x) x \d x|\\
&\leq |((\theta + \Delta \theta)/(1+\Delta \theta_1)-\theta - \Delta \theta)^\top \int_{\theta^\top x = 0} p(x) x \d x| + C\|(\theta + \Delta \theta)/(1+\Delta \theta_1)-\theta\|^2\\
&\leq \|\Delta \theta \Delta \theta_1/(1+\Delta \theta_1)\|\cdot\|\int_{\theta^\top x = 0} p(x) x \d x\| + C \|\Delta \theta\|^2\\
&\leq C'\|\Delta \theta\|^2.
\end{align*}
\end{proof}

\begin{lemma}\label{lem:8th rate}
    Assume the almost sure convergence of $\theta_{t-1}$ to $\theta^*$ holds. We further assume the local strong convexity in Assumption \ref{assum:loss} holds, and for any action $A \in \A$ and covariate $X$, $\|\nabla \ell(\theta; \zeta)\|^8$ exists almost surely under $\P_{Y\mid X,A}$, and $\E \left(\|\nabla \ell(\theta; \zeta)\|^8 \mid X, A \right) \leq C_3(1+\|\theta - \theta^*\|^8 )$ where $C_3$ is some positive constant. Then, with step size $\eta_t=\eta_0 t^{-\alpha}$, $\alpha\in(1/2,1)$ and error $\delta_t=\theta_{t}-\theta^*$, we have:
    \begin{enumerate}[(a)]
        \item There exists a positive constant $t_0$, such that for $t>s\geq t_0$,
        \begin{align*}
            \E\left(\|\delta_t\|^8\mid \mathcal{F}_s\right)\leq \exp\left(-\frac{1}{2}\mu\sum_{i=s}^t\eta_i\right)\|\delta_s\|^8+Ct^{-4\alpha}.
        \end{align*}
        \item As a consequence, 
        \begin{align*}
            \E\|\delta_t\|^8\leq Ct^{-4\alpha}.
        \end{align*}
    \end{enumerate}
\end{lemma}
\begin{proof}
    For the first part, through the SGD iteration,
    \begin{align*}
        \delta_t=\delta_{t-1}-\eta_{t}\left(\nabla\L_{\theta_{t-1}}(\theta_{t-1})+\xi_{\theta_{t-1}}(\theta_{t-1};\zeta_{t})\right).
    \end{align*}
    For simplicity, let $\nabla\L_{t-1}:=\nabla\L_{\theta_{t-1}}(\theta_{t-1})$ and $\xi_{t-1}:=\xi_{\theta_{t-1}}(\theta_{t-1};\zeta_{t})$,
hence,
\begin{align*}
    \|\delta_t\|^2=\underbrace{\|\delta_{t-1}\|^2}_A+\underbrace{\eta_t^2\|\nabla\L_{t-1}+\xi_{t-1}\|^2}_B+\underbrace{2\eta_t\langle\delta_{t-1},-\nabla\L_{t-1}-\xi_{t-1}\rangle}_C.
\end{align*}
Then, 
\begin{align*}
    \|\delta_t\|^8
    &=(A+B+C)^4\\
    &=A^4+B^4+C^4+12(A^2BC+AB^2C+ABC^2)\\
    &\quad+6(A^2B^2+A^2C^2+B^2C^2)\\
    &\quad+4(A^3B+A^3C+AB^3+B^3C+AC^3+BC^3).
\end{align*}
Using H\"{o}lder's inequality and Young's inequality,
\begin{itemize}
    \item $12A^2BC+4AC^3$:
    \begin{align*}
        12A^2BC+4AC^3
        &\leq56\|\delta_{t-1}\|^5\|\nabla\L_{t-1}+\xi_{t-1}\|^3\eta_t^3\\
        &= 56^{\frac{5}{8}}\eta_{t}^{3-\frac{15}{8}}\mu^{\frac{5}{8}}\|\delta_{t-1}\|^5\cdot 56^{\frac{3}{8}}\|\nabla\L_{t-1}+\xi_{t-1}\|^3\eta_t^{\frac{15}{8}}\mu^{-\frac{5}{8}}\\
        &\leq 35\mu\|\delta_{t-1}\|^8\eta_t^{\frac{9}{5}}+21\mu^{-\frac{5}{3}}\|\nabla\L_{t-1}+\xi_{t-1}\|^8\eta_t^5.
    \end{align*}
    Similarly,
    \item $12AB^2C+4BC^3$:
    \begin{align*}
        12AB^2C+4BC^3\leq 56\|\delta_{t-1}\|^3\|\nabla\L_{t-1}+\xi_{t-1}\|^5\eta_t^5\leq 21\|\delta_{t-1}\|^8\eta_t^5+35\|\nabla\L_{t-1}+\xi_{t-1}\|^8\eta_t^5.
    \end{align*}
    \item $12ABC^2+6A^2B^2+C^4$:
    \begin{align*}
        12AB^2C+4BC^3\leq 70\|\delta_{t-1}\|^4\|\nabla\L_{t-1}+\xi_{t-1}\|^4\eta_t^4\leq 35\|\delta_{t-1}\|^8\eta_t^3+35\|\nabla\L_{t-1}+\xi_{t-1}\|^8\eta_t^5.
    \end{align*}
    \item $6A^2C^2+4A^3B$:
    \begin{align*}
        6A^2C^2+4A^3B\leq 28\|\delta_{t-1}\|^6\|\nabla\L_{t-1}+\xi_{t-1}\|^2\eta_t^2\leq 3\mu\|\delta_{t-1}\|^8\eta_t+\cdot7^4\mu^{-3}\|\nabla\L_{t-1}+\xi_{t-1}\|^8\eta_t^5.
    \end{align*}
    \item $6B^2C^2+4AB^3$:
    \begin{align*}
        6B^2C^2+4AB^3\leq 28\|\delta_{t-1}\|^2\|\nabla\L_{t-1}+\xi_{t-1}\|^6\eta_t^6\leq 7\|\delta_{t-1}\|^8\eta_t^9+21\|\nabla\L_{t-1}+\xi_{t-1}\|^8\eta_t^5.
    \end{align*}
    \item $4B^3C$:
    \begin{align*}
        4B^3C\leq 8\|\delta_{t-1}\|\|\nabla\L_{t-1}+\xi_{t-1}\|^7\eta_t^7\leq \|\delta_{t-1}\|^8\eta_t^{21}+7\|\nabla\L_{t-1}+\xi_{t-1}\|^8\eta_t^5.
    \end{align*}
    Additionally, we have
    \item $4A^3C$:
    \begin{align*}
        4\E_{t-1}A^3C=8\eta_t\|\delta_{t-1}\|^6\langle\delta_{t-1},\nabla\L_{t-1}+\xi_{t-1}\rangle\leq -8\mu\eta_t\|\delta_{t-1}\|^8,
    \end{align*}
    since $\E[\xi_{t-1}\mid \mathcal{F}_{t-1}]=0$ and $\langle\delta_{t-1},\nabla\L_{t-1}\rangle\geq \mu\|\delta_{t-1}\|^2$, when $t>t_0^\prime$ for some $t_0^\prime>0$. When $t\leq t_0^\prime$,
    \begin{align*}
        4\E_{t-1}A^3C
        \leq 8\eta_t\|\delta_{t-1}\|^7\|\nabla\L_{t-1}+\xi_{t-1}\|
        \leq 7\eta_t^{\frac{3}{7}}\|\delta_{t-1}\|^8+\eta_t^5\|\nabla\L_{t-1}+\xi_{t-1}\|^8.
    \end{align*}
    \item $A^4=\|\delta_{t-1}\|^8$.
    \item $B^4=\eta_t^8\|\nabla\L_{t-1}+\xi_{t-1}\|^8$.
\end{itemize}
Using the condition in this lemma, 
\begin{align*}
    \E\left(\|\nabla\L_{t-1}+\xi_{t-1}\|^8\mid \mathcal{F}_{t-1}\right)\leq C\left(1+\|\delta_{t-1}\|^8\right).
\end{align*}
Combining all the results, when $t>0$,
\begin{footnotesize}
    \begin{align}\label{eq:8th recursive original}
    &\quad\E\left(\|\delta_{t}\|^8\mid \mathcal{F}_{t-1}\right)\notag\\
    &\leq \left[1+\mu\left(3\eta_t+35\eta_t^{9/5}\right)+7\eta_t^{\frac{3}{7}}+35\eta_t^3+C\left(21\mu^{-5/3}+7^4\mu^{-3}+120\right)\eta_t^5+C\eta_t^8+7\eta_t^9+\eta_t^{21}\right]\|\delta_{t-1}\|^8\notag\\
    &\quad+\left(21\mu^{-5/3}+7^4\mu^{-3}+99\right)C\eta_t^5+C\eta_t^8.
\end{align}
\end{footnotesize}
Specifically,
when $t>t_0^\prime$,
\begin{footnotesize}
    \begin{align*}
    &\quad\E\left(\|\delta_{t}\|^8\mid \mathcal{F}_{t-1}\right)\\
    &\leq \left[1-\mu\left(5\eta_t-35\eta_t^{9/5}\right)+35\eta_t^3+C\left(21\mu^{-5/3}+7^4\mu^{-3}+119\right)\eta_t^5+C\eta_t^8+7\eta_t^9+\eta_t^{21}\right]\|\delta_{t-1}\|^8\\
    &\quad+\left(21\mu^{-5/3}+7^4\mu^{-3}+98\right)C\eta_t^5+C\eta_t^8.
\end{align*}
\end{footnotesize}
Let 
\begin{footnotesize}
    \begin{align*}
    t_0''=\min\left\{t:-\mu\left(5\eta_t-35\eta_t^{9/5}\right)+35\eta_t^3+C\left(21+21\mu^{-5/3}+7^4\mu^{-3}+119\right)\eta_t^5+C\eta_t^8+7\eta_t^9+\eta_t^{21}\leq-0.5\mu\eta_t\right\},
\end{align*}
and $t_0=\max\{t_0^\prime,t_0''\}$,
\end{footnotesize} 
then
for $t>t_0$,
\begin{align}\label{eq:8th recursion}
    \quad\E\left(\|\delta_{t}\|^8\mid\mathcal{F}_{t-1}\right)\leq\left(1-\frac{1}{2}\mu\eta_t\right)\|\delta_{t-1}\|^8+C\eta_t^5.
\end{align}
Using Lemma B.2 in \cite{chen2016statistical}, we can directly obtain
\begin{align*}
    \E\left(\|\delta_t\|^8\mid\mathcal{F}_s\right)\leq \exp\left(-\frac{1}{2}\mu\sum_{i=s}^t\eta_i\right)\|\delta_s\|^8+Ct^{-4\alpha},
\end{align*}
for $t>s\geq t_0$.

For the second part, by using discrete Gronwall's inequality to \eqref{eq:8th recursive original}, we have $\E\|\delta_{t_0}\|^8\leq C$. Therefore, according to claim (a), for $t\geq 2t_0$, we have
\begin{align*}
    &\E\left(\left\|\delta_{\frac{t}{2}}\right\|^8\mid\mathcal{F}_{t_0}\right)
    \leq \|\delta_{t_0}\|^8+Ct_0^{-4\alpha}, \\
    &\E\left(\|\delta_t\|^8\mid\mathcal{F}_{\frac{t}{2}}\right)
    \leq \exp\left(\frac{-\mu t\eta_t}{4}\right)\left\|\delta_{\frac{t}{2}}\right\|^8+C\left(\frac{t}{2}\right)^{-4\alpha}.
\end{align*}
In conclusion, we have
\begin{align*}
    \E\|\delta_{t}\|^8
    &\leq \exp\left(\frac{-\mu t\eta_t}{4}\right)\left(\E\|\delta_{t_0}\|^8+Ct_0^{-4\alpha}\right)+C\left(\frac{t}{2}\right)^{-4\alpha}\\
    &\leq C\exp\left(\frac{-\mu t^{1-\alpha}}{4}\right)+Ct^{-4\alpha}\\
    &\leq Ct^{-4\alpha}.
\end{align*}
\end{proof}

\begin{lemma}\label{lem:a.s.}
    Under the degenerate model with modified $\varepsilon$-greedy policy in equation (\ref{eq:modified eps-greedy}), if the condition in Lemma \ref{lem:8th rate} holds, then it leads to $\|\theta_{0,t-1}-\theta_{1,t-1}\|\leq t^{-\frac{\alpha}{4}}$ holds almost surely, which implies $\Tilde{\Pr}(A_t = 0 \mid X_t, \theta_{t-1})=1/2$ holds almost surely.
\end{lemma}
\begin{proof}
For the first part, because of Lemma \ref{lem:8th rate}, we have 
$\E\|\theta_{i,t-1}-\theta^*\|^8\leq Ct^{-4\alpha}$, $i=0,1$. Through Markov's inequality, 
\begin{align*}
    \Pr\left(\|\theta_{i,t-1}-\theta^*\|> t^{-\frac{\alpha}{4}}\right)
    =\Pr\left(\|\theta_{i,t-1}-\theta^*\|^8> t^{-2\alpha}\right)
    \leq \frac{Ct^{-4\alpha}}{t^{-2\alpha}}=Ct^{-2\alpha},\quad i=0,1.
\end{align*}
Then, 
\begin{align*}
    \Pr\left(\|\theta_{0,t-1}-\theta_{1,t-1}\|> t^{-\frac{\alpha}{4}}\right)
    &\leq \Pr\left(\|\theta_{0,t-1}-\theta_0^*\|> \frac{1}{2}t^{-\frac{\alpha}{4}}\right)
    +\Pr\left(\|\theta_{1,t-1}-\theta_1^*\|> \frac{1}{2}t^{-\frac{\alpha}{4}}\right)\\
    &\leq Ct^{-2\alpha}.
\end{align*}
Now define event $B_t=\left\{\|\theta_{0,t-1}-\theta_{1,t-1}\|> t^{-\frac{\alpha}{4}}\right\}$, then $\sum_{t=1}^\infty \Pr(B_t)< \infty$. Using Borel-Cantelli's lemma, we have $\Pr\left(\limsup_{t\rightarrow\infty}B_t\right)=0$, which is equivalent to $\Pr\left(\liminf_{\substack{t\rightarrow\infty}}B_t^C\right)=1$, meaning $B_t^C$ holds almost surely. That is, $\|\theta_{0,t-1}-\theta_{1,t-1}\|\leq t^{-\frac{\alpha}{4}}$ holds almost surely.

For the second part, from the definition of modified $\varepsilon$-greedy in \eqref{eq:modified eps-greedy}, once $\|\theta_{0,t-1}-\theta_{1,t-1}\|\leq t^{-\frac{\alpha}{4}}$ holds, $\Tilde{\Pr}(A_t = 0 \mid X_t, \theta_{t-1})=1/2$ can always be triggered.

\end{proof}

\begin{lemma}\label{lem:non-degenerate-a.s.}
    Under the non-degenerate model with modified $\varepsilon$-greedy policy in equation (\ref{eq:modified eps-greedy}), if the condition in Lemma \ref{lem:8th rate} holds, we have 
    \begin{align*}
        \Tilde{\Pr}(A_t = a \mid X_t, \theta_{t-1})=\Pr(A_t = a \mid X_t, \theta_{t-1}), \ a=0,1,\text{ holds almost surely,}
    \end{align*}
    where $\Pr(A_t = a \mid X_t, \theta_{t-1})(1-\varepsilon)\ID \left\{\argmax_{a=0,1}X_t^\top \theta_{a,t-1}\right\} + \frac{\varepsilon}{2}$.
\end{lemma}
\begin{proof}
    The almost sure convergence of $\theta_{t-1}$ to $\theta^*$ is equivalent to $\lim_{t\rightarrow\infty}\|\theta_{t-1}-\theta^*\|=0$, a.s. Hence,
\begin{align*}
    \lim_{t\rightarrow\infty}\Big|\|\theta_{0,t-1}-\theta_{1,t-1}\|-\|\theta_0^*-\theta_1^*\|\Big|\leq
    \lim_{t\rightarrow\infty}\|\theta_{0,t-1}-\theta_{1,t-1}-(\theta_0^*-\theta_1^*)\|=0,\ a.s.
\end{align*}
Therefore, for any $\epsilon>0$, then there exists $T_0(\epsilon)>0$ and event 
\begin{align*}
    \Gamma^\prime=\cap_{t=T_0(\epsilon)+1}^\infty\left\{\|\theta_0^*-\theta_1^*\|-\epsilon\leq\|\theta_{0,t-1}-\theta_{1,t-1}\|\leq \|\theta_0^*-\theta_1^*\|+\epsilon\right\},
\end{align*}
such that $\Pr(\Gamma^\prime)=1$. 
Let $\epsilon=\|\theta_0^*-\theta_1^*\|/2$, then we have 
$\|\theta_{0,t-1}-\theta_{1,t-1}\|\in [\|\theta_0^*-\theta_1^*\|/2,3\|\theta_0^*-\theta_1^*\|/2]$ holds for large enough $t$, implying that $\|\theta_{0,t-1}-\theta_{1,t-1}\|>t^{-\alpha/4}$ holds for large enough $t$. Therefore we come to the conclusion.

\end{proof}

\end{document}